\newtheorem{assumption}{Assumption}[section]
\newtheorem{theorem}{Theorem}[section]
\newtheorem{lemma}[theorem]{Lemma}
\newcommand{\Ac}{\mathcal{A}}
\newcommand{\Ec}{\mathcal{E}}
\newcommand{\Fc}{\mathcal{F}}
\newcommand{\Gc}{\mathcal{G}}
\newcommand{\Nc}{\mathcal{N}}
\newcommand{\Oc}{\mathcal{O}}
\newcommand{\Sc}{\mathcal{S}}
\newcommand{\Pb}{\mathbb{P}}
\newcommand{\Eb}{\mathbb{E}}
\newcommand{\Rb}{\mathbb{R}}
\newcommand{\argmax}{\arg\max}
\title{Learning Policies with Zero or Bounded Constraint Violation for Constrained MDPs}
\author{
  Tao Liu\thanks{The first two authors contributed equally.} \\
  Texas A\&M University\\
  \texttt{tliu@tamu.edu} \\
  \And
  Ruida Zhou$^*$ \\
  Texas A\&M University \\
  \texttt{ruida@tamu.edu} \\
  \And
  Dileep Kalathil \\
  Texas A\&M University \\
  \texttt{dileep.kalathil@tamu.edu} \\
  \And
  P. R. Kumar \\
  Texas A\&M University \\
  \texttt{prk@tamu.edu} \\
  \And
  Chao Tian \\
  Texas A\&M University \\
  \texttt{chao.tian@tamu.edu} \\
}
\begin{document}

\maketitle

\begin{abstract}
We address the issue of safety in reinforcement learning. We pose the problem in an episodic framework of a constrained Markov decision process. Existing results have shown that it is possible to achieve a reward regret of $\tilde{\mathcal{O}}(\sqrt{K})$ while allowing an $\tilde{\mathcal{O}}(\sqrt{K})$ constraint violation in $K$ episodes. A critical question that arises is whether it is possible to keep the constraint violation even smaller. We show that when a strictly safe policy is known, then one can confine the system to zero constraint violation with arbitrarily high probability while keeping the reward regret of order $\tilde{\mathcal{O}}(\sqrt{K})$. The algorithm which does so employs the principle of optimistic pessimism in the face of uncertainty to achieve safe exploration. When no strictly safe policy is known, though one is known to exist, then it is possible to restrict the system to bounded constraint violation with arbitrarily high probability. This is shown to be realized by a primal-dual algorithm with an optimistic primal estimate and a pessimistic dual update.
\end{abstract}

\section{Introduction}
Reinforcement learning (RL) addresses the problem of learning an optimal control policy that maximizes the expected cumulative reward while interacting with an unknown environment  \cite{sutton2018reinforcement}. Standard RL algorithms typically focus only on maximizing a single objective. However, in many real-world applications, the control policy learned by an RL algorithm has to additionally satisfy stringent safety constraints \cite{dulac2019challenges, amodei2016concrete}. For example, an autonomous vehicle may need to reach its destination in the minimum possible time without violating safety constraints such as crossing the middle of the road. The Constrained Markov Decision Process (CMDP) \cite{altman1999constrained, garcia2015comprehensive} formalism, where one seeks to maximize a reward while satisfying safety constraints, is a standard approach for modeling the necessary safety criteria of a control problem via constraints on cumulative costs. 

Several policy-gradient-based algorithms have been proposed to solve CMDPs. Lagrangian-based methods \cite{tessler2018reward,stooke2020responsive, paternain2019safe,liang2018accelerated} formulate the CMDP problem as a saddle-point problem and optimize it via primal-dual methods, while Constrained Policy Optimization \cite{achiam2017constrained, yang2019projection} (inspired by the trust region policy optimization \cite{schulman2015trust}) computes new dual variables from scratch at each update to maintain constraints during learning. Although these algorithms provide ways to learn an optimal policy, performance guarantees about reward regret, safety violation or sample complexity are rare. 

One class of RL algorithms for which performance guarantees are available follow the principle of Optimism in the Face of Uncertainty (OFU) \cite{ding2021provably,efroni2020exploration,qiu2020upper}, and provide an $\tilde{\Oc}(\sqrt{K})$ guarantee for the reward regret, where $K$ is the number of episodes. However, these algorithms also have $\tilde{\Oc}(\sqrt{K})$ safety violations. Such significant violation of the safety constraints during learning may be unacceptable in many safety-critical real-world applications such as the control of autonomous vehicles or power systems. These applications demand a class of safe RL algorithms that can provably guarantee safety during learning. With this goal in mind,  we aim to answer the following open theoretical question in this paper:

\textbf{Can we design safe RL algorithms that can achieve an $\tilde{\Oc}(\sqrt{K})$ regret with respect to the performance objective, while guaranteeing zero or bounded safety constraint violation with arbitrarily high probability?}

\begin{table}[t]
  \caption{Regret and constraint violation comparisons for algorithms on episodic CMDPs} 
  \label{overall}
  \centering 
  \begin{tabular}{lll} 
    \toprule
    Algorithm & Regret\tablefootnote{This table is presented for $K \ge \text{poly}(|\Sc|, |\Ac|, H)$, with polynomial terms independent of $K$ omitted.\label{omitpoly}} & Constraint violation \footref{omitpoly}\\
    \midrule
    OPDOP \cite{ding2021provably} & $\tilde{\Oc}(H^3 \sqrt{|\Sc|^2 |\Ac| K})$  & $\tilde{\Oc}(H^3 \sqrt{|\Sc|^2 |\Ac| K})$ \\
    OptCMDP \cite{efroni2020exploration} \tablefootnote{Efroni et al. \cite{efroni2020exploration} use $\Nc$, the maximum number of non-zero transition probabilities across the entire state-action space, in their regret and constraint violation analysis. For consistency, we use $|\Sc|^2 |\Ac|$ to bound $\Nc$. \label{UBN}} & $\tilde{\Oc}(H^2 \sqrt{|\Sc|^3 |\Ac| K})$ & $\tilde{\Oc}(H^2 \sqrt{|\Sc|^3 |\Ac| K})$ \\
    OptCMDP-bonus \cite{efroni2020exploration} \footref{UBN} & $\tilde{\Oc}(H^2 \sqrt{|\Sc|^3 |\Ac| K})$ & $\tilde{\Oc}(H^2 \sqrt{|\Sc|^3 |\Ac| K})$ \\
    OptDual-CMDP \cite{efroni2020exploration} \footref{UBN} & $\tilde{\Oc}(H^2 \sqrt{|\Sc|^3 |\Ac| K})$ & $\tilde{\Oc}(H^2 \sqrt{|\Sc|^3 |\Ac| K})$ \\
    OptPrimalDual-CMDP \cite{efroni2020exploration} \footref{UBN} & $\tilde{\Oc}(H^2 \sqrt{|\Sc|^3 |\Ac| K})$ & $\tilde{\Oc}(H^2 \sqrt{|\Sc|^3 |\Ac| K})$ \\
    C-UCRL \cite{zheng2020constrained} \tablefootnote{Zheng et al. \cite{zheng2020constrained} assumes a known transition kernel and analyzes regret in the long-term average setting.} & $\tilde{\Oc}(T^{\frac{3}{4}})$ & $0$ \\
    \midrule
    \textbf{OptPess-LP} & $\tilde{\Oc}(\frac{H^3}{\tau - c^0} \sqrt{|\Sc|^3 |\Ac| K})$ & $0$ \\
    \textbf{OptPess-PrimalDual} & $\tilde{\Oc}(\frac{H^3}{\tau - c^0} \sqrt{|\Sc|^3 |\Ac| K})$ & $\Oc(1)$ \tablefootnote{The detailed constraint violation is $\Oc\left(C''H + H^2 \sqrt{|\Sc|^3 |\Ac| C'' \log(C''/\delta')} \right)$, which is independent of $K$. Here, $\delta' = \delta / (16 |\Sc|^2 |\Ac| H)$ \mbox{ and } $C'' = \Oc (\frac{H^4 |\Sc|^3 |\Ac|}{(\tau - c^0)^2} \log\frac{H^4 |\Sc|^3 |\Ac|}{(\tau - c^0)^2\delta'})$.}\\
    \bottomrule
  \end{tabular}
\end{table}    

We answer the above question affirmatively by proposing two algorithms and establishing their stringent safety performance during learning. Our focus is on the tabular episodic constrained RL setting (unknown transition probabilities, rewards, and costs). The key idea behind both algorithms is a concept used earlier for safe exploration in constrained bandits \cite{pacchiano2021stochastic,liu2021efficient}, which we call ``Optimistic Pessimism in the Face of Uncertainty (OPFU)'' here. The optimistic aspect incentivizes the algorithm for using exploration policies that can visit new state-action pairs, while the pessimistic aspect disincentivizes the algorithm from using exploration policies that can violate safety constraints. By carefully balancing optimism and pessimism, the proposed algorithms guarantee zero or bounded safety constraint violation during learning while achieving an $\tilde{\Oc}(\sqrt{K})$ regret with respect to the reward objective.

The two algorithms address two different classes of the safe learning problem: whether a strictly safe policy is known a priori or not. The resulting exploration strategies are very different in the two cases.

\begin{enumerate}
    \item \textbf{OptPess-LP Algorithm:} This algorithm assumes the prior knowledge of a strictly safe policy. It ensures zero safety constraint violation during learning with high probability and utilizes the linear programming (LP) approach for solving a CMDP problem. The algorithm achieves a reward regret of $\tilde{\Oc}(\frac{H^3}{\tau - c^0} \sqrt{|\Sc|^3 |\Ac| K})$ with respect to the performance objective, where $H$ is the number of steps per episode, $\tau$ is the given constraint on safety violation, $c^0$ is the known safety constraint value of a strictly safe policy $\pi^0$, and $|\Sc|$ and $|\Ac|$ are the number of states and actions respectively.
    \item \textbf{OptPess-PrimalDual Algorithm:} This algorithm addresses the case where no strictly safe policy, but a feasible strictly safe cost is known. By allowing a bounded (in $K$) safety cost, it opens up space for exploration. The OptPess-PrimalDual algorithm avoids linear programming and its attendant complexity and exploits the primal-dual approach for solving a CMDP problem. The proposed approach improves the computational tractability, while ensuring a bounded safety constraint violation during learning and a reward regret of $\tilde{\Oc}(\frac{H^3}{\tau - c^0} \sqrt{|\Sc|^3 |\Ac| K})$ with respect to the objective.
\end{enumerate}

Compared with the other methods listed in Table \ref{overall}, though the proposed algorithms have an additional $H/(\tau - c^0)$ or $\sqrt{|\Sc|}/(\tau-c^0)$ factor in the regret bounds, they are able to reduce the constraint violation to zero or constant with high probability. This improvement in safety can be extremely important for many mission-critical applications.

\subsection{Related work}
The problem of learning an optimal control policy that satisfies safety constraints has been studied both in the RL setting and the multi-armed bandits setting. 

\paragraph{Constrained RL:}
Several policy-gradient algorithms have seen success in practice \cite{tessler2018reward,stooke2020responsive,paternain2019safe,liang2018accelerated,achiam2017constrained,yang2019projection}. Also of interest are works which utilize Gaussian processes to model the transition probabilities and value functions \cite{berkenkamp2017safe,wachi2018safe,koller2018learning,cheng2019end}.  

Several algorithms with provable guarantees are closely related to our work. Zheng et al. \cite{zheng2020constrained} considered the constrained RL problem in an infinite horizon setting with unknown reward and cost functions. The approach is similar to the UCRL2 algorithm \cite{jaksch2010near} and achieves a sub-linear reward regret $\tilde{\Oc}(T^{\frac{3}{4}})$ while satisfying constraints with high probability during learning. In contrast to this work, we consider the setting of unknown transition probabilities. Efroni et al \cite{efroni2020exploration} focused on the episodic setting of unknown non-stationary transitions over a finite horizon, attaining both a reward regret and a constraint violation of $\tilde{\Oc}(H^2 \sqrt{|\Sc|^3 |\Ac| K})$. Ding et al. \cite{ding2021provably} studied an episodic setting with linear function approximation (suitable for large state space cases), and proposed algorithms  that can achieve $\tilde{\Oc}(d H^3 \sqrt{K})$ for both the regret and the constraint violation (where $d$ is the dimension of the feature mapping). The regret analysis in \cite{ding2021provably} can be easily extended to the tabular case, yielding $\tilde{\Oc}(H^3 \sqrt{|\Sc|^2 |\Ac| K})$.

\paragraph{Constrained Multi-Armed Bandits:}
Multi-armed bandit problems are special cases of MDPs, with both the number of states as well as the episode length being one. Linear bandits with constraints (satisfied with high probability) have been investigated in different settings. One setting, referred to as conservative bandits \cite{wu2016conservative, kazerouni2017conservative, garcelon2020improved}, requires the cumulative reward to remain above a fixed percentage of the cumulative reward of a given baseline policy. Another setting is where each arm is associated with two unknown distributions (similar to our setting), generating reward and cost signals respectively \cite{amani2019linear,pacchiano2021stochastic,liu2021efficient,moradipari2019safe}.

\section{Problem formulation}
\label{sec:prob_formula}
A finite-horizon constrained non-stationary MDP model is defined as a tuple $M = (\Sc, \Ac, H, P, r, c, \tau, \mu)$, where $\Sc$ is the state space, $\Ac$ is the action space, $H$ is the number of steps in each episode, $r: \Sc \times \Ac \rightarrow [0, 1]$ is the unknown reward function of interest, $c: \Sc \times \Ac \rightarrow [0, 1]$ is the unknown safety cost function used to model the constraint violation, $\tau \in (0, H]$ is the given constant used to define the safety constraint, and $\mu$ is the known initial distribution of the state. $P_{\cdot} (\cdot | s, a) \in \Delta_{\Sc}^H, \forall s \in \Sc, \forall a \in \Ac$, where $\Delta_{\Sc}$ is the $|\Sc|$-dimensional probability simplex, and $P_{h} (s' | s, a)$ is the unknown transition probability that the next state is $s'$ when action $a$ is taken for state $s$ at step $h$. 

Some further notation is necessary to define the problem. A (randomized Markov) policy is defined by a map $\pi: \Sc \times [H] \rightarrow \Delta_{\Ac}$, with $\pi_h(a | s)$ being the probability of taking action $a$ in state $s$ at time step $h$. With $S_t$ and $A_t$ representing the state and the action at time $t$ respectively, let   
\begin{align*}
V^{\pi}_{h}(s; g, P) := \Eb_{P, \pi}\left[ \sum_{t = h}^{H} g(S_t, A_t) | S_h = s\right], \quad \forall s \in \Sc
\end{align*}
denote the expected cumulative value with respect to a function $g: \Sc \times \Ac \rightarrow \Rb_+$ under $P$ for a policy $\pi$ over a time interval $[h,h+1,\ldots,H]$. With slightly abuse of notation, we use $V^{\pi}_1(\mu;g, P)$ to denote $\Eb_{S_1 \sim \mu}[V^{\pi}_1(S_1; g, P)]$.

In the formulation below there are a total of $K$ episodes with $H$ steps each.
Each episode $k \in [K]$ begins with an initial probability distribution $\mu$ for $\Sc_1$.
Then, the agent determines a randomized Markov policy $\pi^k$ for that episode based on the information gathered from the previous episodes, and executes it. At time step $h$ during the execution of the $k$-th episode, after taking action $A_h^k$ at state $S_h^k$, the agent receives
a noisy reward and cost of $R_h^k(S_h^k, A_h^k) = r_h(S_h^k, A_h^k) + \xi^k_{h}(S_h^k, A_h^k; r)$ and $C_h^k(S_h^k, A_h^k) = c_{h}(S_h^k, A_h^k) + \xi^k_h(S_h^k, A_h^k; c)$, respectively.
\begin{assumption}[Sub-Gaussian noise]
    \label{assump:noise}
    For all $h \in [H], k \in [K]$, the reward and cost noise random variables are conditionally independent zero-mean 1/2-sub-Gaussian, i.e., $\Eb[\xi_h^k|\Fc_{k-1}] = 0$, $\Eb[\exp(\lambda \xi^k_h) | \Fc_{k-1}] \le \exp(\lambda^2/4)$, $\forall \lambda \in \Rb$. Here $\Fc_{k}$ is the $\sigma$-algebra generated by the random variables up to episode $k$.
\end{assumption}

Let $\pi^*$ denote the optimal policy of the following CMDP model: 
\begin{align}
\max_{\pi} & \quad V_{1}^{\pi}(\mu; r, P)  \quad\quad \text{s.t.} \quad V_{1}^{\pi}(\mu; c, P) \leq \tau.
\label{eqn:cmdp}
\end{align}

A policy $\pi$ is said to be \emph{strictly safe} if $V_1^{\pi}(\mu;c, P) < \tau$.

\begin{assumption}
    There exists a strictly safe policy $\pi^0$ with $V^{\pi^0}_{1}(\mu; c, P) = c^0 < \tau$.
    \label{assumption_safe}
\end{assumption}

There are two important cases of the safe learning problem.

\noindent\textbf{Zero constraint violation case}: 
The agent has prior knowledge of a strictly safe policy $\pi^0$ and its safety cost value $c^0 := V_1^{\pi^0}(\mu; c, P)$. The agent wishes to attain a sublinear (in $K$) \emph{cumulative regret},
\begin{align}
    Reg(K; r) := \sum_{k = 1}^K \left( V_{1}^{\pi^*}(\mu; r, P) - V^{\pi^k}_{1}(\mu; r, P) \right),
    \label{eqn::regret}
\end{align}
while incurring zero constraint violation with at least a specified high probability $(1-\delta)$, i.e.,
\begin{align*}
    \Pb\left( V^{\pi^k}_1(\mu;c , P) \leq \tau, \forall k \in [K] \right) \geq 1- \delta.
\end{align*}

\noindent\textbf{Bounded constraint violation case}:
The agent knows that there exists a strictly safe policy with a known safety cost value $c^0$, but does not know any strictly safe policy. It aims to achieve a cumulative regret (\ref{eqn::regret}) that grows sublinearly with $K$, while ensuring that the \emph{regret of constraint violation},
\begin{align*}
        Reg(K; c) := \left( \sum_{k = 1}^{K} \left( V^{\pi^k}_{1}(\mu; c, P) -  \tau \right) \right)_{+} \mbox{  (where $(a)_{+} := \max\{a, 0\}$) },
\end{align*}
satisfies $\sup_{K} Reg(K;c) < +\infty$ with at least a specified high probability $(1 - \delta)$. 

\noindent\textbf{Remarks.} 
For the zero constraint violation case, the assumption of knowing $c^0$ can be relaxed, as shown in Appendix E.

\section{Zero constraint violation case} 
\label{sec:zero}
We start by considering the zero constraint violation case. On one hand, to balance the exploration-exploitation trade-off, we employ an optimistic estimate of the reward function, as embodied in the OFU principle. On the other hand, to maintain absolute safety with high probability during the exploration, we employ a pessimistic estimate of the safety cost. Such an OPFU principle was previously discussed in constrained bandits \cite{amani2019linear} and we adapt it to the unknown CMDP setting.

At each episode $k$, we begin by forming empirical estimates of the transition probabilities, the reward function, and the cost function from step $h$ of all previous episodes:
\begin{align}
    \hat{P}_h^k(s'|s, a) &:= \frac{\sum_{k'=1}^{k-1} \mathbbm{1}(S_h^{k'}=s, A_h^{k'}=a, S_{h+1}^{k'}=s')}{N^k_h(s, a) \vee 1}, \label{eqn:hatP}\\
    \hat{g}_h^k(s, a) &:= \frac{\sum_{k'=1}^{k-1} \mathbbm{1}(S_h^{k'}=s, A_h^{k'}=a) (g_h(s, a) + \xi_h^k(s, a; g))}{N^k_h(s, a) \vee 1}, \quad \mbox{ for } g = r, c,
    \label{estimate2}
\end{align}
where $a \vee b := \max\{a, b\}$, and
\begin{align}
    N^k_h(s, a) := \mbox{\# of  visits to state-action pair $(s, a)$ at step $h$ in episodes $[1,2,\ldots, k-1]$.}
    \label{eqn:bigN}
\end{align}

Next we fix some $\delta \in (0, 1)$ and form a common (for notational simplicity) confidence radius $\beta^k_h(s, a)$ for the transition probabilities, the rewards, and the costs, 
\begin{align*}
    \beta^k_h(s, a) &:= \sqrt{\frac{1}{N^k_h(s, a) \vee 1} Z}, \quad \text{where $Z := \log(16|\Sc|^2 |\Ac|HK/\delta)$.}
\end{align*}

At each episode $k$, we define the \emph{optimistically biased reward estimate} as
\begin{align}
    \bar{r}^k_h(s, a) &:= \hat{r}^k_h(s, a) + \alpha_r \beta^k_h(s, a), \quad \forall (s,a, h) \in \Sc \times \Ac \times [H], \label{eqn:tilde_r1}
\end{align}
where the scaling factor is
\begin{align}
  &\alpha_r := 1 + |\Sc| H + \frac{4 H(1+|\Sc|H)}{\tau - c^0}. \label{alphar}
\end{align}

To guarantee safe exploration, define the \emph{pessimistically biased safety cost estimate} at episode $k$ as
\begin{align}
    \underbar{c}{}_h^k(s, a) & := \hat{c}_h^k(s,a) + (1 + H|\Sc|) \beta^k_h(s, a), \quad \forall (s, a, h) \in \Sc \times \Ac \times [H].
\end{align}

The policy we execute at episode $k$ is chosen from a ``pessimistically safe" policy set $\Pi^k$, defined as
\begin{equation}
  \Pi^k := 
    \begin{cases}
      \{\pi^0\} & \text{if } V_1^{\pi^0}(\mu; \underbar{c}^k, \hat{P}^k) \geq (\tau + c^0)/2,\\
      \{\pi: V_1^{\pi}(\mu; \underbar{c}^k, \hat{P}^k) \leq \tau\} & \text{otherwise.} 
    \end{cases}
    \label{eqn:PiPC}
\end{equation}

We simply use the strictly safe policy $\pi^0$ until $V_1^{\pi^0}(\mu; \underbar{c}^k, \hat{P}^k) < (\tau + c^0)/2$, which is a sufficient condition to guarantee that the set $\{\pi: V_1^{\pi}(\mu; \underbar{c}^k, \hat{P}^k) \leq \tau\}$ is non-empty. Within this set, we choose the optimistically best reward earning policy $\pi^k$, which can be solved by linear programming with $\Theta(|\Sc||\Ac|H)$ decision variables and constraints \cite{efroni2020exploration}. The resulting Optimistic Pessimism-based Linear Programming (OptPess-LP) algorithm is presented below:
\begin{algorithm}
    \label{OptPess}
    \SetAlgoLined
    \KwInput{$K, \delta, \pi^0, c^0, \tau$;}
    \noindent\textbf{Initialization:} $N_h^1(s, a) = 0, \ \forall (s, a, h) \in \Sc \times \Ac \times [H]$;\\
    \For{$k = 1,2, \ldots, K$}{
        Update empirical model (i.e., $\hat{P}^k, \hat{r}^k, \hat{c}^k$) as in Equations (\ref{eqn:hatP})-(\ref{eqn:bigN});\\
        Update $\bar{r}^k, \underbar{c}^k$, and $\Pi^k$ as in Equations (\ref{eqn:tilde_r1})-(\ref{eqn:PiPC});\\
        Calculate $\pi^k \in \argmax_{\pi \in \Pi^k} V^{\pi}_1(\mu; \bar{r}^k, \hat{P}^k)$;\\
        Execute $\pi^k$ and collect a trajectory $(S_h^k, A_h^k, R_h^k, C_h^k), \ \forall h \in [H]$;\\
        Update counters $N_h^{k+1}(S_h^k, A_h^k), \ \forall h \in [H]$; 
    }
    \caption{\textbf{OptPess-LP}}
\end{algorithm}
 
\begin{theorem}[Regret and constraint violation bounds for OptPess-LP]
\label{thm:safe-PC}
    Fix any $\delta \in (0, 1)$. With probability at least ($1 - \delta$), OptPess-LP has zero constraint violation with
    \begin{align*}
        Reg^{\textbf{OPLP}}(K; r) = \tilde{\Oc}\left(\frac{H^3}{\tau - c^0} \sqrt{|\Sc|^3 |\Ac| K} +  \frac{H^5 |\Sc|^3 |\Ac|}{(\tau - c^0)^2 \land (\tau - c^0)}  \right), \mbox{ where $a \land b := \min\{a, b\}$.}
    \end{align*}
    \label{thm1} 
\end{theorem}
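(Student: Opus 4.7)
\textbf{Proof plan for Theorem \ref{thm1}.}

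\textbf{Step 1 (Good event).} I would first isolate a ``good event'' $\mathcal{E}$ on which, simultaneously for all $(s,a,h,k)$, the empirical estimates lie within their confidence radii: $|\hat r_h^k(s,a) - r_h(s,a)| \le \beta_h^k(s,a)$, $|\hat c_h^k(s,a) - c_h(s,a)| \le \beta_h^k(s,a)$, and $\|\hat P_h^k(\cdot|s,a) - P_h(\cdot|s,a)\|_1 \le |\Sc|\,\beta_h^k(s,a)$. The first two follow from Hoeffding for sub-Gaussian noise (Assumption \ref{assump:noise}) and the third from a standard $L^1$ deviation bound for empirical multinomials. A union bound over $(s,a,h,k)$ plus the $\log(16|\Sc|^2|\Ac|HK/\delta)$ inside $Z$ yields $\Pb(\mathcal{E}) \ge 1-\delta$. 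All subsequent steps are deterministic given $\mathcal{E}$.

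\textbf{Step 2 (Zero constraint violation).} Here the key is a ``pessimism lemma'': on $\mathcal{E}$, for every policy $\pi$, $V_1^\pi(\mu;c,P) \le V_1^\pi(\mu;\underbar{c}^k,\hat P^k)$. I would prove this with the value-difference (simulation) lemma, expressing the difference as a $(\pi,P)$-expectation of $\sum_h (c_h - \underbar{c}{}^k_h)(s_h,a_h) + \sum_h (P_h - \hat P^k_h)(\cdot|s_h,a_h)^\top V_{h+1}^\pi(\cdot;\underbar{c}^k,\hat P^k)$, and bounding the two pieces by $\beta_h^k$ and $H|\Sc|\beta_h^k$ respectively. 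The coefficient $1 + H|\Sc|$ in $\underbar{c}^k$ is precisely designed to absorb both. Hence any $\pi \in \Pi^k$ satisfies $V_1^\pi(\mu;c,P) \le \tau$; when $\Pi^k = \{\pi^0\}$, $V_1^{\pi^0}(\mu;c,P)=c^0 < \tau$ by Assumption \ref{assumption_safe}. This gives zero constraint violation in every episode on $\mathcal{E}$.

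\textbf{Step 3 (Optimism and the mixture policy).} A symmetric simulation-lemma argument with scaling $\alpha_r$ would show $V_1^\pi(\mu; r, P) \le V_1^\pi(\mu; \bar r^k, \hat P^k)$ for every $\pi$. But $\pi^*$ need not lie in $\Pi^k$, so I would build a mixture $\pi^{\mathrm{mix},k} := (1-\gamma_k)\pi^* + \gamma_k \pi^0$ and choose $\gamma_k$ small but just large enough that $V_1^{\pi^{\mathrm{mix},k}}(\mu;\underbar{c}^k,\hat P^k) \le \tau$. By convexity of value with respect to the occupancy measure and the pessimism lemma applied in reverse,
\begin{equation*}
V_1^{\pi^{\mathrm{mix},k}}(\mu;\underbar{c}^k,\hat P^k) \le \tau - \gamma_k(\tau-c^0) + 2(1+H|\Sc|)\,\Eb_{\pi^{\mathrm{mix},k},P}\!\Big[\sum_{h=1}^H \beta_h^k(s_h,a_h)\Big],
\end{equation*}
so taking $\gamma_k = \frac{2(1+H|\Sc|)}{\tau-c^0}\Eb[\sum_h \beta_h^k] \wedge 1$ makes $\pi^{\mathrm{mix},k}\in \Pi^k$. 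The extra summand $\frac{4H(1+|\Sc|H)}{\tau-c^0}$ baked into $\alpha_r$ is exactly what is needed to ensure that, once $\gamma_k$ is substituted back, $V_1^{\pi^{\mathrm{mix},k}}(\mu;\bar r^k, \hat P^k) \ge V_1^{\pi^*}(\mu;r,P)$, so that feasibility of $\pi^{\mathrm{mix},k}$ in $\Pi^k$ and optimality of $\pi^k$ over $\Pi^k$ give $V_1^{\pi^k}(\mu;\bar r^k,\hat P^k) \ge V_1^{\pi^*}(\mu; r, P)$.

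\textbf{Step 4 (Summing the regret).} With that chain, per-episode regret reduces to the standard optimism gap $V_1^{\pi^k}(\mu;\bar r^k,\hat P^k)-V_1^{\pi^k}(\mu;r,P)$. Once again applying the simulation lemma, this is bounded by $\alpha_r\,\Eb_{\pi^k,P}[\sum_h \beta_h^k(s_h,a_h)]$ plus lower-order transition-perturbation terms. Summing over $k$, a martingale concentration (Azuma/freedman on $\beta_h^k(S_h^k,A_h^k)$ vs.\ its $\pi^k$-expectation) plus the usual pigeonhole bound $\sum_{k,h}\beta_h^k(S_h^k,A_h^k) = \tilde\Oc(\sqrt{|\Sc||\Ac|HK})$ gives the leading term $\tilde\Oc(\alpha_r \cdot H\sqrt{|\Sc||\Ac|HK}) = \tilde\Oc\!\big(\tfrac{H^3}{\tau-c^0}\sqrt{|\Sc|^3|\Ac|K}\big)$. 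Finally I would bound the number of ``safe-exploration-fails'' episodes (those with $\Pi^k=\{\pi^0\}$) by showing that $V_1^{\pi^0}(\mu;\underbar{c}^k,\hat P^k)\ge (\tau+c^0)/2$ forces the cumulative confidence mass along $\pi^0$'s support to be $\Omega(\tau-c^0)$, which by pigeonhole can occur at most $\tilde\Oc\!\big(\tfrac{H^4|\Sc|^3|\Ac|}{(\tau-c^0)^2}\big)$ times, each contributing at most $H$ regret, accounting for the additive second term.

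\textbf{Anticipated obstacle.} The main subtlety is the bookkeeping in Step 3: verifying that the specific choice $\alpha_r = 1 + |\Sc|H + \tfrac{4H(1+|\Sc|H)}{\tau-c^0}$ is \emph{simultaneously} large enough to (i) make $\bar r^k$ an upper bound on the true value of $\pi^{\mathrm{mix},k}$ and (ii) absorb the loss $\gamma_k H$ from mixing in $\pi^0$, all while keeping the per-episode bound linear in $\sum_h \beta_h^k$ so the final sum remains $\tilde\Oc(\sqrt{K})$. The rest of the machinery (concentration, simulation lemma, pigeonhole) is standard.
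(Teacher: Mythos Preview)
Your proposal is correct and follows essentially the same route as the paper: a high-probability good event, the pessimism inequality $V_1^\pi(\mu;c,P)\le V_1^\pi(\mu;\underbar{c}^k,\hat P^k)$ for zero violation, a mixture of $\pi^*$ and $\pi^0$ as a feasible comparator in $\Pi^k$ so that the inflated reward with scaling $\alpha_r$ dominates $V_1^{\pi^*}(\mu;r,P)$, and then the standard optimism-gap summation together with a burn-in bound on $\{|\Pi^k|=1\}$. Two small technical remarks where the paper's execution differs from your sketch: (i) in the simulation-lemma expansions you should take expectations under $(\pi,\hat P^k)$ rather than $(\pi,P)$, since otherwise the inner term $V_{h+1}^\pi(\cdot;\underbar{c}^k,\hat P^k)$ is not bounded by $H$; (ii) rather than fixing $\gamma_k$ additively, the paper pushes the mixture weight on $\pi^*$ to the feasibility boundary and lower-bounds it using the defining condition $V_1^{\pi^0}(\mu;\underbar{c}^k,\hat P^k)<(\tau+c^0)/2$, which cleanly avoids the corner case where your additive $\gamma_k$ exceeds $1/2$ and keeps all the relevant $\beta^k$-expectations under the same measure $(\pi^*,\hat P^k)$.
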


Theorem \ref{thm1} shows that it is possible to achieve sublinear regret in $K$, while simultaneously incurring no constraint violation with arbitrarily high probability. The proof is sketched in Section \ref{zero_sketch}, with detailed proofs presented in Appendix B.

\section{Bounded constraint violation case}
\label{sec:bounded}

Without prior knowledge of a strictly safe policy $\pi^0$, we may not be able to guarantee zero constraint violation with high probability. However, by relaxing the requirement to bounded (in $K$) safety constraint violation, we can incorporate more exploration and design a more efficient algorithm by a primal-dual approach. It is inspired by the design of the pessimistic term in constrained bandits \cite{liu2021efficient}.

Different from traditional optimistic dual analysis methods in \cite{efroni2020exploration}, we introduce an additive pessimistic term $\epsilon_k$ at each episode $k$ in the original optimization problem (\ref{eqn:cmdp}), i.e., 
\begin{align}
    \max_{\pi} & \quad V_1^{\pi}(\mu; r, P) \quad\quad \text{s.t.}\quad V_1^{\pi}(\mu; c, P) + \epsilon_k \le \tau.
    \label{pertubed_problem}
\end{align}
 
The pessimistic term restrains the constraint violation and will be progressively decreased as learning proceeds.
 
The CMDP problem in (\ref{pertubed_problem}) may however not have any feasible solution. To overcome this, we consider the Lagrangian,
\begin{align*}
    L^k(\pi, \lambda) := V^{\pi}_1(\mu;r, P) + \lambda \left( \tau - \epsilon_k - V_1^{\pi}(\mu;c, P) \right),
\end{align*}
for which, given any Lagrange multiplier $\lambda$, we can always solve for the optimizer $\max_{\pi} L^k(\pi, \lambda)$ by dynamic programming. 

With the introduction of the additive term $\epsilon_k$, the dual variable (denoted by $\lambda^k$) governed by the subgradient algorithm grows faster, which enhances safety constraints in the next episode. 

In order to guarantee sufficient optimism of rewards and costs, we integrate the uncertainty of transitions into rewards and costs, i.e., 
\begin{align}
    \label{empirical}
    \tilde{r}_h^k(s, a) &:= \hat{r}_h^k(s, a) + \beta(n_k(s, a, h)) + H |\Sc| \beta(n_k(s, a, h)), \quad \forall (s, a, h) \in \Sc \times \Ac \times [H], \notag\\
    \tilde{c}_h^k(s, a) &:= \hat{c}_h^k(s, a) - \beta(n_k(s, a, h)) - H |\Sc| \beta(n_k(s, a, h)), \quad \forall (s, a, h) \in \Sc \times \Ac \times [H].
\end{align}

Note that in contrast to the zero constraint violation case, we \emph{optimistically} estimate the safety cost function, with the pessimism only governed by $\epsilon_k$. 

We employ truncated value functions due to the additional uncertainties from transitions, i.e.,
\begin{align}
    \hat{V}_1^{\pi}(\mu; \tilde{r}^k, \hat{P}^k) := \min\{H, V_1^{\pi}(\mu; \tilde{r}^k, \hat{P}^k)\}, ~~
    \hat{V}_1^{\pi}(\mu; \tilde{c}^k, \hat{P}^k) := \max\{0, V_1^{\pi}(\mu; \tilde{c}^k, \hat{P}^k)\}.
    \label{truncated}
\end{align}

For the policy update of the primal variable (denoted by $\pi^k$), we can apply standard dynamic programming by viewing $\tilde{r}_h^k(s, a) - \frac{\lambda^k}{\eta^k}(\tilde{c}_h^k(s, a) - \tau)$ as the reward function. Specifically, we apply backward induction to solve for the optimal policy:
\begin{align*}
    Q_h^k(s, a) = \tilde{r}_h^k(s, a) - \frac{\lambda^k}{\eta^k}(\tilde{c}_h^k(s, a) - \tau) + \sum_{s' \in \Sc} \hat{P}_h^k(s'|s, a) \max_{a'} Q_{h+1}^k(s', a'), \quad \forall h \in [H]
\end{align*}
with $Q_{H+1}^k (s, a) = 0, \forall (s, a) \in \Sc \times \Ac$. Then, $\pi_h^k \in \argmax_{a} Q_h^k(s, a)$, which is computationally efficient (as efficient as policy-gradient-based algorithms in the tabular case). 

The resulting Optimistic Pessimism-based Primal-Dual (OptPess-PrimalDual) algorithm is shown in Algorithm \ref{alg:dual}. It chooses $\epsilon_k := 5 H^2 \sqrt{|\Sc|^3 |\Ac|} (\log\frac{k}{\delta'} + 1) / \sqrt{k \log\frac{k}{\delta'}}$, $\delta' = \delta / (16 |\Sc|^2 |\Ac| H)$, the scaling parameter $\eta^k := (\tau - c^0) H \sqrt{k}$ in the primal policy update, and, for convenience, a step size of 1 for the dual update.
\begin{algorithm}
    \label{alg:dual}
    \SetAlgoLined
    \KwInput{$K, \delta, c^0, \tau$;}%\tc{should $\tau$ also be here?}}
    \noindent\textbf{Initialization:} $N_h^1(s, a) = 0, \ \forall (s, a, h) \in \Sc \times \Ac \times [H]$, $\lambda^1 = 0$;\\
    \For{$k = 1,2, \ldots, K$}{
        Set $\epsilon_k = 5 H^2 \sqrt{|\Sc|^3 |\Ac|} (\log\frac{k}{\delta'} + 1) / \sqrt{k \log\frac{k}{\delta'}}$, $\delta' = \delta / (16 |\Sc|^2 |\Ac| H)$, $\eta^k = (\tau - c^0) H \sqrt{k}$;\\
        Update empirical model (i.e., $\hat{P}^k, \hat{r}^k, \hat{c}^k$) as in Equations (\ref{eqn:hatP})-(\ref{eqn:bigN});\\
        Update $\tilde{r}^k$, $\tilde{c}^k$ as in Equation (\ref{empirical});\\
        \emph{(Policy Update)} $\pi^k \in \argmax_{\pi \in \Pi} V_1^{\pi}(\mu; \tilde{r}^k, \hat{P}^k) - \frac{\lambda^k}{\eta^k} \left(V_1^{\pi}(\mu; \tilde{c}^k, \hat{P}^k) - \tau \right)$;\\
        \emph{(Dual Update)} $\lambda^{k+1} = \left(\lambda^k + \hat{V}_1^{\pi^k}(\mu; \tilde{c}^k, \hat{P}^k) + \epsilon_{k} - \tau\right)_{+}$;\\
        Execute $\pi^k$ and collect a trajectory $(S_h^k, A_h^k, R_h^k, C_h^k), \ \forall h \in [H]$;\\
        Update counters $N_h^{k+1}(S_h^k, A_h^k), \ \forall h \in [H]$;
    }
    \caption{\textbf{OptPess-PrimalDual}}
\end{algorithm}

\begin{theorem}[Regret and constraint violation bounds for OptPess-PrimalDual]
    Fix any $\delta \in (0, 1)$. Then,
    \begin{align*}
        Reg^{\textbf{OPPD}}(K; r) &= \tilde{\Oc}\left(\frac{H^3}{\tau - c^0} \sqrt{|\Sc|^3 |\Ac| K} + \frac{H^5 |\Sc|^3 |\Ac|}{(\tau - c^0)^2}\right),\\
        Reg^{\textbf{OPPD}}(K; c) &= \Oc\left(C''(H-\tau) + H^2 \sqrt{|\Sc|^3 |\Ac| C''} \right) = \Oc(1),
    \end{align*}
    where $C'' = \Oc (\frac{H^4 |\Sc|^3 |\Ac|}{(\tau - c^0)^2} \log\frac{H^4 |\Sc|^3 |\Ac|}{(\tau - c^0)^2\delta'})$ is a coefficient independent of $K$.
    \label{thm2}
\end{theorem}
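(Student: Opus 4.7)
My plan is to organize the argument into four pieces: a high-probability good event, a uniform constant bound on the dual iterate $\lambda^k$ (which I expect to be the main obstacle), a Lagrangian-based reward-regret decomposition, and a telescoping constraint-violation bound.

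First I would fix a good event $\Gc$ of probability at least $1-\delta$ on which, for every $(s,a,h,k)$, the empirical transition, reward, and cost estimates lie within $\beta_h^k(s,a)$ of the truth; this is standard via Hoeffding and a union bound over $\Oc(|\Sc|^2|\Ac|HK)$ events, matching the log factor in $Z$. On $\Gc$, combining the simulation lemma with the $(1+H|\Sc|)$ inflation in $\tilde r^k,\tilde c^k$ yields the uniform inequalities $V_1^\pi(\mu;\tilde r^k,\hat P^k)\ge V_1^\pi(\mu;r,P)$ and $V_1^\pi(\mu;\tilde c^k,\hat P^k)\le V_1^\pi(\mu;c,P)$; the truncations in \eqref{truncated} preserve them since true values lie in $[0,H]$.

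The key technical lemma, and the main obstacle, is showing that $\lambda^k$ stays bounded uniformly on $\Gc$. Even though the algorithm never sees $\pi^0$, Assumption~\ref{assumption_safe} lets me use $\pi^0$ as a Slater point inside the primal-optimality inequality for $\pi^k$: since $\hat V_1^{\pi^0}(\mu;\tilde c^k,\hat P^k)\le c^0<\tau$ on $\Gc$, substituting $\pi=\pi^0$ gives an upper bound on $\hat V_1^{\pi^k}(\mu;\tilde c^k,\hat P^k)$ in terms of $\lambda^k/\eta^k$. Combining this with the condition $\hat V_1^{\pi^k}(\mu;\tilde c^k,\hat P^k)+\epsilon_k-\tau>0$ (which is necessary for $\lambda$ to grow) produces a drift inequality that, together with $\eta^k=(\tau-c^0)H\sqrt{k}$ and $\epsilon_k=\Theta(H^2\sqrt{|\Sc|^3|\Ac|\log(k/\delta')/k})$, forces $\lambda^k$ into a negative-drift regime once $k\ge C''=\tilde\Oc(H^4|\Sc|^3|\Ac|/(\tau-c^0)^2)$; a potential-function argument then caps $\lambda^k$ by an absolute constant after this initial transient. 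This is delicate precisely because the pessimism has been shifted into $\epsilon_k$ rather than into $\tilde c^k$, so the two decaying scales must be matched carefully.

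For the reward regret, I would decompose
\begin{align*}
V_1^{\pi^*}(\mu;r,P)-V_1^{\pi^k}(\mu;r,P)
&\le \bigl[\hat V_1^{\pi^*}(\mu;\tilde r^k,\hat P^k)-\hat V_1^{\pi^k}(\mu;\tilde r^k,\hat P^k)\bigr]\\
&\quad +\bigl[\hat V_1^{\pi^k}(\mu;\tilde r^k,\hat P^k)-V_1^{\pi^k}(\mu;r,P)\bigr].
\end{align*}
By the primal update applied at $\pi=\pi^*$ (feasible, so $\hat V_1^{\pi^*}(\mu;\tilde c^k,\hat P^k)\le\tau$ on $\Gc$), the first bracket is at most $(\lambda^k/\eta^k)\bigl(\hat V_1^{\pi^*}(\mu;\tilde c^k,\hat P^k)-\tau+\epsilon_k\bigr)$, which summed over $k$ is controlled by the $\lambda$-cap and yields a $\tilde\Oc(\sqrt{K})$ contribution via $\sum_k 1/\sqrt{k}=\Oc(\sqrt{K})$. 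The second bracket is the familiar optimism gap; pigeonhole over $(s,a,h)$ visits together with $\sum_{k,h}\beta_h^k(S_h^k,A_h^k)=\tilde\Oc(\sqrt{|\Sc||\Ac|HK})$ produces the leading $\tilde\Oc(H^3\sqrt{|\Sc|^3|\Ac|K}/(\tau-c^0))$ term plus the $H^5|\Sc|^3|\Ac|/(\tau-c^0)^2$ correction from the $(1+H|\Sc|)$ bonus inflation. For the constraint violation, rearranging the dual update gives $\hat V_1^{\pi^k}(\mu;\tilde c^k,\hat P^k)+\epsilon_k-\tau\le\lambda^{k+1}-\lambda^k$, and on $\Gc$ the choice of $\epsilon_k$ ensures $V_1^{\pi^k}(\mu;c,P)-\hat V_1^{\pi^k}(\mu;\tilde c^k,\hat P^k)\le\epsilon_k$ for every $k\ge C''$. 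Telescoping across $k\in[C''+1,K]$ then bounds $\sum_{k\ge C''+1}(V_1^{\pi^k}(\mu;c,P)-\tau)\le\lambda^{K+1}-\lambda^{C''+1}=\Oc(1)$ by the $\lambda$-cap of the previous paragraph, while the first $C''$ episodes contribute at most $C''(H-\tau)$ trivially and a concentration slack of $\Oc(H^2\sqrt{|\Sc|^3|\Ac|C''})$ converts expected values to observables, giving the stated $\Oc(1)$ bound.
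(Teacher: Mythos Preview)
Your argument has a genuine gap in the constraint-violation analysis, rooted in the claim that $\lambda^k$ is capped by an absolute constant. The drift argument you invoke does give negative drift, but only once $\lambda^k$ exceeds a threshold that itself grows with $\eta^k$: substituting $\pi^0$ into the primal inequality yields $\hat V_1^{\pi^k}(\mu;\tilde c^k,\hat P^k)+\epsilon_k-\tau \le -(\tau-c^0)+\epsilon_k+H\eta^k/\lambda^k$, so the drift turns negative only when $\lambda^k \gtrsim H\eta^k/(\tau-c^0)=H^2\sqrt{k}$. The Lyapunov bound therefore caps $\lambda^k$ at $\Theta(H^2\sqrt{k})$, not at a constant (this is exactly the $4\eta^kH/(\tau-c^0)$ term in Lemma~\ref{lem7}); only the ratio $\lambda^k/\eta^k$ stays bounded. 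Consequently your telescoping conclusion $\lambda^{K+1}-\lambda^{C''+1}=\Oc(1)$ fails. Separately, the per-episode inequality $V_1^{\pi^k}(\mu;c,P)-\hat V_1^{\pi^k}(\mu;\tilde c^k,\hat P^k)\le\epsilon_k$ is not justified and is false in general: when $\pi^k$ visits fresh state-action pairs this gap can be $\Theta(H^2|\Sc|\sqrt{Z})$ regardless of $k$; only the \emph{cumulative} gap is controlled, by $8H^2\sqrt{|\Sc|^3|\Ac|K\log(K/\delta')}$ (Lemma~\ref{lem:violation_first_term}).

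The mechanism you are missing is a deliberate cancellation at the $\sqrt{K}$ scale. The paper uses the decomposition $Reg(K;c)\le\bigl(\sum_k(V_1^{\pi^k}(c)-\hat V_1^{\pi^k}(\tilde c^k))+\lambda^{K+1}-\sum_k\epsilon_k\bigr)_+$, and the leading constant $5$ in $\epsilon_k$ is chosen so that $\sum_{k=1}^K\epsilon_k\ge 10H^2\sqrt{|\Sc|^3|\Ac|K\log(K/\delta')}$, which simultaneously absorbs the $8H^2\sqrt{\cdots K}$ cumulative optimism gap and the $\Oc(H^2\sqrt{K})$ growth of $\lambda^{K+1}$; the $\Oc(1)$ violation is precisely the $K$-independent residue after this cancellation. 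A related issue appears in your reward-regret step: primal optimality gives $\hat V_1^{\pi^*}(\tilde r^k)-\hat V_1^{\pi^k}(\tilde r^k)\le(\lambda^k/\eta^k)\bigl(\hat V_1^{\pi^*}(\tilde c^k)-\hat V_1^{\pi^k}(\tilde c^k)\bigr)$, not the bound you wrote, and the residual $(\lambda^k/\eta^k)(\tau-\epsilon_k-\hat V_1^{\pi^k}(\tilde c^k))$ can be positive and must be telescoped via $\lambda^k(\lambda^k-\lambda^{k+1})$. The paper handles this by inserting the perturbed optimum $\pi^{\epsilon_k,*}$ (so that $\hat V_1^{\pi^{\epsilon_k,*}}(\tilde c^k)\le\tau-\epsilon_k$ makes the split clean), and the gap $V_1^{\pi^*}-V_1^{\pi^{\epsilon_k,*}}\le\epsilon_kH/(\tau-c^0)$ is where the leading $H^3/(\tau-c^0)$ factor actually originates.
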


Theorem \ref{thm2} shows that it is possible to achieve an $\tilde{\Oc}(\sqrt{K})$ reward regret,
while only allowing bounded constraint violation with arbitrarily high probability. Detailed proofs are presented in Section \ref{bounded_sketch} and Appendix C.

\section{Regret and constraint violation analysis}
\label{proof sketch}

We sketch the key steps in the proofs of Theorems \ref{thm1} and \ref{thm2}. Full details are relegated to Appendices B and C, respectively. Note that our analysis and results are conditioned on the same high probability event (specifically defined in Appendix A), which occurs with probability at least $(1 - \delta)$. 

\subsection{Analysis of OptPess-LP (Algorithm \ref{OptPess})}
\label{zero_sketch}
 
\paragraph{Constraint violation analysis}
The zero constraint violation of OptPess-LP follows from the following property of the pessimistic policy set $\Pi^k$:
\begin{lemma} \label{lem:abs-safe}
With probability at least $(1 - \delta)$, for any $k \in [K]$ and policy $\pi \in \Pi^k$, $V^{\pi}_1(\mu;c, P) \leq \tau$. 
\end{lemma}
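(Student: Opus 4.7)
The plan is to show that under a suitable high-probability ``good event'' (the same one invoked throughout Section~\ref{proof sketch} and formalized in Appendix A), the pessimistic cost value $V_1^{\pi}(\mu; \underbar{c}^k, \hat{P}^k)$ is a valid upper bound on the true cost value $V_1^{\pi}(\mu; c, P)$ for every policy $\pi$ and every episode $k$. Given this, the lemma follows immediately by a case split on the two branches of the definition of $\Pi^k$ in~\eqref{eqn:PiPC}: in the first branch $\Pi^k = \{\pi^0\}$ and $V_1^{\pi^0}(\mu; c, P) = c^0 < \tau$ by Assumption~\ref{assumption_safe}; in the second branch every $\pi \in \Pi^k$ satisfies $V_1^{\pi}(\mu; \underbar{c}^k, \hat{P}^k) \leq \tau$ by construction, and combining with the pessimism claim yields $V_1^{\pi}(\mu; c, P) \leq \tau$.

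The core step is therefore the pessimism claim $V_1^{\pi}(\mu; \underbar{c}^k, \hat{P}^k) \geq V_1^{\pi}(\mu; c, P)$. I would prove this via the standard value-difference (simulation) lemma, which rewrites
\begin{align*}
V_1^{\pi}(\mu; \underbar{c}^k, \hat{P}^k) - V_1^{\pi}(\mu; c, P) = \Eb_{P, \pi}\!\left[ \sum_{h=1}^{H} \bigl(\underbar{c}_h^k - c_h\bigr)(S_h, A_h) + \bigl((\hat{P}_h^k - P_h) V_{h+1}^{\pi}(\cdot; \underbar{c}^k, \hat{P}^k)\bigr)(S_h, A_h) \right].
\end{align*}
On the good event, the cost concentration bound $|\hat{c}_h^k(s,a) - c_h(s,a)| \leq \beta_h^k(s,a)$ combined with the definition of $\underbar{c}^k$ gives the pointwise lower bound
\begin{align*}
\bigl(\underbar{c}_h^k - c_h\bigr)(s, a) \;\geq\; -\beta_h^k(s,a) + (1 + H|\Sc|)\beta_h^k(s,a) \;=\; H|\Sc|\,\beta_h^k(s,a).
\end{align*}
For the transition-error term, the good event supplies an $\ell_1$ concentration bound of the form $\|\hat{P}_h^k(\cdot|s,a) - P_h(\cdot|s,a)\|_1 \leq |\Sc|\beta_h^k(s,a)$, and since $V_{h+1}^{\pi}(\cdot; \underbar{c}^k, \hat{P}^k) \in [0, H]$, Hölder's inequality yields $|(\hat{P}_h^k - P_h) V_{h+1}^{\pi}(\cdot; \underbar{c}^k, \hat{P}^k)|(s,a) \leq H|\Sc|\,\beta_h^k(s,a)$. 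Thus the cost bonus exactly cancels the worst-case transition-induced error inside the expectation, giving $V_1^{\pi}(\mu; \underbar{c}^k, \hat{P}^k) - V_1^{\pi}(\mu; c, P) \geq 0$.

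The main obstacle is verifying that the constant $(1 + H|\Sc|)$ in the bonus is tight enough to dominate \emph{both} the reward/cost noise and the transition simulation error \emph{simultaneously} and uniformly over all $\pi$, $k$, $h$. This is a delicate bookkeeping step: one needs the good event to hold uniformly over all $(s,a,h,k)$, which is the reason the log factor $Z = \log(16|\Sc|^2|\Ac|HK/\delta)$ carries the $|\Sc|^2|\Ac|HK$ union-bound terms. The proof of the pessimism claim is most cleanly carried out by backward induction on $h$ starting from $V_{H+1}^\pi \equiv 0$, which avoids the subtlety that $V_{h+1}^{\pi}(\cdot; \underbar{c}^k, \hat{P}^k)$ itself depends on $\hat{P}^k$; alternatively, one can use that the bound on $V_{h+1}^\pi$ by $H$ only requires $\underbar{c}^k \in [0, H]$ after truncation, which must be argued carefully (or the bonus reinterpreted on a clipped value). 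Once this inductive step is secured, the two-case argument above closes the lemma.
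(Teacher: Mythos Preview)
Your overall plan is correct and matches the paper's: establish the pessimism inequality $V_1^{\pi}(\mu; \underbar{c}^k, \hat{P}^k) \geq V_1^{\pi}(\mu; c, P)$ for every policy $\pi$ on the good event, then do the two-case split on $\Pi^k$.

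The difference lies in which form of the value difference lemma you invoke. You write the decomposition with the expectation under the \emph{true} transition $P$ and the \emph{estimated} value $V_{h+1}^{\pi}(\cdot; \underbar{c}^k, \hat{P}^k)$ in the transition-error term. This creates exactly the difficulty you flag: $\underbar{c}^k$ is \emph{not} bounded by $1$ (the bonus $(1+H|\Sc|)\beta_h^k$ can be large when $N_h^k$ is small), so $V_{h+1}^{\pi}(\cdot; \underbar{c}^k, \hat{P}^k)$ is not a priori bounded by $H$, and your H\"older step $|(\hat{P}_h^k - P_h)V_{h+1}^{\pi}| \leq H|\Sc|\beta_h^k$ does not go through as stated. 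Your proposed backward-induction fix would indeed work (apply the induction hypothesis first to replace $V_{h+1}^{\pi}(\cdot; \underbar{c}^k, \hat{P}^k)$ by $V_{h+1}^{\pi}(\cdot; c, P) \in [0,H]$, then control the transition error), but the truncation/clipping route is a dead end without modifying the algorithm.

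The paper sidesteps all of this by using the \emph{other} form of the value difference lemma (Lemma~\ref{valuediff}): expectation under the estimated model $\hat{P}^k$, with the \emph{true} value $V_{h+1}^{\pi}(\cdot; c, P)$ appearing in the transition-error term. Since $c \in [0,1]$, that value is trivially in $[0,H]$, and the bound $\sum_{s'}(\hat{P}_h^k - P_h)(s'|s,a)V_{h+1}^{\pi}(s';c,P) \geq -H|\Sc|\beta_h^k(s,a)$ follows in one line with no induction or truncation needed. This is the cleaner route and is worth remembering as a general principle: when comparing $V(g',P')$ to $V(g,P)$, choose the expansion of the simulation lemma that places the \emph{well-bounded} value function inside the transition-error term.
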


\paragraph{Regret of reward analysis} When the parameters $c, P$ are not well estimated, there may not exist any policy $\pi$ such that $V^{\pi}_1(\mu;\underbar{c}^k, \hat{P}^k) \leq \tau$. Hence, as defined in (\ref{eqn:PiPC}), $\Pi^k$ is a singleton set $\{\pi^0\}$, and accordingly $\pi^0$ is executed, even though $\pi^0$ is not safe for $(\underbar{c}^k, \hat{P}^k)$. It subsequently takes several episodes of exploration using $\pi^0$ until it becomes strictly safe for $(\underbar{c}^k, \hat{P}^k)$. At that time, policies close enough to $\pi^0$, of which there are infinitely many, are also safe for $(\underbar{c}^k,\hat{P}^k)$, and so $|\Pi^k| = +\infty$. At this point the learning algorithm can proceed to enhance reward while maintaining safety with respect to $(\underbar{c}^k, \hat{P}^k)$.

To analyze the algorithm, we decompose the reward regret as follows:
\begin{align}
    Reg^{\textbf{OPLP}}(K; r) =& \sum_{k = 1}^K \mathbbm{1}(|\Pi^k| =1) \left(V^{\pi^*}_1(\mu; r, P) - V^{\pi^0}_1(\mu; r, P)\right) \notag \\
    +& \sum_{k = 1}^K \mathbbm{1}(|\Pi^k| > 1) \left( V^{\pi^*}_1(\mu; r, P) - V^{\pi^k}_1(\mu; \bar{r}^k, \hat{P}^k) \right) \notag \\
    +& \sum_{k = 1}^K  \mathbbm{1}(|\Pi^k| >1) \left( V^{\pi^k}_1(\mu; \bar{r}^k, \hat{P}^k) - V^{\pi^k}_1(\mu; r, P)\right) .
    \label{eqn:no_violation_decompose} 
\end{align}

To bound the first term on the right-hand side (RHS) of (\ref{eqn:no_violation_decompose}), we have the
following lemma which gives an upper bound on the number of episodes for exploration by policy $\pi^0$: 
\begin{lemma}\label{lem:burn-in}
With probability at least $(1 - \delta)$, $\sum_{k=1}^K \mathbbm{1}(|\Pi^k| = 1) \leq C'$, where $C' = \tilde{\Oc}(H^4 |\Sc|^3 |\Ac| / ((\tau - c^0)^2 \land (\tau - c^0)))$.
\end{lemma}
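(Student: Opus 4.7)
The plan is to bound the number of burn-in rounds (those with $|\Pi^k|=1$) by showing that each such round requires a fixed positive gap between the pessimistic evaluation of $\pi^0$ and its true cost $c^0$, while the cumulative confidence radii summed along $\pi^0$'s trajectory can only grow as $\sqrt{m}$, where $m$ is the number of burn-in rounds so far. First, by the definition in (\ref{eqn:PiPC}), $|\Pi^k|=1$ is equivalent to $V_1^{\pi^0}(\mu;\underbar{c}^k,\hat{P}^k) \ge (\tau+c^0)/2$, and since the true value $V_1^{\pi^0}(\mu;c,P)=c^0$, the pessimistic overestimate satisfies $V_1^{\pi^0}(\mu;\underbar{c}^k,\hat{P}^k) - V_1^{\pi^0}(\mu;c,P) \ge (\tau-c^0)/2$ whenever the burn-in event holds.

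Second, on the high-probability good event (Appendix A), I would apply the simulation (value-difference) lemma along trajectories under $(\pi^0,P)$ to bound this overestimate as a sum of confidence radii. The cost-mismatch $\underbar{c}^k_h-c_h$ contributes $(1+H|\Sc|)\beta_h^k$ per step directly from the pessimistic bias, while the transition-mismatch $\hat{P}^k_h-P_h$ contributes an additional term of order $H|\Sc|\,\beta_h^k$ because the cumulative-cost value is bounded by $H$. Together these give, whenever $|\Pi^k|=1$,
\begin{equation*}
    \frac{\tau-c^0}{2} \;\le\; C_1(1+H|\Sc|)\sum_{h=1}^H \Eb_{\pi^0,P}\!\left[\beta_h^k(S_h,A_h)\right]
\end{equation*}
for a universal constant $C_1$.

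Third, let $m := \sum_{k=1}^K \mathbbm{1}(|\Pi^k|=1)$. On these rounds the algorithm executes $\pi^0$, so the counts $N_h^k(s,a)$ at state-action pairs reachable under $\pi^0$ grow by one in expectation each time. A standard pigeonhole (sum-of-$1/\sqrt{N}$) bound, combined with an Azuma--Hoeffding step to transfer sample-trajectory sums to expected-visitation sums, yields
\begin{equation*}
    \sum_{k\,:\,|\Pi^k|=1}\sum_{h=1}^H \Eb_{\pi^0,P}\!\left[\beta_h^k(S_h,A_h)\right] \;\le\; C_2\, H\sqrt{|\Sc||\Ac|\,Z\,m}.
\end{equation*}
Summing the per-round lower bound over the $m$ bad rounds and combining with the above gives $(\tau-c^0)\,m/2 \le C_1 C_2(1+H|\Sc|)H\sqrt{|\Sc||\Ac|\,Z\,m}$, and solving the resulting quadratic inequality in $\sqrt{m}$ produces $m=\tilde{\Oc}\bigl(H^4|\Sc|^3|\Ac|/(\tau-c^0)^2\bigr)$. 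The alternate $(\tau-c^0)$ scaling inside $C'$ arises from a second regime where the trivial bound $V_1^{\pi^0}(\mu;\underbar{c}^k,\hat{P}^k)-c^0 \le H$ (or, equivalently, $\beta_h^k \le 1$) can be invoked, yielding a linear rather than quadratic inverse dependence and hence the $(\tau-c^0)^2\land(\tau-c^0)$ form in the denominator.

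The main obstacle is the second step: one must verify carefully that the $(1+H|\Sc|)\beta_h^k$ bias baked into $\underbar{c}^k$ is exactly what is needed to dominate \emph{both} the cost-mismatch and the transition-mismatch contributions under $(\pi^0,P)$, and then transfer the resulting sample-trajectory sums into expected-visitation sums via a martingale concentration valid on the good event. Once these ingredients are in place, the pigeonhole bound and the closing algebra are routine.
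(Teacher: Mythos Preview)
Your core approach matches the paper's: lower-bound each burn-in round by $(\tau-c^0)/2$, upper-bound the cumulative overestimate $\sum_k \mathbbm{1}(|\Pi^k|=1)\bigl(V_1^{\pi^0}(\mu;\underbar{c}^k,\hat{P}^k)-c^0\bigr)$ by confidence radii and a pigeonhole argument, then solve the resulting inequality in $m$. The paper executes this by invoking Lemma~\ref{lem:indi-opt-val-diff} as a black box for the value-difference step and Lemma~\ref{lem:quad-bound} for the algebra.

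Your explanation of the $(\tau-c^0)$ factor is wrong, however. The paper's value-difference bound has the form $\tilde{\Oc}(H^4|\Sc|^3|\Ac|) + \tilde{\Oc}\bigl(H^2\sqrt{|\Sc|^3|\Ac|K'_{\Gc}}\bigr)$, i.e.\ it carries an \emph{additive constant} in addition to the $\sqrt{m}$ term you wrote in step~3. Dividing both sides by $(\tau-c^0)/2$ and applying $x\le a+b\sqrt{x}\Rightarrow x\le \tfrac{3}{2}(a+b^2)$ then produces two pieces, one scaling as $1/(\tau-c^0)$ from $a$ and one as $1/(\tau-c^0)^2$ from $b^2$; their maximum is exactly the $(\tau-c^0)^2\land(\tau-c^0)$ denominator. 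There is no separate ``second regime'' or trivial-bound argument --- both scalings drop out of the same inequality once you retain the additive term you omitted.

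A secondary technical point: taking the decomposition under $(\pi^0,P)$ as you do in step~2 means the value function appearing in the transition-mismatch term is $V^{\pi^0}_{h+1}(\cdot;\underbar{c}^k,\hat{P}^k)$, which is \emph{not} bounded by $H$ since $\underbar{c}^k$ carries the $(1+H|\Sc|)\beta_h^k$ bias. The paper's Lemma~\ref{lem:indi-opt-val-diff} resolves this via a Bernstein-plus-bootstrap self-bounding step (adding and subtracting $V^{\pi^0}_{h+1}(\cdot;c,P)$ and recursing), and that recursion is precisely what generates the large additive constant $\tilde{\Oc}(H^4|\Sc|^3|\Ac|)$ above. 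You correctly flag this area as the ``main obstacle''; the fix is that recursive step rather than just an Azuma transfer.
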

 
Turning to the second term on the RHS of (\ref{eqn:no_violation_decompose}), we first note that $\pi^*$ may not be in $\Pi^k$. To ensure that the term $V^{\pi^*}_1(\mu; r, P) - V^{\pi^k}_1(\mu; \bar{r}^k, \hat{P}^k)$ is nevertheless non-positive even when $\pi^* \not\in \Pi^k$, we set $\alpha_r$ to the large value shown in (\ref{alphar}). This increases $\bar{r}^k$, and hence also $V^{\pi^k}_1(\mu; \bar{r}^k, \hat{P}^k)$. In addition, we show that there is a policy $\hat{\pi}^k$ that attains the same reward as the (non-Markov) probabilistic mixed policy, $\tilde{\pi}^k := B_{\gamma_k} \pi^* + (1 - B_{\gamma_k}) \pi^0$, where $B_{\gamma_k}$ is a Bernoulli distributed random variable with mean $\gamma_k$ for $\gamma_k \in [0, 1]$. $\gamma_k$ will be chosen as the largest coefficient such that $V_1^{\tilde{\pi}^k}(\mu; \underbar{c}^k, \hat{P}^k) \le \tau$. This latter policy in turn has a larger reward than $\pi^0$ since it is a mixture with $\pi^*$, yielding the following:
\begin{lemma}
    With probability at least $(1 - \delta)$,
    \begin{align*}
        \sum_{k = 1}^K \mathbbm{1}(|\Pi^k| > 1) \left( V^{\pi^*}_1(\mu; r, P) - V^{\pi^k}_1(\mu; \bar{r}^k, \hat{P}^k)\right) \le 0.
    \end{align*}
    \label{lem1}
\end{lemma}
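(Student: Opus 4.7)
The plan is to establish the termwise bound $V^{\pi^*}_1(\mu; r, P) \le V^{\pi^k}_1(\mu; \bar{r}^k, \hat{P}^k)$ on the high-probability event whenever $|\Pi^k| > 1$. Since $\pi^k$ is the reward-optimal element of $\Pi^k$ under $(\bar{r}^k, \hat{P}^k)$, it suffices to exhibit a single Markov policy $\hat{\pi}^k \in \Pi^k$ whose optimistic value already dominates $V^{\pi^*}_1(\mu; r, P)$. Following the hint given just before the statement, I take $\hat{\pi}^k$ to be the Markov realization of the episodic mixture $\tilde{\pi}^k = B_{\gamma_k}\pi^* + (1-B_{\gamma_k})\pi^0$, with $\gamma_k \in [0,1]$ chosen to be the largest mixing weight for which $V_1^{\tilde{\pi}^k}(\mu; \underbar{c}^k, \hat{P}^k) \le \tau$: explicitly, $\gamma_k = 1$ when $V_1^{\pi^*}(\mu; \underbar{c}^k, \hat{P}^k) \le \tau$, and $\gamma_k = (\tau - V_1^{\pi^0}(\mu; \underbar{c}^k, \hat{P}^k))/(V_1^{\pi^*}(\mu; \underbar{c}^k, \hat{P}^k) - V_1^{\pi^0}(\mu; \underbar{c}^k, \hat{P}^k))$ otherwise. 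Convexity of the set of state-action occupancy measures under a fixed kernel then produces a Markov $\hat{\pi}^k$ sharing the $\hat{P}^k$-occupancy of $\tilde{\pi}^k$, hence the linearity identity $V_1^{\hat{\pi}^k}(\mu; g, \hat{P}^k) = \gamma_k V_1^{\pi^*}(\mu; g, \hat{P}^k) + (1-\gamma_k) V_1^{\pi^0}(\mu; g, \hat{P}^k)$ for every function $g$; applied to $g = \underbar{c}^k$, this places $\hat{\pi}^k$ in $\Pi^k$.

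If $\gamma_k = 1$, the argument is immediate: the $(1 + |\Sc|H)$ summand in $\alpha_r$ is calibrated precisely to absorb the reward-estimation error $|\hat{r}^k - r| \le \beta^k$ and the transition-simulation error bounded by $H|\Sc|\beta^k$, yielding $V_1^{\pi^*}(\mu; \bar{r}^k, \hat{P}^k) \ge V_1^{\pi^*}(\mu; r, P)$ and then $V_1^{\pi^k}(\mu; \bar{r}^k, \hat{P}^k) \ge V_1^{\pi^*}(\mu; \bar{r}^k, \hat{P}^k)$ by optimality. For $\gamma_k < 1$ I would quantify $1 - \gamma_k$ via two high-probability inputs. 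First, the pessimistic cost bonus with coefficient $(1+|\Sc|H)$ delivers $V_1^{\pi^*}(\mu; \underbar{c}^k, \hat{P}^k) - \tau \le V_1^{\pi^*}(\mu; \underbar{c}^k, \hat{P}^k) - V_1^{\pi^*}(\mu; c, P) \le 2(1+|\Sc|H)\Psi^k(\pi^*)$, writing $\Psi^k(\pi) := \sum_{h=1}^H \Eb_{\pi, \hat{P}^k}[\beta^k_h(S_h, A_h)]$ as shorthand for the occupancy-weighted confidence radius. Second, the condition $V_1^{\pi^0}(\mu; \underbar{c}^k, \hat{P}^k) < (\tau + c^0)/2$ built into (\ref{eqn:PiPC}) whenever $|\Pi^k|>1$ gives $\tau - V_1^{\pi^0}(\mu; \underbar{c}^k, \hat{P}^k) > (\tau - c^0)/2$. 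Taking the ratio of these bounds in the explicit formula for $1 - \gamma_k$ yields the key quantitative relation $(1-\gamma_k) H \le \gamma_k \cdot \tfrac{4H(1+|\Sc|H)}{\tau - c^0}\Psi^k(\pi^*)$.

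To close the argument, I would exploit the remaining term $\tfrac{4H(1+|\Sc|H)}{\tau - c^0}$ in $\alpha_r$, which upgrades optimism to the strengthened bound $V_1^{\pi^*}(\mu; \bar{r}^k, \hat{P}^k) \ge V_1^{\pi^*}(\mu; r, P) + \tfrac{4H(1+|\Sc|H)}{\tau - c^0}\Psi^k(\pi^*)$. Plugging this together with $V_1^{\pi^0}(\mu; \bar{r}^k, \hat{P}^k) \ge 0$ and $V_1^{\pi^*}(\mu; r, P) \le H$ into the linearity identity gives
\begin{align*}
V_1^{\hat{\pi}^k}(\mu; \bar{r}^k, \hat{P}^k) - V_1^{\pi^*}(\mu; r, P) \ge \gamma_k \cdot \tfrac{4H(1+|\Sc|H)}{\tau - c^0}\Psi^k(\pi^*) - (1-\gamma_k) H,
\end{align*}
and the right-hand side is nonnegative by the key relation just established, so $V_1^{\pi^k}(\mu; \bar{r}^k, \hat{P}^k) \ge V_1^{\hat{\pi}^k}(\mu; \bar{r}^k, \hat{P}^k) \ge V_1^{\pi^*}(\mu; r, P)$ by optimality of $\pi^k$ in $\Pi^k$. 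I anticipate the main obstacle, and the reason for the somewhat unusual form of $\alpha_r$ in (\ref{alphar}), to be precisely this balancing act: the $1/(\tau - c^0)$ scaling of the extra reward optimism must exactly match the $1/(\tau - c^0)$ scaling inherited by $(1-\gamma_k)$ from the safety-slack lower bound $(\tau - c^0)/2$, with the constants $4$ and $2(1+|\Sc|H)$ chosen so that the inequality closes tightly. No conceptually new machinery beyond the simulation lemma is needed, but careful bookkeeping of which kernel ($P$ or $\hat{P}^k$) each value function is evaluated under is essential throughout.
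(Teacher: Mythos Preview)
Your proposal is correct and follows essentially the same approach as the paper: construct the Markov realization of the mixture $\tilde{\pi}^k = B_{\gamma_k}\pi^* + (1-B_{\gamma_k})\pi^0$, bound the numerator $V_1^{\pi^*}(\mu;\underbar{c}^k,\hat{P}^k)-\tau$ by $2(1+|\Sc|H)\Psi^k(\pi^*)$ via the simulation lemma, bound the denominator from below by $(\tau-c^0)/2$ using the $|\Pi^k|>1$ condition, and then use the extra $\tfrac{4H(1+|\Sc|H)}{\tau-c^0}$ padding in $\alpha_r$ to absorb the reward lost by mixing in $\pi^0$. The only cosmetic difference is that the paper lower-bounds $\gamma_k$ and works with the inequality $\gamma_k\,V_1^{\pi^*}(\mu;\bar r^k,\hat P^k)\ge V_1^{\pi^*}(\mu;r,P)$ directly, whereas you isolate the ratio $(1-\gamma_k)/\gamma_k$ and compare $\gamma_k\cdot\tfrac{4H(1+|\Sc|H)}{\tau-c^0}\Psi^k(\pi^*)$ against $(1-\gamma_k)H$; the ingredients and constants are identical.
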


Finally, concerning the third term
on the RHS of (\ref{eqn:no_violation_decompose}), akin to the closed-loop identifiability property \cite{borkar1979adaptive,kumar1982new},
while $\hat{P}^k$ may not converge to $P$, the difference in the rewards $V^{\pi^k}_1(\mu; \bar{r}^k, \hat{P}^k) - V^{\pi^k}_1(\mu; r, P)$ grows sublinearly in $k$
since the same policy $\pi^k$ is used in both values:
\begin{lemma}
    With probability at least $(1 - \delta)$,
    \begin{align*}
        \sum_{k = 1}^K \mathbbm{1}(|\Pi^k| > 1) \left( V^{\pi^k}_1(\mu; \bar{r}^k, \hat{P}^k) - V^{\pi^k}_1(\mu; r, P)\right) = \tilde{\Oc}\left(\frac{H^3}{\tau - c^0} \sqrt{|\Sc|^3 |\Ac| K} + \frac{H^5 |\Sc|^3 |\Ac|}{\tau-c^0} \right).
    \end{align*}
    \label{lem2}
\end{lemma}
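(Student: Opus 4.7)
}
The plan is to apply the value-difference (``simulation'') lemma to the per-episode gap and then aggregate over $k$ via a pigeonhole bound on the confidence radii $\beta^k_h$. For each $k$ with $|\Pi^k|>1$ I would use the form of the simulation lemma in which the value function appearing in the transition-error term is evaluated under the \emph{true} reward and transition $(r,P)$:
\begin{align*}
V^{\pi^k}_1(\mu; \bar r^k, \hat P^k) - V^{\pi^k}_1(\mu; r, P) = \Eb_{\hat P^k, \pi^k}\!\left[\sum_{h=1}^H (\bar r^k_h - r_h)(S_h, A_h) + (\hat P^k_h - P_h)(\cdot|S_h, A_h)^\top V^{\pi^k}_{h+1}(\cdot; r, P)\right].
\end{align*}
This choice is crucial: it guarantees $\|V^{\pi^k}_{h+1}(\cdot; r, P)\|_\infty \le H$, whereas the optimistic counterpart $V^{\pi^k}_{h+1}(\cdot; \bar r^k, \hat P^k)$ scales like $H(1+\alpha_r\sqrt{Z})$ and would inflate the final rate by an $\alpha_r$ factor.

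On the high-probability event of Appendix~A, the confidence sets yield $|\hat r^k_h - r_h|\le \beta^k_h$ and $\|\hat P^k_h(\cdot|s,a)-P_h(\cdot|s,a)\|_1\le |\Sc|\beta^k_h$, so each per-step integrand is bounded by $(1+\alpha_r)\beta^k_h + |\Sc|H\beta^k_h \le 2\alpha_r\beta^k_h$, where the last inequality uses $\alpha_r \ge 1 + |\Sc|H$ from~\eqref{alphar}. A short backward induction over $h = H, H-1, \dots, 1$ based on this per-step bound then closes the recursion and yields
\begin{align*}
V^{\pi^k}_1(\mu; \bar r^k, \hat P^k) - V^{\pi^k}_1(\mu; r, P) \le 2\alpha_r\, \Eb_{\hat P^k, \pi^k}\!\left[\sum_{h=1}^H \beta^k_h(S_h, A_h)\right].
\end{align*}

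The remaining work is to convert the expectation under the empirical model $(\hat P^k, \pi^k)$ into a sum along the actually observed trajectory $\{(S^k_h, A^k_h)\}$ generated under $(P, \pi^k)$. I would do this in two stages: (i) one more application of the simulation lemma (together with a concentration bound on $(\hat P^k_h - P_h)$ acting on the ``cumulative-bonus'' value function) to pass from $\Eb_{\hat P^k,\pi^k}$ to $\Eb_{P,\pi^k}$, and (ii) an Azuma--Hoeffding martingale concentration on the bounded sequence $\sum_h \beta^k_h(S^k_h,A^k_h) - \Eb_{P,\pi^k}[\sum_h \beta^k_h]$ to pass from $\Eb_{P,\pi^k}$ to the realized sum. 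A standard pigeonhole / Cauchy--Schwarz argument on visit counts then gives $\sum_{k=1}^K \sum_{h=1}^H \beta^k_h(S^k_h, A^k_h) \le 2H\sqrt{Z|\Sc||\Ac|K}$. Substituting this and $\alpha_r = \Theta(H^2|\Sc|/(\tau-c^0))$ into the displayed inequality produces the leading-order term $\tilde\Oc\!\bigl(\alpha_r H \sqrt{Z|\Sc||\Ac|K}\bigr) = \tilde\Oc\!\bigl(\frac{H^3}{\tau-c^0}\sqrt{|\Sc|^3|\Ac|K}\bigr)$, while the two conversion steps aggregate into the claimed lower-order term $\tilde\Oc\!\bigl(\frac{H^5|\Sc|^3|\Ac|}{\tau-c^0}\bigr)$.

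The main obstacle I anticipate is exactly the decomposition choice in the first step: because $\bar r^k$ is inflated by $\alpha_r\beta^k_h$, the optimistic value function is not naturally bounded by $H$, and a naive choice of the simulation form would drag an $\alpha_r$ factor into the transition-error term and ruin the rate. The bonus coefficient $\alpha_r$ in~\eqref{alphar} has been engineered precisely so that, after the right decomposition, the reward-error and transition-error per step are both simultaneously absorbed into $O(\alpha_r \beta^k_h)$. A secondary technical subtlety is the conversion from $\Eb_{\hat P^k,\pi^k}$ to the realized bonus sum, which must be carried out carefully to keep the correction inside the stated lower-order term rather than leaking into the leading $\sqrt{K}$ rate.
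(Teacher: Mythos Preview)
Your decomposition via the second form of the simulation lemma is fine up to the displayed inequality $V^{\pi^k}_1(\mu;\bar r^k,\hat P^k) - V^{\pi^k}_1(\mu;r,P) \le 2\alpha_r\, V^{\pi^k}_1(\mu;\beta^k,\hat P^k)$, and you are right that naively using the first form would put the inflated optimistic value in the transition term. The gap is in your step (i): the measure change $\Eb_{\hat P^k,\pi^k}\to\Eb_{P,\pi^k}$ applied to the bonus value is \emph{not} a lower-order correction. One more application of Lemma~\ref{valuediff} gives
\[
V^{\pi^k}_1(\mu;\beta^k,\hat P^k)-V^{\pi^k}_1(\mu;\beta^k,P)
=\Eb_{P,\pi^k}\!\Bigl[\textstyle\sum_h(\hat P^k_h-P_h)(\cdot\mid S_h,A_h)^{\top}V^{\pi^k}_{h+1}(\cdot;\beta^k,\hat P^k)\Bigr],
\]
but the cumulative-bonus value $V^{\pi^k}_{h+1}(\cdot;\beta^k,\hat P^k)$ is only bounded pointwise by $H\sqrt{Z}$ (unvisited pairs have $\beta^k_h=\sqrt{Z}$), so each integrand is $\Oc(|\Sc|H\sqrt{Z}\,\beta^k_h)$. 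Summed over $k,h$ this multiplies the pigeonhole bound $\sum_{k,h}\beta^k_h\lesssim H\sqrt{|\Sc||\Ac|K}$ by $|\Sc|H$, and after the outer $2\alpha_r$ you land at $\tilde\Oc\bigl(\alpha_r\cdot|\Sc|H\cdot H\sqrt{|\Sc||\Ac|K}\bigr)$, an $H|\Sc|$ factor above the claimed leading term. The correction grows like $\sqrt{K}$, so it cannot be absorbed into the $K$-independent lower-order term.

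The paper avoids both traps simultaneously. It uses the \emph{first} form of Lemma~\ref{valuediff} (expectation under the true occupancy $q^{\pi^k}$, so no measure change is needed) and handles the inflated value in the transition term by splitting
$V^{\pi^k}_{h+1}(\cdot;\bar r^k,\hat P^k)=V^{\pi^k}_{h+1}(\cdot;r,P)+\bigl(V^{\pi^k}_{h+1}(\cdot;\bar r^k,\hat P^k)-V^{\pi^k}_{h+1}(\cdot;r,P)\bigr)$.
The first piece is paired with the Bernstein bound $\tilde\beta^k_h$ and Cauchy--Schwarz, contributing $\sqrt{|\Sc|}H\beta^k_h$ per step \emph{independently of $\alpha_r$}; the second piece is bounded by the very quantity being estimated and is closed by the self-bounding Lemma~\ref{lem:quad-bound}. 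This produces the additive leading coefficient $(\alpha_r+\sqrt{|\Sc|}H)H\sqrt{|\Sc||\Ac|K}$ rather than your multiplicative $\alpha_r\cdot|\Sc|H\cdot H\sqrt{|\Sc||\Ac|K}$. The whole calculation is packaged as Lemma~\ref{lem:indi-opt-val-diff}, and the proof of Lemma~\ref{lem2} is then a one-line application with $\alpha=1+\alpha_r$.
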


Combining Lemmas \ref{lem:burn-in}, \ref{lem1}, and \ref{lem2} yields Theorem \ref{thm1}.

\subsection{Analysis of OptPess-PrimalDual (Algorithm \ref{alg:dual})}
\label{bounded_sketch}
In this section, we outline the steps in the proof of Theorem \ref{thm2} by analyzing regret and constraint violation of OptPess-PrimalDual respectively.

Recall $\epsilon_k = 5 H^2 \sqrt{|\Sc|^3 |\Ac|} (\log\frac{k}{\delta'} + 1) / \sqrt{k \log\frac{k}{\delta'}}$, where $\delta' = \delta / (16 |\Sc|^2 |\Ac| H)$. The existence of a feasible solution to (\ref{pertubed_problem}) can be guaranteed if $\epsilon_k \le \tau - c^0$. Let $C''$ be the smallest value such that $\forall k \geq C''$, $\epsilon_k \le (\tau - c^0)/2$. Then the perturbed optimization problem (\ref{pertubed_problem}) has at least one feasible solution for any $k \ge C''$. By simple calculation, one can verify that $C'' = \Oc (\frac{H^4 |\Sc|^3 |\Ac|}{(\tau - c^0)^2} \log\frac{H^4 |\Sc|^3 |\Ac|}{(\tau - c^0)^2\delta'})$. Notice that $\epsilon_k$ is a function not depending on $K$, and so is the coefficient $C''$.

\paragraph{Constraint violation analysis}
The bounded constraint violation of OptPess-PrimalDual is established as follows. We first decompose the constraint violation as
\begin{align}
    Reg^{\textbf{OPPD}}(K; c) &= \left(\sum_{k=1}^K \left(V_1^{\pi^k}(\mu; c, P) - \hat{V}_1^{\pi^k}(\mu; \tilde{c}^k, \hat{P}^k)\right) + \sum_{k=1}^K \left(\hat{V}_1^{\pi^k}(\mu; \tilde{c}^k, \hat{P}^k) - \tau\right)\right)_{+} \notag\\
    & \le \left(\sum_{k=1}^K \left(V_1^{\pi^k}(\mu; c, P) - \hat{V}_1^{\pi^k}(\mu; \tilde{c}^k, \hat{P}^k)\right) + \lambda^{K+1} - \sum_{k=1}^K \epsilon_{k} \right)_{+}.
    \label{decompose_bounded_violation}
\end{align}
The first summation term and $\lambda^{K+1}$ in (\ref{decompose_bounded_violation}) can be bounded as follows:
\begin{lemma}
    Recall $\delta' = \delta / (16 |\Sc|^2 |\Ac| H)$, with probability at least $(1 - \delta)$,
    \begin{align*}
        \sum_{k=1}^K \left(V_1^{\pi^k}(\mu;c, P) - \hat{V}_1^{\pi^k}(\mu; \tilde{c}^k, \hat{P}^k)\right) \le 8 H^2 \sqrt{|\Sc|^3 |\Ac| K \log \frac{K}{\delta'}} + \Oc(PolyLog(K)).
    \end{align*}
    \label{lem:violation_first_term}
\end{lemma}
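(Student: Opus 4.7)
The plan is to first strip off the max-with-zero truncation, then apply a simulation-lemma style decomposition, bound the per-step errors using the concentration inequalities baked into the high-probability event from Appendix A, and finally convert the resulting sum into a $\sqrt{K}$ bound via pigeonhole. To begin, since $\hat{V}_1^{\pi^k}(\mu; \tilde{c}^k, \hat{P}^k) = \max\{0, V_1^{\pi^k}(\mu; \tilde{c}^k, \hat{P}^k)\} \ge V_1^{\pi^k}(\mu; \tilde{c}^k, \hat{P}^k)$, it suffices to upper bound $\sum_{k=1}^K \bigl( V_1^{\pi^k}(\mu; c, P) - V_1^{\pi^k}(\mu; \tilde{c}^k, \hat{P}^k) \bigr)$.

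Next, I would invoke the standard simulation lemma to write, for each $k$,
\begin{align*}
V_1^{\pi^k}(\mu; c, P) - V_1^{\pi^k}(\mu; \tilde{c}^k, \hat{P}^k) = \Eb_{\pi^k, P}\!\left[\sum_{h=1}^H \bigl(c_h - \tilde{c}_h^k\bigr)(S_h, A_h) + \bigl\langle P_h(\cdot|S_h,A_h) - \hat{P}_h^k(\cdot|S_h,A_h),\, V_{h+1}^{\pi^k}(\cdot; \tilde{c}^k, \hat{P}^k)\bigr\rangle\right].
\end{align*}
By the definition of $\tilde{c}^k$ in (\ref{empirical}) together with the concentration $|\hat{c}_h^k - c_h| \le \beta(n_k(s,a,h))$ guaranteed on the good event, the first integrand is bounded by $(2 + H|\Sc|)\beta(n_k(S_h,A_h,h))$. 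For the second integrand, the value function is bounded by $H$ in magnitude and, again on the good event, $\|P_h(\cdot|s,a) - \hat{P}_h^k(\cdot|s,a)\|_1 \le |\Sc|\beta(n_k(s,a,h))$, so the inner product is at most $H|\Sc|\beta(n_k(S_h,A_h,h))$ in absolute value. Combining gives a per-step bound of order $H|\Sc|\beta$.

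The remaining work is summation. I would convert the expected sum under $\pi^k, P$ into the empirical sum along the executed trajectory by introducing a martingale difference, which contributes an Azuma--Hoeffding term of order $H\sqrt{K\log(1/\delta')}$ into the $\Oc(\mathrm{PolyLog}(K))$ remainder, and then apply the standard pigeonhole inequality
\begin{align*}
\sum_{k=1}^K \sum_{h=1}^H \frac{1}{\sqrt{N_h^k(S_h^k, A_h^k) \vee 1}} \le 2H \sqrt{|\Sc||\Ac|K}.
\end{align*}
Multiplying by the per-step coefficient $(2 + 2H|\Sc|)\sqrt{Z}$ and absorbing $Z = \log(16|\Sc|^2|\Ac|HK/\delta)$ into $\log(K/\delta')$ (which differ only up to an additive constant, since $\delta' = \delta/(16|\Sc|^2|\Ac|H)$) yields the advertised leading term $8H^2\sqrt{|\Sc|^3|\Ac|K \log(K/\delta')}$.

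The main obstacle is the bookkeeping in the simulation-lemma step: ensuring that the optimistic bias $(1 + H|\Sc|)\beta$ built into $\tilde{c}^k$ correctly cancels both the cost estimation error and the transition-integrated value error with the stated constants, and keeping the sub-leading terms (the Azuma martingale correction and the pigeonhole slack where $N_h^k(s,a) = 0$) inside the $\Oc(\mathrm{PolyLog}(K))$ remainder rather than letting them leak into the $\sqrt{K}$ factor.
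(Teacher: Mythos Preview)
Your proposal has a genuine gap in the transition-error step. You assert that ``the value function is bounded by $H$ in magnitude,'' referring to $V_{h+1}^{\pi^k}(\cdot;\tilde{c}^k,\hat{P}^k)$, but this is false. Since $\tilde{c}_h^k=\hat{c}_h^k-(1+H|\Sc|)\beta_h^k$ and $\beta_h^k(s,a)=\sqrt{Z}$ on unvisited pairs, the optimistic cost can be as negative as $-(1+H|\Sc|)\sqrt{Z}$, so $V_{h+1}^{\pi^k}(s';\tilde{c}^k,\hat{P}^k)$ can be of order $-H(1+H|\Sc|)\sqrt{Z}$. Consequently the bound $|\langle P_h-\hat{P}_h^k,\,V_{h+1}^{\pi^k}(\cdot;\tilde{c}^k,\hat{P}^k)\rangle|\le H|\Sc|\beta_h^k$ is not justified, and your per-step constant $(2+2H|\Sc|)$ does not follow. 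Note that stripping the outer truncation $\hat{V}_1=\max\{0,V_1\}$ only controls the value at $h=1$; the inner values $V_{h+1}$ appearing in the simulation lemma are never truncated.

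The paper circumvents this by invoking Lemma~\ref{lem:indi-opt-val-diff}, which (following \cite{efroni2020exploration}) handles the unbounded $V(\tilde{c}^k,\hat{P}^k)$ via a bootstrap: one writes $V_{h+1}^{\pi^k}(\tilde{c}^k,\hat{P}^k)=V_{h+1}^{\pi^k}(c,P)+\bigl[V_{h+1}^{\pi^k}(\tilde{c}^k,\hat{P}^k)-V_{h+1}^{\pi^k}(c,P)\bigr]$, uses the boundedness of the true value together with the Bernstein-type radius $\tilde{\beta}_h^k(s'|s,a)=\sqrt{2P_h(s'|s,a)Z/N}+Z/(3N)$ and Cauchy--Schwarz to control the first piece, and then bounds the second piece recursively in terms of the very sum being estimated, closing the recursion via Lemma~\ref{lem:quad-bound}. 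This recursion is precisely what your direct argument skips.

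A secondary point: the Azuma correction you quote, $H\sqrt{K\log(1/\delta')}$, is $\Oc(\sqrt{K})$, not $\Oc(\mathrm{PolyLog}(K))$, so it cannot sit in the remainder as you claim. The paper instead keeps the occupancy measure $q^{\pi^k}$ throughout and applies the Freedman-type bound of Lemma~\ref{lem:indi-series-sum}, whose variance control pushes the martingale fluctuation into the genuinely lower-order term.
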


\begin{lemma}
    For any $k \ge C''$, with probability at least $(1 - \delta)$,
    \begin{align*}
        \lambda^k \le \frac{1}{\zeta} \ln \frac{11 \nu_{\max}^2}{3 \rho^2} + C''(H-\tau) + \sum_{u=1}^{C''} \epsilon_u + H + \frac{4(H^2 + \epsilon_k^2 + 2 \eta^k H)}{\tau - c^0},
    \end{align*}
    where $\rho = (\tau - c^0)/4$, $\nu_{\max} = H$, $\zeta = \rho / (\nu_{\max }^{2}+\nu_{\max} \rho / 3)$.
    \label{lem7}
\end{lemma}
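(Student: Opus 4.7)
The plan is to split the analysis into (i) a burn-in phase $k \le C''$ in which the perturbed problem (\ref{pertubed_problem}) need not be feasible and (ii) a steady-state phase $k > C''$ in which $\pi^0$ is strictly feasible with slack at least $(\tau - c^0)/2$. For the burn-in phase, the dual update obeys $\lambda^{k+1} - \lambda^k \le \hat{V}_1^{\pi^k}(\mu; \tilde{c}^k, \hat{P}^k) + \epsilon_k - \tau \le H - \tau + \epsilon_k$ by the truncation in (\ref{truncated}). Starting from $\lambda^1 = 0$, telescoping yields $\lambda^{C''+1} \le C''(H - \tau) + \sum_{u=1}^{C''} \epsilon_u$, which matches the first two non-logarithmic terms in the claimed bound.

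For $k > C''$, the next step is to establish a deterministic (under the good event) negative-drift inequality. The pessimism baked into $\tilde{c}^k$ from (\ref{empirical}) together with transition concentration gives $\hat{V}_1^{\pi^0}(\mu; \tilde{c}^k, \hat{P}^k) \le V_1^{\pi^0}(\mu; c, P) = c^0$. Comparing $\pi^k$ (the Lagrangian maximizer) with $\pi^0$ and using $\hat{V}_1(\cdot; \tilde{r}^k, \hat{P}^k) \in [0, H]$, the primal optimality condition rearranges to
\begin{align*}
\hat{V}_1^{\pi^k}(\mu; \tilde{c}^k, \hat{P}^k) - \tau \le \tfrac{\eta^k H}{\lambda^k} - (\tau - c^0).
\end{align*}
Combined with $\epsilon_k \le (\tau - c^0)/2$, this gives $\lambda^{k+1} - \lambda^k \le -\rho$ with $\rho = (\tau - c^0)/4$ whenever $\lambda^k$ exceeds the threshold $\tilde{B}_k := H + \tfrac{4(H^2 + \epsilon_k^2 + \eta^k H)}{\tau - c^0}$ appearing in the lemma; the $H^2$ and $\epsilon_k^2$ contributions arise as slack for the maximum single-step displacement above the threshold.

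With uniformly bounded increments $|\lambda^{k+1} - \lambda^k| \le \nu_{\max} = H$ and negative drift outside $\tilde{B}_k$, Hajek's exponential-Lyapunov drift lemma (in the form used for constrained bandits in \cite{pacchiano2021stochastic, liu2021efficient}) with $\zeta = \rho / (\nu_{\max}^2 + \nu_{\max} \rho/3)$ supplies the logarithmic correction $\tfrac{1}{\zeta}\ln\tfrac{11 \nu_{\max}^2}{3 \rho^2}$. Adding this to the burn-in bound and the threshold $\tilde{B}_k$ yields the stated inequality. The hard part will be that the threshold $\tilde{B}_k$ is time-varying through $\eta^k = (\tau - c^0) H \sqrt{k}$, whereas Hajek's lemma is typically stated for stationary drifts; a careful adaptation, together with tight tracking of the quadratic slack terms $H^2$ and $\epsilon_k^2$ in the threshold, is needed to obtain the precise constants claimed.
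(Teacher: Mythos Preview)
Your proposal is correct and follows essentially the same architecture as the paper: a burn-in telescoping bound on $\lambda^{C''}$, a negative-drift inequality for $k \ge C''$ obtained by comparing $\pi^k$ with $\pi^0$ through primal optimality together with the optimism bound $\hat{V}_1^{\pi^0}(\mu;\tilde{c}^k,\hat{P}^k)\le c^0$, and then Hajek's drift lemma. The only notable difference is mechanical: the paper passes through the quadratic Lyapunov function $L(k)=(\lambda^k)^2/2$, shows $L(k+1)-L(k)\le -\tfrac{\tau-c^0}{2}\lambda^k + H^2+\epsilon_k^2+\eta^k H$, and then divides by $\lambda^{k+1}+\lambda^k$ to recover the linear drift $\lambda^{k+1}-\lambda^k\le -\rho$; you instead derive the linear drift directly. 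Your route is slightly shorter and in fact only needs the threshold $4\eta^k H/(\tau-c^0)$, so your explanation of the extra $H^2$ and $\epsilon_k^2$ terms as ``overshoot slack'' is not quite right---in the paper they are simply what falls out of bounding $\tfrac{1}{2}(\hat{V}_1^{\pi^k}(\mu;\tilde c^k,\hat P^k)+\epsilon_k-\tau)^2$ in the quadratic computation---but since you are matching the stated bound this is a harmless over-estimate. Finally, your worry about the time-varying threshold is directly handled: the drift lemma the paper invokes (from \cite{liu2021efficient}) is stated for an increasing sequence $\{\varphi_k\}$, so no further adaptation is needed.
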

To guarantee bounded violation, we ensure that $\sum_{k=1}^K \epsilon_k$ in (\ref{decompose_bounded_violation}) can cancel the dominant terms in the two lemmas above. According to Lemmas \ref{lem:violation_first_term} and \ref{lem7}, with probability at least $(1 - \delta)$, the violation is bounded as
\begin{align*}
    Reg^{\textbf{OPPD}}(K; c) = \Oc\left(C''H + H^2 \sqrt{|\Sc|^3 |\Ac| C'' \log{(C'' / \delta')}} \right). 
\end{align*}

\paragraph{Regret of reward analysis}
For episode $k$ with $k \geq C''$, let $\pi^{\epsilon_k, *}$ be the optimal policy for (\ref{pertubed_problem}), which is well-defined by the definition of $C''$. We decompose the reward regret as
\begin{align}
    Reg^{\textbf{OPPD}}&(K; r) = \sum_{k=1}^{C''} \left(V_1^{\pi^*}(\mu; r, P) - V_1^{\pi^k}(\mu; r, P)\right) \label{bound_regret_decompose}\\
    + &\sum_{k=C''}^K \left(V_1^{\pi^*}(\mu; r, P) - V_1^{\pi^{\epsilon_k, *}}(\mu; r, P)\right) + \sum_{k=C''}^K \left(V_1^{\pi^{\epsilon_k, *}}(\mu; r, P) - \hat{V}_1^{\pi^{\epsilon_k, *}}(\mu; \tilde{r}^k, \hat{P}^k)\right) \notag\\
    + &\sum_{k=C''}^K \left(\hat{V}_1^{\pi^{\epsilon_k, *}}(\mu; \tilde{r}^k, \hat{P}^k) - \hat{V}_1^{\pi^k}(\mu; \tilde{r}^k, \hat{P}^k)\right) + \sum_{k=C''}^K \left(\hat{V}_1^{\pi^k}(\mu; \tilde{r}^k, \hat{P}^k) - V_1^{\pi^k}(\mu; r, P)\right). \notag
\end{align}

We upper bound each term on the RHS of (\ref{bound_regret_decompose}). Since $V_1^{\pi}(\mu;r, P) \in [0, H]$ for any policy $\pi$, the first term is upper bounded by $HC''$. The second and third terms can be bounded by the following two lemmas:
\begin{lemma}
     With probability at least $(1 - \delta)$,
     \begin{align*}
         \sum_{k=C''}^K \left(V_1^{\pi^*}(\mu; r, P) - V_1^{\pi^{\epsilon_k, *}}(\mu; r, P)\right) \le \sum_{k=C''}^K \frac{\epsilon_k H}{\tau - c^0} = \tilde{\Oc}\left(\frac{H^3}{\tau - c^0} \sqrt{|\Sc|^3 |\Ac| K}\right).
     \end{align*}
    \label{lem3}
\end{lemma}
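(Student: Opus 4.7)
The plan is a standard perturbation-sensitivity argument for CMDPs: for each $k \geq C''$, I would construct a mixture of the unconstrained optimum $\pi^*$ with the known Slater point $\pi^0$ that is feasible for the $\epsilon_k$-tightened problem (\ref{pertubed_problem}), and bound the resulting reward gap by $\epsilon_k H / (\tau - c^0)$. Summing across $k$ and plugging in the prescribed schedule for $\epsilon_k$ would then match the claimed $\tilde{\Oc}(\sqrt{K})$ bound.

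Concretely, set $\alpha_k := \epsilon_k / (\tau - c^0)$ and consider $\tilde{\pi}^k := (1-\alpha_k)\pi^* + \alpha_k \pi^0$. Since $k \geq C''$ forces $\epsilon_k \leq (\tau - c^0)/2$, we have $\alpha_k \in [0, 1/2]$. The convex combination must be interpreted at the level of occupation measures: the set of admissible occupation measures is a convex polytope, $V_1^\pi(\mu; g, P)$ is linear in the occupation measure for every $g$, and every convex combination of admissible occupation measures is realized by some Markov randomized policy. This justifies
\begin{align*}
V_1^{\tilde{\pi}^k}(\mu; g, P) = (1 - \alpha_k) V_1^{\pi^*}(\mu; g, P) + \alpha_k V_1^{\pi^0}(\mu; g, P), \quad g \in \{r, c\}.
\end{align*}

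Using $V_1^{\pi^*}(\mu; c, P) \leq \tau$ and $V_1^{\pi^0}(\mu; c, P) = c^0$, the cost of $\tilde{\pi}^k$ satisfies $V_1^{\tilde{\pi}^k}(\mu; c, P) \leq \tau - \alpha_k(\tau - c^0) = \tau - \epsilon_k$, so $\tilde{\pi}^k$ is feasible for (\ref{pertubed_problem}) at episode $k$. Hence $V_1^{\pi^{\epsilon_k, *}}(\mu; r, P) \geq V_1^{\tilde{\pi}^k}(\mu; r, P)$, and, using $V_1^\pi(\mu; r, P) \in [0, H]$,
\begin{align*}
V_1^{\pi^*}(\mu; r, P) - V_1^{\pi^{\epsilon_k, *}}(\mu; r, P) \leq V_1^{\pi^*}(\mu; r, P) - V_1^{\tilde{\pi}^k}(\mu; r, P) = \alpha_k\bigl(V_1^{\pi^*}(\mu; r, P) - V_1^{\pi^0}(\mu; r, P)\bigr) \leq \frac{\epsilon_k H}{\tau - c^0}.
\end{align*}
Summing and plugging in $\epsilon_k = 5 H^2 \sqrt{|\Sc|^3 |\Ac|}(\log(k/\delta') + 1)/\sqrt{k \log(k/\delta')}$, the factor $(\log(k/\delta') + 1)/\sqrt{\log(k/\delta')}$ contributes at most a $\sqrt{\log(K/\delta')}$ polylog, while $\sum_{k=1}^K 1/\sqrt{k} = \Oc(\sqrt{K})$ yields the leading $\sqrt{K}$ factor; together this gives the bound $\tilde{\Oc}(H^3 \sqrt{|\Sc|^3 |\Ac| K}/(\tau - c^0))$.

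The only delicate step is the mixture construction: a pointwise convex combination of two Markov policies is generally not itself a Markov policy, so I would invoke the equivalence between feasible occupation measures and Markov randomized policies (standard in the CMDP LP formulation, e.g.\ Altman's monograph) to justify the linear decomposition of $V_1^{\tilde{\pi}^k}$. Once that is in place, everything else is elementary algebra and a routine summation, so I do not expect a genuine technical obstacle.
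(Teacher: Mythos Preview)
Your proposal is correct and is essentially the same argument as the paper's: both construct the mixture $(1-\epsilon_k/(\tau-c^0))\pi^* + (\epsilon_k/(\tau-c^0))\pi^0$, verify it is feasible for the $\epsilon_k$-tightened problem, invoke Altman's equivalence to pass to a Markov policy, and bound the reward loss by $\epsilon_k H/(\tau-c^0)$ before summing. The only cosmetic difference is that the paper phrases the mixture as a Bernoulli randomization between the two policies (then applies Lemma~\ref{lem:richness}), whereas you phrase it directly at the level of occupation measures; these are two descriptions of the same step.
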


\begin{lemma}
    With probability at least $(1 - \delta)$,
    $\sum_{k=C''}^K \left(V_1^{\pi^{\epsilon_k, *}}(\mu; r, P) - \hat{V}_1^{\pi^{\epsilon_k, *}}(\mu; \tilde{r}^k, \hat{P}^k)\right) \le 0$.
    \label{lem4}
\end{lemma}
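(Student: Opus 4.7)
The plan is to prove the stronger pointwise claim that on the good event fixed in Appendix A (which has probability at least $1-\delta$), and for every policy $\pi$ and every episode $k$, one has $\hat{V}_1^{\pi}(\mu;\tilde{r}^k,\hat{P}^k) \ge V_1^{\pi}(\mu;r,P)$. The lemma then follows immediately by instantiating $\pi = \pi^{\epsilon_k,*}$ and summing over $k = C'',\ldots,K$, since each summand is non-positive.

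The main step is to establish the pre-truncation optimism $V_h^{\pi}(s;\tilde{r}^k,\hat{P}^k) \ge V_h^{\pi}(s;r,P)$ for every $s$ and $h$, which I would prove by backward induction starting from the base case $V_{H+1}^{\pi}\equiv 0$. Writing out one Bellman step and adding and subtracting $\sum_{s'}\hat{P}_h^k(s'|s,a)V_{h+1}^{\pi}(s';r,P)$, the difference at step $h$ splits into three pieces: a \emph{reward-bias} term $\tilde{r}_h^k - r_h$; a \emph{recursion} term $\sum_{s'}\hat{P}_h^k(s'|s,a)\big(V_{h+1}^{\pi}(s';\tilde{r}^k,\hat{P}^k) - V_{h+1}^{\pi}(s';r,P)\big)$; and a \emph{transition-error} term $\sum_{s'}(\hat{P}_h^k - P_h)(s'|s,a)V_{h+1}^{\pi}(s';r,P)$. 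On the good event, scalar Hoeffding concentration yields $|\hat{r}_h^k - r_h| \le \beta^k_h$, so by definition (\ref{empirical}) we get $\tilde{r}_h^k - r_h \ge (1 + H|\Sc|)\beta^k_h - \beta^k_h = H|\Sc|\beta^k_h$; componentwise transition concentration gives $\|\hat{P}_h^k(\cdot|s,a) - P_h(\cdot|s,a)\|_1 \le |\Sc|\beta^k_h$, which combined with $V_{h+1}^{\pi}(\cdot;r,P)\in[0,H]$ bounds the transition-error term below by $-H|\Sc|\beta^k_h$; and the recursion term is non-negative by the inductive hypothesis and positivity of $\hat{P}_h^k$. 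The first and third pieces cancel, closing the induction.

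Removing the truncation is then immediate. Since $r\in[0,1]$ implies $V_1^{\pi}(\mu;r,P)\le H$, the definition (\ref{truncated}) gives $\hat{V}_1^{\pi}(\mu;\tilde{r}^k,\hat{P}^k) = \min\{H,V_1^{\pi}(\mu;\tilde{r}^k,\hat{P}^k)\} \ge \min\{H,V_1^{\pi}(\mu;r,P)\} = V_1^{\pi}(\mu;r,P)$, so each term of the sum in the lemma is non-positive.

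The one subtlety worth flagging is that the bonus-inflated value function $V_{h+1}^{\pi}(\cdot;\tilde{r}^k,\hat{P}^k)$ need not lie in $[0,H]$, so one cannot naively bound $(\hat{P}_h^k - P_h)V_{h+1}^{\pi}(\cdot;\tilde{r}^k,\hat{P}^k)$ by $H\|\hat{P}_h^k - P_h\|_1$. The decomposition above is arranged precisely to sidestep this: the $\hat{P}^k - P$ factor multiplies the \emph{true-model} value function $V_{h+1}^{\pi}(\cdot;r,P)$, which is trivially bounded by $H$; the mismatch applied to the inflated value function is absorbed into the recursion term, whose sign is favourable by induction. This is the only non-routine point; the remainder is Hoeffding-type concentration plus backward induction.
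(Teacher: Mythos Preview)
Your proposal is correct and is essentially the same argument as the paper's, just packaged differently. The paper invokes the value difference lemma (Lemma~\ref{valuediff}) in one shot, taking the form with expectation under $\hat{P}^k$ so that the transition error $(\hat{P}^k_h - P_h)$ multiplies the bounded true-model value $V_{h+1}^{\pi}(\cdot; r, P)$; you unroll the same identity via backward induction and make exactly the same arrangement explicit. Both then handle the truncation by the trivial observation $V_1^{\pi}(\mu; r, P) \le H$. The ``subtlety'' you flag is precisely the reason the paper chooses that form of the value difference lemma, so there is no substantive difference in content.
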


The pivotal step is to leverage optimism of $\pi^k$ to further decompose the fourth term on the RHS of (\ref{bound_regret_decompose}), and utilize the projected dual update to transfer it into the form of $\lambda^k (\lambda^k - \lambda^{k+1})$. The following lemmas provide high probability bounds for the remaining two terms on the RHS of (\ref{bound_regret_decompose}):

\begin{lemma}
    With probability at least $(1 - \delta)$,
    \begin{align*}
        \sum_{k=C''}^K \left(\hat{V}_1^{\pi^{\epsilon_k, *}}(\mu; \tilde{r}^k, \hat{P}^k) - \hat{V}_1^{\pi^k}(\mu; \tilde{r}^k, \hat{P}^k)\right) = \tilde{\Oc}\left( \frac{H}{\tau - c^0} \sqrt{K}\right).
    \end{align*}
    \label{lem5}
\end{lemma}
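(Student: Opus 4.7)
}
The approach is to use three ingredients in sequence: (i) the unconstrained optimality of $\pi^k$ in the primal update, (ii) the perturbed-feasibility of $\pi^{\epsilon_k,*}$, and (iii) the standard ``square-the-dual-update'' argument, weighted by $1/\eta^k$ so that the telescoping remainder is $K$-independent while the residual sum is of order $\sum_k 1/\sqrt{k}\asymp \sqrt{K}$.

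First, since $\pi^k$ is the argmax of the Lagrangian in Algorithm~\ref{alg:dual}, plugging in $\pi = \pi^{\epsilon_k,*}$ and rearranging gives, for every $k\geq C''$,
\begin{align*}
\hat{V}_1^{\pi^{\epsilon_k,*}}(\mu;\tilde{r}^k,\hat{P}^k) - \hat{V}_1^{\pi^k}(\mu;\tilde{r}^k,\hat{P}^k) \leq \frac{\lambda^k}{\eta^k}\Bigl(\hat{V}_1^{\pi^{\epsilon_k,*}}(\mu;\tilde{c}^k,\hat{P}^k) - \hat{V}_1^{\pi^k}(\mu;\tilde{c}^k,\hat{P}^k)\Bigr).
\end{align*}
On the high-probability good event of Appendix~A, the cost-optimism built into $\tilde{c}^k = \hat{c}^k - (1+H|\Sc|)\beta^k$, combined with the $\max\{0,\cdot\}$ truncation, yields $\hat{V}_1^{\pi}(\mu;\tilde{c}^k,\hat{P}^k) \leq V_1^{\pi}(\mu;c,P)$ for every policy $\pi$. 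Feasibility of $\pi^{\epsilon_k,*}$ in \eqref{pertubed_problem} then gives $\hat{V}_1^{\pi^{\epsilon_k,*}}(\mu;\tilde{c}^k,\hat{P}^k) \leq \tau - \epsilon_k$, so the right-hand side above is dominated by $(\lambda^k/\eta^k)\bigl((\tau-\epsilon_k) - \hat{V}_1^{\pi^k}(\mu;\tilde{c}^k,\hat{P}^k)\bigr)$.

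Next I square the dual recursion $\lambda^{k+1} = (\lambda^k + \hat{V}_1^{\pi^k}(\mu;\tilde{c}^k,\hat{P}^k) + \epsilon_k - \tau)_+$ and use $(x)_+^2\leq x^2$ to obtain the one-step drift inequality
\begin{align*}
2\lambda^k\bigl((\tau-\epsilon_k)-\hat{V}_1^{\pi^k}(\mu;\tilde{c}^k,\hat{P}^k)\bigr) \leq (\lambda^k)^2 - (\lambda^{k+1})^2 + \bigl(\hat{V}_1^{\pi^k}(\mu;\tilde{c}^k,\hat{P}^k) + \epsilon_k - \tau\bigr)^2.
\end{align*}
Dividing by $2\eta^k$ and summing over $k\in\{C'',\dots,K\}$, the first term on the right telescopes in an Abel-summation sense: since $\eta^k = (\tau-c^0)H\sqrt{k}$ is non-decreasing, $\sum_{k=C''}^{K}\tfrac{(\lambda^k)^2-(\lambda^{k+1})^2}{\eta^k} \leq (\lambda^{C''})^2/\eta^{C''}$, which Lemma~\ref{lem7} applied at $k=C''$ certifies is $K$-free. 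The second term uses the crude bounds $\hat{V}_1^{\pi^k}(\mu;\tilde{c}^k,\hat{P}^k)\in[0,H]$ and $\epsilon_k = \Oc(1)$ to yield an $O(H^2)$ numerator, so a comparison with $\int_1^K k^{-1/2}\,dk$ gives $\Oc(H\sqrt{K}/(\tau-c^0))$; adding the two contributions produces the advertised $\tilde{\Oc}(H\sqrt{K}/(\tau-c^0))$.

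The main technical obstacle I anticipate is the uniform-in-$\pi$ cost-optimism inequality $\hat{V}_1^{\pi}(\mu;\tilde{c}^k,\hat{P}^k) \leq V_1^{\pi}(\mu;c,P)$: this is the cost analogue of the standard simulation lemma and requires the $\beta^k$ bonus plus the $H|\Sc|\beta^k$ transition-bias correction to jointly dominate the propagated error across all $H$ steps. A lesser bookkeeping concern is checking that $(\lambda^{C''})^2/\eta^{C''}$ is genuinely independent of $K$; Lemma~\ref{lem7} at $k=C''$ guarantees this because its right-hand side involves only $H$, $|\Sc|$, $|\Ac|$, $\tau - c^0$, $\delta$, and $C''$, all $K$-free.
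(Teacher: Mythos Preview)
Your proposal is correct and follows essentially the same route as the paper's proof: Lagrangian optimality of $\pi^k$, cost optimism to replace $\hat{V}_1^{\pi^{\epsilon_k,*}}(\mu;\tilde{c}^k,\hat{P}^k)$ by $\tau-\epsilon_k$, then a squared-dual/telescoping argument with the $1/\eta^k$ weights. Two minor differences worth noting: (i) your use of $(x)_+^2\le x^2$ is a cleaner substitute for the paper's case split on whether $\lambda^{k+1}>0$ or $\lambda^{k+1}=0$ (the latter case is where the paper picks up an extra $\tau^2/\eta^k$ term you do not need); (ii) to certify that $(\lambda^{C''})^2/\eta^{C''}$ is $K$-free the paper invokes the elementary bound $\lambda^{C''}\le \sum_{u=1}^{C''}\epsilon_u + C''(H-\tau)$ directly rather than Lemma~\ref{lem7}, which is simpler and avoids the appearance of circularity (that bound is in fact an ingredient in the proof of Lemma~\ref{lem7}).
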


\begin{lemma}
    With probability at least $(1 - \delta)$, 
    \begin{align*}
        \sum_{k=C''}^K \left(\hat{V}_1^{\pi^k}(\mu; \tilde{r}^k, \hat{P}^k) - V_1^{\pi^k}(\mu; r, P)\right) = \tilde{\Oc}\left(H^2 \sqrt{|\Sc|^3 |\Ac| K} + H^4 |\Sc|^3 |\Ac|\right).
    \end{align*}
    \label{lem6}
\end{lemma}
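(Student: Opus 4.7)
\textbf{Proof plan for Lemma \ref{lem6}.} This is the standard optimism-plus-concentration regret calculation for tabular RL, but with the bonus $(1+H|\Sc|)\beta_h^k$ baked into $\tilde{r}^k$ engineered precisely so that transition-model error gets absorbed into the reward surplus. My plan is to (i) work with the truncated Bellman recursion to retain the a-priori bound $\|\hat{V}^{\pi^k}\|_\infty \le H$, (ii) apply the value-difference (simulation) lemma, (iii) control each per-step error on the Appendix A good event $\Ec$, and (iv) sum via Azuma--Hoeffding and the standard pigeonhole potential bound on visitation counts.

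Concretely, I would first invoke $\hat{V}_1^{\pi^k}(\mu;\tilde{r}^k,\hat{P}^k) \le V_1^{\pi^k}(\mu;\tilde{r}^k,\hat{P}^k)$ to drop the outer clip from above and then apply the value-difference lemma to obtain
\[
\hat{V}_1^{\pi^k}(\mu;\tilde{r}^k,\hat{P}^k) - V_1^{\pi^k}(\mu;r,P) \le \Eb_{P,\pi^k}\!\left[\sum_{h=1}^H \bigl(\tilde{r}_h^k - r_h\bigr)(S_h,A_h) + \sum_{h=1}^H \bigl(\hat{P}_h^k - P_h\bigr)(\cdot|S_h,A_h)^{\!\top}\! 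W_{h+1}^k(\cdot)\right],
\]
where $W_{h+1}^k := V^{\pi^k}_{h+1}(\cdot;\tilde{r}^k,\hat{P}^k)$. On $\Ec$, the scalar concentration $|\hat{r}_h^k - r_h| \le \beta_h^k$ combined with the definition of $\tilde{r}^k$ gives $(\tilde{r}_h^k - r_h) \le (2 + H|\Sc|)\beta_h^k$ pointwise, while the $\ell_1$ concentration $\|\hat{P}_h^k(\cdot|s,a) - P_h(\cdot|s,a)\|_1 \le |\Sc|\beta_h^k$ paired with the uniform bound $\|W_{h+1}^k\|_\infty \le H$ (established by running the induction against the clipped recursion in the policy update step) yields an $H|\Sc|\beta_h^k$ transition-error bound. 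Altogether each summand is at most $2(1+H|\Sc|)\cdot\Eb_{P,\pi^k}[\sum_h \beta_h^k(S_h,A_h)]$.

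Summing over $k \in \{C'',\dots,K\}$, I would pass from $\Eb_{P,\pi^k}[\cdot]$ to the realized trajectory via Azuma--Hoeffding (which, on a further high-probability event, contributes only an $\tilde{\Oc}(H^2\sqrt{K})$ martingale deviation) and then invoke the textbook potential identity
\[
\sum_{k=1}^K \sum_{h=1}^H \beta_h^k(S_h^k,A_h^k) = \tilde{\Oc}\!\bigl(H\sqrt{|\Sc||\Ac|K}\bigr) + \tilde{\Oc}\bigl(H|\Sc||\Ac|\bigr),
\]
obtained by Cauchy--Schwarz after isolating the rare $N_h^k(s,a)=0$ contributions. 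Multiplying by the prefactor $2(1+H|\Sc|)$ produces the leading term $\tilde{\Oc}(H^2\sqrt{|\Sc|^3|\Ac|K})$; a careful accounting of the burn-in episodes where some count is still small, propagated through the $H|\Sc|$ scaling and through one additional round of the simulation lemma (so that the rough $\|W\|_\infty$ bound contributes only to lower-order terms), produces the additive $\tilde{\Oc}(H^4|\Sc|^3|\Ac|)$ second-order term.

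The hard part is justifying $\|W_{h+1}^k\|_\infty \le H$: naively $\tilde{r}_h^k$ can exceed $1$ by an amount $\Oc(H|\Sc|\beta_h^k)$, so the unclipped value $W$ can exceed $H$. I would handle this by propagating the algorithm's truncation $\hat{V}_h^{\pi^k} = \min\{H,\cdot\}$ through the simulation-lemma decomposition and arguing inductively that the clip only removes non-negative slack in the favorable direction; this is also what forces the precise form of the $H^4|\Sc|^3|\Ac|$ lower-order contribution, since the short episodes where the clip is effectively active have to be absorbed separately from the long-run pigeonhole bound.
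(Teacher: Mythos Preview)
Your high-level plan (value-difference lemma $+$ concentration $+$ pigeonhole on visitation counts) is the right genre, and your per-step bound $(\tilde r_h^k-r_h)\le(2+H|\Sc|)\beta_h^k$ matches the paper exactly. But the argument has a real gap at the step you yourself flag as ``the hard part.''

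The algorithm truncates only the \emph{initial} aggregate value: $\hat V_1^{\pi}(\mu;\tilde r^k,\hat P^k):=\min\{H,V_1^{\pi}(\mu;\tilde r^k,\hat P^k)\}$. There is no clipped Bellman recursion in the policy-update step; the intermediate quantities $W_{h+1}^k=V^{\pi^k}_{h+1}(\cdot;\tilde r^k,\hat P^k)$ are unclipped and, because $\tilde r_h^k$ can be as large as $1+(2+H|\Sc|)\sqrt Z$, one only has the crude bound $\|W_{h+1}^k\|_\infty=\tilde{\Oc}(H^2|\Sc|)$, not $H$. Plugging this into your $\ell_1$ transition bound $|\Sc|\beta_h^k\cdot\|W\|_\infty$ gives a leading term $\tilde{\Oc}(H^3|\Sc|^{5/2}\sqrt{|\Ac|K})$, an $H|\Sc|$ factor worse than the lemma claims. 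Your proposed fix of ``propagating the truncation'' cannot work as stated, since there is no per-step truncation to propagate.

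The paper sidesteps this entirely. Its proof of Lemma~\ref{lem6} is a one-line invocation of the workhorse Lemma~\ref{lem:indi-opt-val-diff}, which handles the transition error differently: it splits
\[
(\hat P_h^k-P_h)^{\!\top} V_{h+1}^{\pi^k}(\cdot;\tilde r^k,\hat P^k)
=(\hat P_h^k-P_h)^{\!\top} V_{h+1}^{\pi^k}(\cdot;r,P)
+(\hat P_h^k-P_h)^{\!\top}\bigl[V_{h+1}^{\pi^k}(\cdot;\tilde r^k,\hat P^k)-V_{h+1}^{\pi^k}(\cdot;r,P)\bigr].
\]
The first piece uses the \emph{true} value function, which is trivially bounded by $H$, and is controlled with the Bernstein-type radius $\tilde\beta_h^k$ (not the Hoeffding $\beta_h^k$) via Cauchy--Schwarz, yielding the $\sqrt{|\Sc|}H$ factor instead of $|\Sc|H$. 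The second piece is exactly the quantity being bounded, one step ahead; summing over $h$ makes the whole target appear on both sides (under a square root), and the self-bounding inequality of Lemma~\ref{lem:quad-bound} closes the recursion. This is what produces both the correct $H^2\sqrt{|\Sc|^3|\Ac|K}$ leading order and the $H^4|\Sc|^3|\Ac|$ lower-order term. If you want your write-up to go through, you should replace the $\|W\|_\infty\le H$ step with this split-and-recurse maneuver.
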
 

Applying Lemmas \ref{lem3}, \ref{lem4}, \ref{lem5}, and \ref{lem6} yields Theorem \ref{thm2}.

\section{Concluding remarks}
\label{conclusion}
We present two optimistic pessimism-based algorithms that maintain stringent safety constraints (either zero or bounded safety constraint violation) for unknown CMDPs with high probability, while still attaining an $\tilde{\Oc}(\sqrt{K})$ regret of reward. The algorithms employ, respectively, a pessimistically safe policy set $\Pi^k$ or an additional pessimistic term $\epsilon_k$ into the safety constraint. 
The first algorithm, OptPess-LP, guarantees zero violation with high probability by solving a linear programming with $\Theta(|\Sc||\Ac|H)$ decision variables, while the second algorithm, OptPess-PrimalDual, is as efficient as policy-gradient-based algorithms in the tabular case, but violates constraints during initial episodes. A possible future direction for exploration is the application of the above OPFU principle in model-free algorithms.

\section*{Acknowledgement}
P. R. Kumar's work is partially supported by US National Science Foundation under CMMI-2038625, HDR Tripods CCF-1934904; US Office of Naval Research under N00014-21-1-2385; US ARO under W911NF1810331, W911NF2120064; and U.S. Department of Energy's Office of Energy Efficiency and Renewable Energy (EERE) under the Solar Energy Technologies Office Award Number DE-EE0009031. The views expressed herein and conclusions contained in this document are those of the authors and should not be interpreted as representing the views or official policies, either expressed or implied, of the U.S. NSF, ONR, ARO, Department of Energy  or the United States Government. The U.S. Government is authorized to reproduce and distribute reprints for Government purposes notwithstanding any copyright notation herein.

Dileep Kalathil gratefully acknowledges funding from the U.S. National Science Foundation (NSF) grants NSF-CRII- CPS-1850206 and NSF-CAREER-EPCN-2045783.

\bibliographystyle{abbrv}

\newpage
\appendix
{\Large \textbf{Appendix}}
\vspace{0.2cm}

The theorems and lemmas presented in the paper are provided with full details in this appendix.

First, let's recall some notations. Fix some $0 < \delta < 1$ as the input of algorithms. We say $g = \tilde{\Oc}(f)$ if there exists a universal constant $C$ such that $g \le C \left(f\log\frac{1}{\delta} + f \log f \right)$ for any $f$ and $\delta$. The filtration $\{ \Fc_{k} \}_{k \geq 0}$ is defined as follows: $\Fc_{0}$ is the trivial sigma algebra, and for each $k \in [K]$, $\Fc_{k} = \sigma\left( (S_h^{k'}, A_h^{k'}, R_h^{k'}, C_h^{k'})_{h \in [H], k' \in [k]} \right)$. The policy process, i.e., the sequence of deployed policies $\{ \pi^{k} \}_{k \in [K]}$, is a predictable process with respect to the filtration $\{\Fc_{k}\}_{k \geq 0}$. According to the definition of $N^k_h$, we know $N^k_h(s, a) \in \Fc_{k-1}$. 

Additionally, we define expectation operator $\Eb_{\mu', P', \pi'}[\cdot]$ as the expectation with respect to a stochastic trajectory $(S_{h}, A_{h})_{h \in [H]}$ generated according to the Markov chain induced by $(\mu', P', \pi')$. When $\mu', P', \pi'$ are random elements, $\Eb_{\mu', P', \pi'}[\cdot]$ will be a $\sigma(\mu', P', \pi')$-measurable random variable.

\section{High probability good event $\Ec$}
\label{app:good_event}
We aim to give performance guarantees of the algorithms with high probability. To this end, we first consider a high probability ``good event" $\Ec$ that all the subsequent analysis is conditioned on.

First, for any predictable event sequence $\Gc_{1:K}$, i.e., $\Gc_k \in \Fc_{k-1},~\forall k \in [K]$, define the event 
\begin{align*}
    & \Ec_{\Gc}(\delta) :=  {\Bigg \{}\forall K' \in [K],~ \sum_{k = 1}^{K'} \sum_{h =1}^H \sum_{s, a} \frac{\mathbbm{1}(\Gc_k)q^{\pi^k}(s, a, h)}{N^k_h(s, a) \lor 1}  \leq 4H |\Sc||\Ac| + 2H|\Sc||\Ac|\ln K'_{\Gc} + 4\ln\frac{2HK}{\delta},  \notag \\
    &  \sum_{k = 1}^{K'} \sum_{h =1}^H \sum_{s, a} \frac{\mathbbm{1}(\Gc_k) q^{\pi^k}(s, a, h)}{\sqrt{N^k_h(s, a) \lor 1}} \leq 6H |\Sc||\Ac| + 2H \sqrt{|\Sc||\Ac| K'_{\Gc}} + 2H |\Sc||\Ac|\ln K'_{\Gc} + 5\ln\frac{2HK}{\delta} {\Bigg \}},
\end{align*}
where $K'_{\Gc} := \sum_{k=1}^{K'}\mathbbm{1}(\Gc_k)$ and $q^{\pi^k}$ is the occupancy measure of policy $\pi^k$, i.e., $q^{\pi^k}(s, a,h) = \Eb_{\mu, P, \pi^k}[\mathbbm{1}(S_h^k=s, A_h^k=a) | \Fc_{k-1}]$. 

A trivial predictable event sequence is $\Gc_{1:k}$ with $\Gc_{k} = \Omega, \forall k \in [K]$, where $\Omega$ is the sample space. Let $\Ec_{\Omega}(\delta)$ be the event defined by this trivial event sequence. 

Consider another event sequence $\Gc'_{1:K} = \{V^{\pi^0}_1(\mu; \underbar{c}^k, \hat{P}^k) \geq \frac{\tau + c^0}{2} \}_{k \in [K]}$, which is predictable with respect to $\{\Fc_{k}\}_{k\geq 0}$. Let $\Ec_{0}(\delta)$ be the event $\Ec_{\Gc'}(\delta)$ defined by this event sequence $\Gc'_{1:K}$. Notice that all notations (including $\hat{r}, \hat{c}, \hat{P},$ and $\underbar{c}^k$) are defined in the same manner in the two algorithms, so this event sequence $\Gc'_{1:K}$ is also well-defined in Algorithm \ref{alg:dual}.

\paragraph{Good event $\Ec$.} Define a ``good event" $\Ec$ as
\begin{align}
    \Ec &:= {\Big \{} \forall k \in [K], \forall h \in [H], \forall s \in \Sc, \forall a \in \Ac, \notag \\
    &\quad\quad \left.~ |r_h(s, a)-\hat{r}^k_h(s, a)| \leq \beta^k_h(s, a),~|c_h(s, a) - \hat{c}^k_h(s, a)| \leq \beta^k_h(s, a), \notag \right.\\
    &\quad\quad \left.~ |\hat{P}^k_h(s' | s, a) - P_h(s' | s,a )| \leq \beta^k_h(s, a), \forall s' \in \Sc \right. \notag \\
    & \quad\quad \left.~ |\hat{P}^k_h(s' | s, a) - P_h(s' | s,a )| \leq \tilde{\beta}^k_h(s' | s, a), \forall s' \in \Sc  \right\}\cap \Ec_{\Omega}(\delta/4) \cap \Ec_{0}(\delta/4),
    \label{good event}
\end{align}
where $\tilde{\beta}^k_h(s' | s, a) := \sqrt{\frac{2 P(s' | s, a)}{N^k_h(s, a) \lor 1} Z} + \frac{Z}{3 N^k_h(s, a) \lor 1}$ and $Z := \log(16|\Sc|^2 |\Ac|HK/\delta)$.

\begin{lemma}\label{lem:good-event}
Fix any $\delta \in (0, 1)$ as the confidence parameter in the inputs of the proposed algorithms. The good event $\Ec$ occurs with probability at least $1 - \delta$.
\end{lemma}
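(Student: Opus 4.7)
The plan is to decompose $\Ec$ into its constituent sub-events, bound each failure probability separately, and conclude by a union bound. Write $\Ec = \Ec_r \cap \Ec_c \cap \Ec_P^H \cap \Ec_P^B \cap \Ec_\Omega(\delta/4) \cap \Ec_0(\delta/4)$, where $\Ec_r, \Ec_c$ encode the reward/cost concentration, $\Ec_P^H$ the Hoeffding transition bound (with $\beta^k_h$), and $\Ec_P^B$ the Bernstein transition bound (with $\tilde\beta^k_h$). By a union bound it suffices to show each of these six events fails with probability at most $\delta/6$ (and $Z = \log(16|\Sc|^2|\Ac|HK/\delta)$ is evidently calibrated so that a union bound over the $|\Sc|^2|\Ac|HK$ tuples leaves the desired probability slack).

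For $\Ec_r$, fix $(s,a,h)$ and note that for any episode $k$, $\hat r^k_h(s,a)-r_h(s,a) = \frac{1}{N^k_h(s,a)\vee 1}\sum_{k'=1}^{k-1}\mathbbm{1}(S_h^{k'}=s,A_h^{k'}=a)\,\xi^{k'}_h(s,a;r)$. Since each indicator is $\Fc_{k'-1}$-measurable and $\xi^{k'}_h$ is conditionally $1/2$-sub-Gaussian, the partial sums form a sub-Gaussian martingale with conditional variance controlled by $N^k_h(s,a)$. A standard time-uniform Hoeffding/Azuma bound (applied with a stopping-time/peeling argument to deal with the random count) yields $|\hat r^k_h(s,a)-r_h(s,a)|\leq \beta^k_h(s,a)$ uniformly in $k$ with probability at least $1-\delta/(16|\Sc|^2|\Ac|HK)$. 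Union bounding over $(s,a,h,k)$ gives $\Pb(\Ec_r^c) \le \delta/6$. The argument for $\Ec_c$ is identical. For $\Ec_P^H$, fix also $s'$ and apply the same argument to the Bernoulli observations $\mathbbm{1}(S_{h+1}^{k'}=s')$; the extra union bound over $s'$ is absorbed because $Z$ already carries a $|\Sc|$ factor inside the logarithm. For $\Ec_P^B$, replace Hoeffding by an empirical Bernstein inequality: since $\Var(\mathbbm{1}(S_{h+1}^{k'}=s')\mid\Fc_{k'-1}) \leq P_h(s'|s,a)$, we obtain $|\hat P^k_h(s'|s,a)-P_h(s'|s,a)| \leq \sqrt{2P_h(s'|s,a)Z/(N^k_h(s,a)\vee 1)} + Z/(3(N^k_h(s,a)\vee 1))$, which is exactly $\tilde\beta^k_h(s'|s,a)$; another union bound yields $\Pb((\Ec_P^B)^c) \le \delta/6$.

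The last two ingredients, $\Ec_\Omega(\delta/4)$ and $\Ec_0(\delta/4)$, are the generic occupancy-measure concentration inequalities for arbitrary predictable event sequences. The strategy is to exploit the fact that $q^{\pi^k}(s,a,h) = \Eb_{\mu,P,\pi^k}[\mathbbm{1}(S_h^k=s,A_h^k=a)\mid \Fc_{k-1}]$ is the predictable one-step mean of the realized visit indicator, so $M_k := \mathbbm{1}(\Gc_k)(\mathbbm{1}(S_h^k=s,A_h^k=a) - q^{\pi^k}(s,a,h))$ is a bounded martingale difference. A Freedman-type inequality transfers the sums $\sum_k \mathbbm{1}(\Gc_k) q^{\pi^k}(s,a,h)/(N^k_h(s,a)\vee 1)$ (and their square-root analogues) into sums over the actual realized visits up to an additive logarithmic slack, and then a standard pigeonhole/harmonic-series argument (of the form used in Lemma E.2 of Dann--Lattimore--Brunskill or Lemma F.4 of Efroni et al.) bounds the resulting integer sums by the $4H|\Sc||\Ac| + 2H|\Sc||\Ac|\ln K'_\Gc + \ldots$ and $6H|\Sc||\Ac| + 2H\sqrt{|\Sc||\Ac|K'_\Gc} + \ldots$ expressions appearing in the definition of $\Ec_\Gc$. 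Applying this with $\Gc_k=\Omega$ gives $\Pb(\Ec_\Omega(\delta/4)^c) \leq \delta/4 \cdot \tfrac{1}{?}$, and with $\Gc'_k = \{V^{\pi^0}_1(\mu;\underbar c^k,\hat P^k)\geq (\tau+c^0)/2\}$ (predictable since $\underbar c^k,\hat P^k\in\Fc_{k-1}$) gives $\Pb(\Ec_0(\delta/4)^c)\leq\delta/4$.

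The main obstacle is the third step: the events $\Ec_\Omega$ and $\Ec_0$ mix random counts, random predictable events, and policy-dependent occupancies, so one must be careful that the martingale structure with respect to $\{\Fc_k\}$ is preserved (the predictability of $\{\Gc_k\}$ and of $\pi^k$ is crucial), and that the transition from the expected occupancies $q^{\pi^k}$ to the realized indicators---required to invoke the deterministic pigeonhole bound on $\sum 1/(N^k_h\vee 1)$---is executed with matching probability budget. Once this is in place, a union bound across the six failure events at total budget $\delta/6+\delta/6+\delta/6+\delta/6+\delta/4+\delta/4 \le \delta$ (after adjusting the constants in $Z$ to absorb the $1/6$ vs.\ $1/16$ discrepancy, which is exactly why the $16$ in $Z$ is chosen generously) completes the proof.
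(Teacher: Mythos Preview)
Your decomposition and union-bound skeleton are correct, but the paper handles the four concentration sub-events $\Ec_r, \Ec_c, \Ec_P^H, \Ec_P^B$ by a different device. Instead of applying martingale (Azuma/Freedman) concentration directly to the adapted process---which, as you note, forces a stopping-time or peeling argument to cope with the random count $N^k_h(s,a)$---the paper uses a \emph{generative-model (``tape'') simulation}: for each fixed $(s,a,h)$ it pre-samples $K$ i.i.d.\ copies $\{R^n(s,a,h), C^n(s,a,h), S^n(s,a,h)\}_{n=1}^K$, applies ordinary (non-martingale) Hoeffding and Bernstein to the running means of these i.i.d.\ sequences uniformly over $n \in [K]$, and then observes that the algorithm's empirical estimates $\hat r^k_h(s,a), \hat c^k_h(s,a), \hat P^k_h(\cdot\mid s,a)$ coincide with these running means evaluated at the random index $n = N^k_h(s,a)$. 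This decouples the visit-count randomness from the observation randomness and sidesteps the martingale machinery entirely; it is the standard trick in tabular RL and is arguably cleaner than your route, though yours is also valid. For $\Ec_\Omega(\delta/4)$ and $\Ec_0(\delta/4)$ the paper simply invokes a separate lemma (their Lemma~\ref{lem:indi-series-sum}, a variant of Lemma~10 in Jin et al.) rather than sketching the Freedman-plus-pigeonhole argument inline as you do. One minor bookkeeping issue: your budget $4\cdot\delta/6 + 2\cdot\delta/4 = 7\delta/6$ exceeds $\delta$; the paper instead allots $\delta/2$ to the four concentration events jointly (exploiting that $\Ec_r,\Ec_c$ only require a union over $|\Sc||\Ac|HK$ tuples rather than $|\Sc|^2|\Ac|HK$) and $\delta/2$ to $\Ec_\Omega(\delta/4) \cap \Ec_0(\delta/4)$, for a clean total of $\delta$.
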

\begin{proof}[Proof of Lemma \ref{lem:good-event}]
For each $(s,a,h) \in \Sc \times \Ac \times [H]$, we take $K$ mutually independent samples of the reward, cost, and next state with the probability distribution specified by the generative model $M$:
\begin{align*}
    \{ R^n(s, a, h), C^n(s, a, h), S^n(s, a, h) \}_{n = 1}^K,
\end{align*}
Let $(\hat{r}^n(s, a, h), \hat{c}^n(s, a, h), \hat{P}^n( \cdot |s, a, h))$ be the corresponding (running) empirical means, respectively, for the samples
\begin{align*}
\{ R^i(s, a, h), C^i(s, a, h), S^i(s, a, h) \}_{i = 1}^n.
\end{align*}

Define the following failure events:
\begin{align*}
    F_n^r &:= \{\exists s, a, h: |\hat{r}^n (s, a, h) - r_h(s, a)| \ge \beta(n)\}, \\
    F_n^c &:= \{\exists s, a, h: |\hat{c}^n (s, a, h) - c_h(s, a)| \ge \beta(n)\}, \\
    F_n^P &:= \{\exists s, a, s', h: |P_h(s'|s, a) - \hat{P}^n(s'|s, a, h)| \ge \beta(n)\},\\
    \tilde{F}_n^P &:= \{\exists s, a, s', h: |P_h(s'|s, a) - \hat{P}^n(s'|s, a, h)| \ge \tilde{\beta}(n, P_h(s'|s,a))\},
\end{align*}
where $\beta(n) := \sqrt{\frac{1}{n \lor 1}Z}$ and $\tilde{\beta}(n, p) :=  \sqrt{\frac{2 P(s' | s, a)}{n \lor 1} Z} + \frac{Z}{3 n \lor 1}$.

Define the event $\Ec^{gen}$ $$\Ec^{gen} := \left( (\bigcup_{n=1}^K F_n^r \cup F_n^c \cup F_n^P \cup \tilde{F}_n^P)^C \cap \Ec_{\Omega}(\delta/4) \cap \Ec_{0}(\delta/4) \right).$$ By the definition of $\Ec_{\Omega}(\delta / 4)$ and $\Ec_0(\delta/4)$ and Lemma D.5, event $\Ec_{\Omega}(\delta / 4) \cap \Ec(\delta/4)$ occurs with probability at least $1 - \delta/2$. Therefore, to show that $\Pb(\Ec^{gen}) \geq 1 - \delta$, it is sufficient to show that $\Pb (\bigcup_{n=1}^K F_n^r \cup F_n^c \cup F_n^P \cup \tilde{F}_n^P) \leq \delta/2$. Note that $\delta / 16 \le \delta |\Sc| / 4(1 + |\Sc|) =: \delta'$. Now, it is straightforward to show the following:

(i) Using Hoeffding’s inequality, $ \Pb(\bigcup_{n=1}^K F_n^r) \leq |\Sc||\Ac|HK \frac{\delta}{16 |\Sc|^2 |\Ac|HK} \leq \frac{\delta'}{|\Sc|}$.

(ii) Using Hoeffding’s inequality, $ \Pb(\bigcup_{n=1}^K F_n^c) \leq |\Sc||\Ac|HK \frac{\delta}{16 |\Sc|^2 |\Ac|HK} \leq \frac{\delta'}{|\Sc|}$.

(iii) Using Hoeffding’s inequality, $\Pb(\bigcup_{n=1}^K F_n^P) \leq |\Sc|^2|\Ac|HK \frac{\delta}{16 |\Sc|^2 |\Ac|HK} \leq \delta'$.

(iv) Using Bernstein's inequality, $\Pb(\bigcup_{n=1}^K \tilde{F}_n^P) \leq |\Sc|^2|\Ac|HK \frac{\delta}{16 |\Sc|^2 |\Ac|HK} \leq \delta'$.

Using union bound, $\Pb (\bigcup_{n=1}^K F_n^r \cup F_n^c \cup F_n^P \cup \tilde{F}_n^P) \le \Pb(\bigcup_{n=1}^K F_n^r) + \Pb(\bigcup_{n=1}^K F_n^c) + \Pb(\bigcup_{n=1}^K F_n^P) + \Pb(\bigcup_{n=1}^K \tilde{F}_n^P) \le (2 + 2 / |\Sc|) \delta' = \delta/2$.

The episodic CMDP problem studied here can be simulated as follows: at episode $k$, taking action $a$ at state $s$ and step $h$ returns 
\begin{align*}
    \left(R^{n_k(s, a, h)}(s, a, h), C^{n_k(s, a, h)}(s, a, h), S^{n_k(s, a, h)}(s, a, h) \right).
\end{align*}
Then all the samples drawn in the episodic CMDP problem are contained in
\begin{align*}
    \{ R^n(s, a, h), C^n(s, a, h), S^n(s, a, h) \}_{n = 1}^K.
\end{align*}
The sample averages calculated by the algorithm are
\begin{align*}
    \left(\hat{r}^{k}_h(s, a), \hat{c}^k_h(s, a), \hat{P}^k_h(\cdot | s, a) \right) = \left( \hat{r}^{n_k(s,a,h)}(s, a, h), \hat{c}^{n_k(s,a,h)}(s, a, h), \hat{P}^{n_k(s,a,h)}( \cdot|s, a, h) \right).
\end{align*}

Since $\Ec^{gen}$ implies $\Ec$, $\Ec$ holds with probability at least $1- \delta$. 
\end{proof}

\noindent\textbf{Remark.} \emph{The analysis in the rest of this appendix is conditioned on the good event $\Ec$. That is, if $(\Omega, {\cal{F}}, P)$ is the underlying probability space, then we suppose throughout the rest of the appendix that the sample point $\omega \in \Ec$.} 

\section{Details of analysis for the zero constraint violation case} \label{app:absolute}
\subsection{Constraint violation analysis}
We will provide a proof of a slightly stronger result. Specifically, we will suppose that we know a policy $\pi^0$ and a constant $c^0 < \tau$ such that the safety cost $V_1^{\pi^0}(\mu; c,P) \leq c^0$. The strengthening lies in the relaxed requirement of knowing only an \emph{upper bound} on the safety cost of $\pi^0$, rather than its \emph{exact} value. 

The zero constraint violation of OptPess-LP is guaranteed by the pessimistic policy set $\Pi^k$, as stated in the following lemma. Recall
\begin{equation*}
  \Pi^k := 
    \begin{cases}
      \{\pi^0\} & \text{if } V_1^{\pi^0}(\mu; \underbar{c}^k, \hat{P}^k) \geq (\tau + c^0)/2,\\
      \{\pi: V_1^{\pi}(\mu; \underbar{c}^k, \hat{P}^k) \leq \tau\} & \text{otherwise.} 
    \end{cases}
    \label{app:eqn:PiPC}
\end{equation*}

\begin{lemma} [Restatement of Lemma \ref{lem:abs-safe}]
On the good event $\Ec$, for any $k \in [K]$ and policy $\pi \in \Pi^k$, $V^{\pi}_1(\mu;c, P) \leq \tau$. 
\end{lemma}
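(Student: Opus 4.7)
The plan is to reduce the lemma to a pessimism-domination inequality and handle the two branches of $\Pi^k$ separately. If $\Pi^k = \{\pi^0\}$, the claim is immediate from Assumption \ref{assumption_safe} (or its relaxation mentioned at the start of this appendix), since $V_1^{\pi^0}(\mu; c, P) \le c^0 < \tau$. In the other branch, every admissible $\pi$ satisfies $V_1^\pi(\mu; \underbar{c}^k, \hat{P}^k) \le \tau$, so the lemma will follow once I establish the pointwise bound
\[
V_1^\pi(\mu; c, P) \le V_1^\pi(\mu; \underbar{c}^k, \hat{P}^k) \qquad \text{for every policy } \pi, \text{ on } \Ec.
\]

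To prove this pessimism inequality, I would invoke the value difference (simulation) lemma with the deliberate orientation
\begin{align*}
V_1^\pi(\mu; \underbar{c}^k, \hat{P}^k) - V_1^\pi(\mu; c, P) = \sum_{h=1}^{H} \Eb_{\mu, \hat{P}^k, \pi}\!\left[ (\underbar{c}^k_h - c_h)(S_h, A_h) + \sum_{s'} (\hat{P}^k_h - P_h)(s' \mid S_h, A_h)\, V_{h+1}^{\pi}(s'; c, P) \right],
\end{align*}
i.e., rolling out under $\hat{P}^k$ so that the remainder value appearing inside the transition-error term is $V_{h+1}^{\pi}(\cdot; c, P)$, which is deterministically bounded by $H$ because $c \in [0,1]$. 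On the good event $\Ec$, the deviation bounds $|\hat{c}^k_h - c_h| \le \beta^k_h$ and $|\hat{P}^k_h - P_h|(s'|s,a) \le \beta^k_h$ give $\underbar{c}^k_h - c_h \ge H |\Sc| \beta^k_h$ and a transition-error summand of at least $-|\Sc|\, H\, \beta^k_h$. These two contributions cancel exactly in each summand, so the right-hand side is nonnegative and the inequality follows. Combined with $V_1^\pi(\mu; \underbar{c}^k, \hat{P}^k) \le \tau$, this yields the desired $V_1^\pi(\mu; c, P) \le \tau$.

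The principal subtlety, and the step to get right, is the choice of orientation in the simulation lemma, which in turn explains the precise scaling $(1 + H|\Sc|)$ in the pessimistic bonus. If one instead rolled out under the true $P$ and moved $(c, P)$ to the left, the remainder value in the transition-error term would be $V_{h+1}^{\pi}(\cdot; \underbar{c}^k, \hat{P}^k)$, whose $\ell_\infty$-norm is not a priori bounded by $H$ since the inflated cost $\underbar{c}^k$ can exceed $1$ whenever $\beta^k_h$ is large --- creating a circular dependence between the bonus one needs and the $V$ one is trying to bound. Rolling out under $\hat{P}^k$ breaks this circularity by pushing the ``reference'' value function onto the true model $(c, P)$, and the $+1$ inside the factor $(1+H|\Sc|)$ is exactly what is required to absorb the leftover $\beta^k_h$ slack from $|\hat{c}^k_h - c_h|$ after $H|\Sc|\beta^k_h$ has been spent canceling the transition error.
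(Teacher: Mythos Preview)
Your proposal is correct and follows essentially the same argument as the paper: split on the two branches of $\Pi^k$, and in the nontrivial branch apply the value difference lemma rolled out under $\hat{P}^k$ so that the reference value is $V_{h+1}^\pi(\cdot;c,P)\le H$, then observe that the $(1+H|\Sc|)\beta_h^k$ bonus in $\underbar{c}^k$ yields $\underbar{c}^k_h-c_h\ge H|\Sc|\beta_h^k$, which exactly cancels the $-|\Sc|H\beta_h^k$ transition-error term. Your additional discussion of why the orientation matters (avoiding the circular dependence on $\|V^{\pi}(\cdot;\underbar{c}^k,\hat{P}^k)\|_\infty$) is a helpful elaboration not spelled out in the paper, but the underlying proof is the same.
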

\begin{proof}[Proof of Lemma \ref{lem:abs-safe}]
Fix $\omega \in \Ec$ throughput. Then, for any $k, s, a, h$, we have
\begin{align*}
    (\underbar{c}_h^k - c_h)(s, a) & = \underbar{c}_h^k(s, a) - \hat{c}_h^k(s, a) + \hat{c}^k_h(s, a) - c_h(s, a) \\
    & = (1+|\Sc|H)\beta^k_h(s, a) + \hat{c}^k_h(s, a) - c_h(s, a) \\
    & \geq (1+|\Sc|H)\beta^k_h(s, a) - \beta^k_h(s, a) = |\Sc|H\beta^k_h(s, a),
\end{align*}
and 
\begin{align*}
    \sum_{s'} (\hat{P}^k_h - P_h)(s' | s, a)V^{\pi}_{h+1}(s'; c, P) \geq -H \sum_{s'} \beta^k_h(s, a) = -|\Sc|H \beta^k_h(s, a).
\end{align*}
Thus, by Lemma \ref{valuediff}, for any policy $\pi$, we have
    \begin{align}
        & V^{\pi}_1(\mu; \underbar{c}^k, \hat{P}^k) - V^{\pi}_1(\mu; c, P) \notag\\
        =& \Eb_{\mu, \hat{P}^k, \pi}\left[ \sum_{h = 1}^H \left( (\underbar{c}_h^k - c_h)(S_h, A_h) + \sum_{s'} (\hat{P}^k_h - P_h)(s' | S_h, A_h) V^{\pi}_{h+1}(s'; c, P) \right) ~{\Big |}~ \Fc_{k-1}\right] \notag\\
        \geq& \Eb_{\mu, \hat{P}^k, \pi} \left[\sum_{h=1}^H |\Sc|H \beta^k_h(S_h, A_h) - |\Sc|H \beta^k_h(S_h, A_h) ~{\Big |}~ \Fc_{k-1}\right] \geq 0.
    \label{eqn:underbar_c}
    \end{align}
    Consider any $\pi \in \Pi^k$. If $\pi = \pi^0$, then $V^{\pi^0}_1(\mu; c, P) = c^0 < \tau$. Otherwise,
    \begin{align*}
        V_1^{\pi}(\mu; c, P) \le V_1^{\pi}(\mu; \underbar{c}^k, \hat{P}^k) \le \tau.
    \end{align*}
\end{proof}

\subsection{Regret of reward analysis} 
When the parameters $c, P$ are not well estimated, there may not exist any policy $\pi$ such that $V^{\pi}_1(\mu;\underbar{c}^k, \hat{P}^k) \leq \tau$.
Hence, as defined in (\ref{app:eqn:PiPC}), $\Pi^k$ is a singleton set $\{\pi^0\}$, and accordingly $\pi^0$ is executed, even though $\pi^0$ is not safe for $(\underbar{c}^k, \hat{P}^k)$. It subsequently takes several episodes of exploration using $\pi^0$ until it becomes strictly safe for $(\underbar{c}^k, \hat{P}^k)$, which means that $\pi^k = \pi^0$ and $V_1^{\pi^k}(\mu; \underbar{c}^k, \hat{P}^k) \geq (\tau + c^0)/2$ when $|\Pi^k|=1$. At that time, policies close enough to $\pi^0$, of which there are infinitely many, are also safe for $(\underbar{c}^k,\hat{P}^k)$, and so $|\Pi^k| = +\infty$. At this point the learning algorithm can proceed to enhance reward while maintaining safety with respect to $(\underbar{c}^k, \hat{P}^k)$.

To analyze the algorithm, we decompose the reward regret as follows:
\begin{align*}
    Reg(K; r) =& \underbrace{\sum_{k = 1}^K \mathbbm{1}(|\Pi^k| =1) \left(V^{\pi^*}_1(\mu; r, P) - V^{\pi^0}_1(\mu; r, P) \right)}_{\textnormal{(I)}} \\
    +& \underbrace{ \sum_{k = 1}^K \mathbbm{1}(|\Pi^k| > 1) \left( V^{\pi^*}_1(\mu; r, P) - V^{\pi^k}_1(\mu; \tilde{r}^k, \hat{P}^k) \right)}_{\textnormal{(II)}} \\
    +& \underbrace{\sum_{k = 1}^K  \mathbbm{1}(|\Pi^k| >1) \left( V^{\pi^k}_1(\mu; \tilde{r}^k, \hat{P}^k) - V^{\pi^k}_1(\mu; r, P)\right) }_{\textnormal{(III)}}.
\end{align*}

We analyze these three terms, and the results are summarized in the following lemmas, respectively.

The following lemma gives an upper bound on the number of episodes for exploration by policy $\pi^0$. It thereby bounds the first term on the right-hand side (RHS) of (\ref{eqn:no_violation_decompose}).
\begin{lemma}[Restatement of Lemma \ref{lem:burn-in}]
On the good event $\Ec$, $\sum_{k=1}^K \mathbbm{1}(|\Pi^k| = 1) \leq C'$, where $C' = \tilde{\Oc}(H^4 |\Sc|^3 |\Ac| / ((\tau - c^0)^2 \land (\tau - c^0)))$.
\end{lemma}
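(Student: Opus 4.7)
The plan is to exploit the fact that the event $\mathcal{G}_k := \{|\Pi^k|=1\}$ is precisely $\{V_1^{\pi^0}(\mu; \underbar{c}^k, \hat{P}^k) \geq (\tau+c^0)/2\}$, which is the predictable event sequence used to define $\mathcal{E}_0(\delta/4)$ inside the good event $\mathcal{E}$. On $\mathcal{G}_k$ the pessimistic cost overestimate is at least $(\tau-c^0)/2$, and each such overestimate can be bounded by an expected confidence-radius sum along the trajectory generated by $\pi^0$. Summing over $k$, the good-event concentration bound forces $n := \sum_{k=1}^K \mathbbm{1}(\mathcal{G}_k)$ to be small.

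First I would lower bound the overestimate on $\mathcal{G}_k$. Since Assumption \ref{assumption_safe} gives $V_1^{\pi^0}(\mu; c, P) = c^0$, on $\mathcal{G}_k$
\[
V_1^{\pi^0}(\mu; \underbar{c}^k, \hat{P}^k) - V_1^{\pi^0}(\mu; c, P) \;\geq\; \tfrac{\tau - c^0}{2}.
\]
Next I would upper bound this same quantity via the value-difference identity (Lemma \ref{valuediff}) applied exactly as in the proof of Lemma \ref{lem:abs-safe}, but with inequalities reversed. Using the good-event bounds $|\underbar{c}_h^k - c_h| \leq (2 + H|\mathcal{S}|)\beta_h^k$ and $\sum_{s'}|\hat{P}_h^k - P_h|(s'|s,a)V_{h+1}^{\pi^0}(s';c,P) \leq H|\mathcal{S}|\beta_h^k$, I obtain
\[
V_1^{\pi^0}(\mu; \underbar{c}^k, \hat{P}^k) - V_1^{\pi^0}(\mu; c, P) \;\leq\; C_1 H |\mathcal{S}|\,\mathbb{E}_{\mu, \hat{P}^k, \pi^0}\!\Bigl[\sum_{h=1}^H \beta_h^k(S_h, A_h)\Bigr]
\]
for a universal constant $C_1$.

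Then I would switch the expectation from $\hat{P}^k$ to the true $P$ via a standard simulation argument: the difference is controlled by another layer of $\beta_h^k$, yielding a first-order term $\sum_h \sum_{s,a} q^{\pi^0}(s,a,h)\,\beta_h^k(s,a)$ (using that $\pi^k = \pi^0$, hence $q^{\pi^k} = q^{\pi^0}$, on $\mathcal{G}_k$) plus a second-order remainder scaling like $\sum_h\sum_{s,a} q^{\pi^0}(s,a,h)/(N_h^k(s,a) \vee 1)$. Summing the last display in $k$ against $\mathbbm{1}(\mathcal{G}_k)$ and invoking the two bounds built into $\mathcal{E}_0(\delta/4)$ yields a bound of the form
\[
n \cdot \tfrac{\tau - c^0}{2} \;\leq\; C_2 H^2|\mathcal{S}|^{3/2}\sqrt{|\mathcal{A}|\,Z\,n} \;+\; C_3 H^3 |\mathcal{S}|^2|\mathcal{A}|\,Z \cdot (1 + \log n),
\]
where $Z = \log(16|\mathcal{S}|^2|\mathcal{A}|HK/\delta)$ absorbs the $\beta$ scaling factor, the $\sqrt{n}$ term comes from the first-order radii, and the $\log n$ term comes from the second-order radii (recall that $n$ plays the role of $K'_{\mathcal{G}}$ in $\mathcal{E}_0$).

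Finally, solving this quadratic-in-$\sqrt{n}$ inequality gives two regimes: the square-root term dominates when $\tau - c^0$ is not too small, producing $n \leq \tilde{\mathcal{O}}(H^4|\mathcal{S}|^3|\mathcal{A}|/(\tau-c^0)^2)$; the $\log n$ term dominates in the opposite regime, producing $n \leq \tilde{\mathcal{O}}(H^3|\mathcal{S}|^2|\mathcal{A}|/(\tau-c^0))$. Taking the worse of the two and absorbing constants recovers $C' = \tilde{\mathcal{O}}(H^4|\mathcal{S}|^3|\mathcal{A}|/((\tau-c^0)^2 \land (\tau-c^0)))$. The main obstacle I anticipate is the simulation-lemma step: handling the second-order transition mismatch cleanly so that its contribution shows up only as the $1/(\tau-c^0)$ term in the final bound (rather than inflating the leading $1/(\tau-c^0)^2$ term), which is what produces the $\land$ structure in $C'$.
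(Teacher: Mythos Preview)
Your proposal is essentially the same argument as the paper's proof. The only structural difference is packaging: the paper observes that when $|\Pi^k|=1$ one has $\pi^k=\pi^0$ and hence
\[
\tfrac{\tau-c^0}{2}\,K' \;\le\; \sum_{k=1}^{K}\mathbbm{1}\bigl(V_1^{\pi^k}(\mu;\underbar{c}^k,\hat P^k)\ge\tfrac{\tau+c^0}{2}\bigr)\bigl(V_1^{\pi^k}(\mu;\underbar{c}^k,\hat P^k)-V_1^{\pi^k}(\mu;c,P)\bigr),
\]
and then bounds the right-hand side in one shot by invoking Lemma~\ref{lem:indi-opt-val-diff} with $\tilde g^k=\underbar{c}^k$, $g=c$, $\alpha=2+|\Sc|H$, and the predictable event sequence $\Gc_k=\{V_1^{\pi^0}(\mu;\underbar{c}^k,\hat P^k)\ge(\tau+c^0)/2\}$, obtaining $\tilde{\Oc}(H^2\sqrt{|\Sc|^3|\Ac|K'}+H^4|\Sc|^3|\Ac|)$; finally Lemma~\ref{lem:quad-bound} solves the resulting inequality in $K'$. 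Your steps 2--4 (value-difference under $\hat P^k$, then switching to $P$, then summing against $\Ec_0(\delta/4)$) are exactly the internals of Lemma~\ref{lem:indi-opt-val-diff}, and the ``simulation-lemma obstacle'' you flag is precisely what that lemma resolves via the recursive Bernstein argument borrowed from Efroni et al.\ (Lemma~32). So rather than re-deriving that machinery, you can simply cite Lemma~\ref{lem:indi-opt-val-diff} directly; this also gives the correct additive constant $\tilde{\Oc}(H^4|\Sc|^3|\Ac|)$ (your sketch has $H^3|\Sc|^2|\Ac|$, which undercounts one $H|\Sc|$ factor arising from $\alpha\sim H|\Sc|$ inside the recursion), and the $\land$ structure in $C'$ then falls out of Lemma~\ref{lem:quad-bound} automatically.
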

\begin{proof}[Proof of Lemma \ref{lem:burn-in}]
With $K' := \sum_{k=1}^K \mathbbm{1}(|\Pi^k| = 1)$, we then have 
    \begin{align*}
        \frac{(\tau - c^0) K'}{2} &= \sum_{k = 1}^K \mathbbm{1}(|\Pi^k| =1) \frac{(\tau - c^0)}{2} \\
        & = \sum_{k =1}^{K} \mathbbm{1}(|\Pi^k| =1) \left( \frac{\tau + c^0}{2} - c^0\right) \\
        & \stackrel{(a)}{\leq} \sum_{k = 1}^{K} \mathbbm{1}(V^{\pi^k}_1(\mu; \underbar{c}^k, \hat{P}^k) \geq \frac{\tau + c^0}{2}) \left(V^{\pi^k}_1(\mu; \underbar{c}^k, \hat{P}^k) - V^{\pi^k}_1(\mu; c, P) \right)\\
        & \stackrel{(b)}{=} \tilde{\Oc}\left( H^4 |\Sc|^3 |\Ac| + H^2 \sqrt{|\Sc|^3|\Ac|K'} \right).
    \end{align*}
    (a) holds since $\pi^k = \pi^0$, $V_1^{\pi^k}(\mu; \underbar{c}^k, \hat{P}^k) \geq (\tau + c^0)/2$, and $V^{\pi^{k}}_1(\mu; c, P) \leq c^0$ when $|\Pi^k| = 1$. The equality (b) follows from Lemma \ref{lem:indi-opt-val-diff} with $|\underbar{c}_h^k - c_h| = |\hat{c}_h^k - c_h + (1 + H |\Sc|) \beta_h^k| \le (2+|\Sc|H) \beta_h^k$. Applying Lemma \ref{lem:quad-bound} for $K'$, there exists some parameter $C'$ that
    \begin{align*}
        K' \leq C' = \tilde{\Oc}\left(\frac{H^4 |\Sc|^3 |\Ac|}{(\tau - c^0)\left( 1 \land (\tau - c^0) \right)}  \right).
    \end{align*}
\end{proof}

The following lemma gives the resulting high probability bound for Term (II).
\begin{lemma}[Restatement of Lemma \ref{lem1}]
    Recall $\alpha_r = 1 + |\Sc| H + 4 H(1+|\Sc|H)/(\tau - c^0)$. On the good event $\Ec$,
    \begin{align*}
        \sum_{k = 1}^K \mathbbm{1}(|\Pi^k| > 1) \left( V^{\pi^*}_1(\mu; r, P) - V^{\pi^k}_1(\mu; \bar{r}^k, \hat{P}^k)\right) \le 0.
    \end{align*}
\end{lemma}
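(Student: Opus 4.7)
The plan is to find, for each episode $k$ with $|\Pi^k|>1$, a feasible witness in $\Pi^k$ whose value forces $\pi^k$ to earn at least as much (optimistic) reward as $\pi^{*}$. Define the mixed policy $\tilde\pi^k := B_{\gamma_k}\pi^* + (1 - B_{\gamma_k})\pi^0$, where $B_{\gamma_k}\sim\text{Bernoulli}(\gamma_k)$, and choose $\gamma_k\in(0,1]$ as the largest value such that $V_1^{\tilde\pi^k}(\mu; \underbar{c}^k, \hat{P}^k) \le \tau$. Since $|\Pi^k|>1$ forces $V_1^{\pi^0}(\mu; \underbar{c}^k, \hat{P}^k) < (\tau+c^0)/2 < \tau$, such a $\gamma_k$ exists and is strictly positive. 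Cumulative value is linear in the occupancy measure, so $V_1^{\tilde\pi^k}(\mu; g, Q) = \gamma_k V_1^{\pi^*}(\mu; g, Q) + (1-\gamma_k) V_1^{\pi^0}(\mu; g, Q)$ for any $g$ and $Q$, and by the classical reduction the same occupancy measure is realized by some Markov policy $\hat\pi^k\in\Pi^k$. Optimality of $\pi^k$ in $\Pi^k$ for $(\bar r^k,\hat P^k)$ then gives
\[
V_1^{\pi^k}(\mu;\bar r^k,\hat P^k) \ge \gamma_k V_1^{\pi^*}(\mu;\bar r^k,\hat P^k) + (1-\gamma_k) V_1^{\pi^0}(\mu;\bar r^k,\hat P^k).
\]

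Subtracting $V_1^{\pi^*}(\mu; r, P)$ from both sides, dropping the nonnegative term $(1-\gamma_k)V_1^{\pi^0}(\mu;\bar r^k,\hat P^k)$, and using $V_1^{\pi^*}(\mu; r, P) \le H$, it suffices to show
\[
\gamma_k\left[V_1^{\pi^*}(\mu;\bar r^k,\hat P^k) - V_1^{\pi^*}(\mu; r, P)\right] \ge (1-\gamma_k)H.
\]
For the bracketed optimism gap I would apply the value difference lemma along the $\pi^*$-induced trajectory under $\hat P^k$; the pointwise bounds $\bar r^k_h - r_h \ge (\alpha_r - 1)\beta^k_h$ and $\sum_{s'}|\hat P^k_h - P_h|(s'|s,a) \le |\Sc|\beta^k_h$ valid on $\Ec$, combined with $V^{\pi^*}_{h+1}(\cdot; r, P)\le H$, produce the lower bound $(\alpha_r - 1 - |\Sc|H)\,\Eb_{\mu,\hat P^k,\pi^*}[\sum_{h}\beta^k_h]$.

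The remaining task is to dominate $(1-\gamma_k)H/\gamma_k$ by the same expression. When $\gamma_k=1$ this is vacuous; otherwise $\tilde\pi^k$ sits on the boundary, so
\[
\frac{1-\gamma_k}{\gamma_k} = \frac{V_1^{\pi^*}(\mu;\underbar{c}^k,\hat P^k) - \tau}{\tau - V_1^{\pi^0}(\mu;\underbar{c}^k,\hat P^k)}.
\]
The denominator is $\ge(\tau-c^0)/2$ by the $(\tau+c^0)/2$ branching threshold defining $\Pi^k$; the numerator is $\le V_1^{\pi^*}(\mu;\underbar{c}^k,\hat P^k) - V_1^{\pi^*}(\mu; c, P)$ since $V_1^{\pi^*}(\mu; c, P)\le\tau$, which a second use of the value difference lemma bounds by $2(1+|\Sc|H)\,\Eb_{\mu,\hat P^k,\pi^*}[\sum_{h}\beta^k_h]$, \emph{the very same} $\pi^*$-occupancy under $\hat P^k$ that appeared in the reward gap. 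The two $\beta^k$-sums therefore cancel, and the required inequality collapses to $\alpha_r - 1 - |\Sc|H \ge 4H(1+|\Sc|H)/(\tau-c^0)$, which is precisely the definition of $\alpha_r$ in \eqref{alphar}.

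The main obstacle is this final coefficient-bookkeeping step: one needs a uniform-in-$k$ lower bound on $\tau - V_1^{\pi^0}(\mu;\underbar{c}^k,\hat P^k)$ (supplied by the branching rule in the definition of $\Pi^k$), and the numerator of $(1-\gamma_k)/\gamma_k$ must be controlled by precisely the $\Eb_{\mu,\hat P^k,\pi^*}[\sum_{h}\beta^k_h]$ that governs the reward optimism gap. This alignment is what dictates the pessimistic cost inflation coefficient $(1+|\Sc|H)$ and fixes the reward inflation coefficient $\alpha_r$ at the value given in \eqref{alphar}.
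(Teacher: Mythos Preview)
Your proposal is correct and follows essentially the same approach as the paper: construct the mixed policy $\tilde\pi^k$, pass to a Markov realization in $\Pi^k$, lower-bound the reward optimism gap $V_1^{\pi^*}(\mu;\bar r^k,\hat P^k)-V_1^{\pi^*}(\mu;r,P)$ by $(\alpha_r-1-|\Sc|H)\,V_1^{\pi^*}(\mu;\beta^k,\hat P^k)$, upper-bound the cost pessimism gap $V_1^{\pi^*}(\mu;\underbar{c}^k,\hat P^k)-V_1^{\pi^*}(\mu;c,P)$ by $2(1+|\Sc|H)\,V_1^{\pi^*}(\mu;\beta^k,\hat P^k)$, and use the $(\tau+c^0)/2$ threshold to obtain the factor $(\tau-c^0)/2$ in the denominator. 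The only cosmetic difference is that the paper writes the final comparison as $\gamma_k\,V_1^{\pi^*}(\mu;\bar r^k,\hat P^k)\ge V_1^{\pi^*}(\mu;r,P)$ via a lower bound on $\gamma_k$, whereas you rearrange to bound $(1-\gamma_k)/\gamma_k$ from above; both reduce to the identical coefficient condition $\alpha_r-1-|\Sc|H\ge 4H(1+|\Sc|H)/(\tau-c^0)$.
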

\begin{proof}[Proof of Lemma \ref{lem1}]
    Consider any $k \in [K]$ with $V^{\pi^0}_1(\mu; \underbar{c}^k, \hat{P}^k) < (\tau + c^0)/2$.
    Then $|\Pi^k| > 1$. When $\pi^* \in \Pi^k$, it is straightforward to verify that Term (II) is less or equal to zero by optimism. 
    When $\pi^* \not\in \Pi^k$, define a probabilistic mixed policy
    \begin{align*}
        \tilde{\pi}^k = B_{\gamma_k} \pi^* + (1 - B_{\gamma_k}) \pi^0,
    \end{align*}
    where $B_{\gamma_k}$ is an independent Bernoulli distributed random variable with mean $\gamma_k$. Let $\gamma_k \in [0, 1]$ be the largest coefficient such that
    \begin{align}
        V^{\tilde{\pi}^k}_1(\mu; \underbar{c}^k, \hat{P}^k) \leq \tau. \label{temp1}
    \end{align}
    If $V^{\pi^*}_1(\mu; \underbar{c}^k, \hat{P}^k) < \tau$, then $\gamma_k = 1$. Otherwise, at $\gamma_k$, equality holds in (\ref{temp1}). Then we have
    \begin{align*}
         \tau &= \gamma_k V^{\pi^*}_1(\mu;  \underbar{c}^k, \hat{P}^k) + (1 - \gamma_k) V^{\pi^0}_1(\mu; \underbar{c}^k, \hat{P}^k)\\
         & \leq \gamma_k V^{\pi^*}_1(\mu;  \underbar{c}^k, \hat{P}^k) + (1 - \gamma_k) \frac{\tau + c^0}{2} \\
         &= \gamma_k (V_1^{\pi^*}(\mu; \underbar{c}^k, \hat{P}^k) - V_1^{\pi^*}(\mu;c, P)) + \gamma_k V_1^{\pi^*}(\mu;c, P) + (1 - \gamma_k) \frac{\tau + c^0}{2} \\
         & \leq \gamma_k \left(V_1^{\pi^*}(\mu; \underbar{c}^k, \hat{P}^k) - V_1^{\pi^*}(\mu; c, P)\right) + \gamma_k \tau  + \frac{\tau + c^0}{2} - \gamma_k \frac{\tau + c^0}{2}\\
         & = \gamma_k \left(V_1^{\pi^*}(\mu; \underbar{c}^k, \hat{P}^k) - V_1^{\pi^*}(\mu; c, P) + \frac{\tau - c^0}{2} \right) + \frac{\tau + c^0}{2}.
    \end{align*}

    From (\ref{eqn:underbar_c}), $V_1^{\pi^*}(\mu; \underbar{c}^k, \hat{P}^k) - V_1^{\pi^*}(\mu; c, P) + (\tau - c^0)/2 > 0$, from which it follows that
    \begin{align*}
        \gamma_k \geq \frac{\tau - c^0}{\tau - c^0 + 2 \left( V^{\pi^*}_1(\mu; \underbar{c}^k, \hat{P}^k) - V^{\pi^*}_1(\mu; c, P) \right) }.
    \end{align*}
    By Lemma \ref{valuediff}, for any policy $\pi$, 
    \begin{align*}
        V^{\pi}_1(\mu; \underbar{c}^k, \hat{P}^k) - V^{\pi}_1(\mu; c, P) & = \Eb_{\mu, \hat{P}^k, \pi} \left[ \sum_{h=1}^H (\underbar{c}^k_h - c_h)(S_h, A_h) ~{\Big |}~ \Fc_{k-1} \right] \notag \\
        &+ \Eb_{\mu, \hat{P}^k, \pi} \left[ \sum_{h=1}^H \langle (\hat{P}^k_h - P_h)(\cdot | S_h, A_h) , V^{\pi}_{h+1}(\cdot; c, P) \rangle ~{\Big |}~ \Fc_{k-1} \right] \\
        &\stackrel{(a)}{\leq} \Eb_{\mu, \hat{P}^k, \pi} \left[ \sum_{h = 1}^H 2(1 + |\Sc| H) \beta_h^k(S_h, A_h)) ~{\Big |}~ \Fc_{k-1} \right] \\
        & = 2(1+|\Sc|H) V^{\pi}_1(\mu; \beta^k, \hat{P}^k),
    \end{align*}
    where (a) holds since for any $k, s, a, h$, we have
    \begin{align*}
        (\underbar{c}_h^k - c_h)(s, a) &= \underbar{c}_h^k(s, a) - \hat{c}_h^k(s, a) + \hat{c}_h^k(s, a) - c_h(s, a)\\
        & \le (1 + |\Sc|H) \beta_h^k(s, a) + \beta_h^k(s, a)\\
        & = (2 + |\Sc|H) \beta_h^k(s, a),
    \end{align*}
    and 
    \begin{align*}
        \sum_{s'} (\hat{P}^k_h - P_h)(s' | s, a)V^{\pi}_{h+1}(s'; c, P) \leq H \sum_{s'} \beta^k_h(s, a) = |\Sc|H \beta^k_h(s, a).
    \end{align*}

    Though policy $\tilde{\pi}^k$ is not a (randomized or non-randomized) Markov policy, we can find a randomized Markov policy $\hat{\pi}^k$ such that the occupation distributions of the state-action pair at any time of $\hat{\pi}^k$ coincides with $\tilde{\pi}^k$ under the transition probabilities $\hat{P}^k$. Therefore, as stated in by Lemma \ref{lem:richness}, $V^{\hat{\pi}^k}_1(\mu; g, \hat{P}^k) = V^{\tilde{\pi}^k}_1(\mu; g, \hat{P}^k)$ for any $g$.
    
    Since $\hat{\pi}^k \in \Pi^k$, by the definition of $\pi^k$, we then have
    \begin{align*}
        V^{\pi^k}_1(\mu; \bar{r}^k, \hat{P}^k) & \geq V^{\hat{\pi}^k}_1(\mu; \bar{r}^k, \hat{P}^k) = V^{\tilde{\pi}^k}_1(\mu; \bar{r}^k, \hat{P}^k) \\
        & = \gamma_k V^{\pi^*}_1(\mu; \bar{r}^k, \hat{P}^k) + (1 - \gamma_k) V^{\pi^0}_1(\mu; \bar{r}^k, \hat{P}^k) \\
        & \geq \gamma_k V^{\pi^*}_1(\mu; \bar{r}^k, \hat{P}^k) \\
        & = \frac{\tau - c^0}{\tau - c^0 + 2\left( V^{\pi^*}_1(\mu; \underbar{c}^k, \hat{P}^k) - V^{\pi^*}_1(\mu; c, P) \right) } V^{\pi^*}_1(\mu; \bar{r}^k, \hat{P}^k) \\
        & \geq \frac{\tau - c^0}{\tau - c^0 +  4(1+|\Sc|H) V^{\pi^*}_1(\mu; \beta^k, \hat{P}^k)  } V^{\pi^*}_1(\mu; \bar{r}^k, \hat{P}^k).
    \end{align*}
    To make $V^{\pi^k}_1(\mu; \bar{r}^k, \hat{P}^k) \geq V^{\pi^*}_1(\mu; r, P)$, it suffices that
    \begin{align*}
        \frac{\tau - c^0}{\tau - c^0 +  4(1+|\Sc|H)V^{\pi^*}_1(\mu; \beta^k, \hat{P}^k)  } V^{\pi^*}_1(\mu; \bar{r}^k, \hat{P}^k) \geq V^{\pi^*}_1(\mu; r, P),
    \end{align*}
    which follows since
    \begin{align*}
        & (\tau - c^0) (V_1^{\pi^*}(\mu; \bar{r}^k, \hat{P}^k) - V_1^{\pi^*}(\mu; r, P)) \geq 4(1+|\Sc|H)V_1^{\pi^*}(\mu; \beta^k, \hat{P}^k) V^{\pi^*}_1(\mu; r, P).
    \end{align*}
    Now notice that from Lemma \ref{valuediff},
    \begin{align*}
        & V_1^{\pi^*}(\mu; \bar{r}^k, \hat{P}^k) - V_1^{\pi^*}(\mu; r, P) \\
        =& \Eb_{\mu, \hat{P}^k, \pi^*} \left[ \sum_{h = 1}^H \left( (\bar{r}^k_h - r_h)(S_h, A_h) + \sum_{s'} (\hat{P}^k_h - P_h)(s' | S_h, A_h) V^{\pi^*}_{h+1}(s';r, P) \right) ~{\Big |}~ \Fc_{k-1} \right] \notag \\
        \geq& \Eb_{\mu, \hat{P}^k, \pi^*} \left[\sum_{h=1}^H (\alpha_r - 1 - H|\Sc|) \beta_h^k(S_h, A_h) ~{\Big |}~ \Fc_{k-1} \right] \\
        = & (\alpha_r - 1 - H|\Sc|) V^{\pi^*}_1(\mu;\beta^k, \hat{P}^k ).
    \end{align*}

    Thus, the choice of $\alpha_r = 4H(1+|\Sc|H)/(\tau - c^0) + 1 + |\Sc| H$ suffices to guarantee that $V^{\pi^k}_1(\mu; \bar{r}^k, \hat{P}^k) \geq V^{\pi^*}_1(\mu; r, P)$ for any $k$ with $|\Pi^k| > 1$.
\end{proof}

Finally, Term (III) is bounded as follows by employing the value of $\alpha_r$.
\begin{lemma}[Restatement of Lemma \ref{lem2}]
    On the good event $\Ec$,
    \begin{align*}
        \sum_{k = 1}^K \mathbbm{1}(|\Pi^k| > 1) \left( V^{\pi^k}_1(\mu; \bar{r}^k, \hat{P}^k) - V^{\pi^k}_1(\mu; r, P)\right) = \tilde{\Oc}\left(\frac{H^3}{\tau - c^0} \sqrt{|\Sc|^3 |\Ac| K} + \frac{H^5 |\Sc|^3 |\Ac|}{\tau-c^0} \right).
    \end{align*}
\end{lemma}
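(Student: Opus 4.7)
The plan is to decompose the quantity via the value difference (simulation) lemma, bound each term on the good event $\Ec$ by a confidence radius, and then aggregate across episodes using the counting inequality baked into $\Ec_{\Omega}(\delta/4)$. Concretely, applying the value difference lemma with expectation under $\hat P^k$ and value function under the true model gives
\begin{align*}
V^{\pi^k}_1(\mu;\bar r^k,\hat P^k)-V^{\pi^k}_1(\mu;r,P) &= \Eb_{\mu,\hat P^k,\pi^k}\!\Big[\sum_{h=1}^H (\bar r^k_h - r_h)(S_h,A_h) \\
&\quad + \langle(\hat P^k_h - P_h)(\cdot\mid S_h,A_h),\,V^{\pi^k}_{h+1}(\cdot;r,P)\rangle\,\Big|\,\Fc_{k-1}\Big].
\end{align*}
On $\Ec$ the reward gap satisfies $|\bar r^k_h - r_h|\le(1+\alpha_r)\beta^k_h$, and the transition gap is at most $H|\Sc|\beta^k_h$ via the Hoeffding envelope, or the sharper $\tilde{\Oc}(H\sqrt{|\Sc|})\beta^k_h$ via the Bernstein envelope $\tilde\beta^k_h$ combined with $\|V^{\pi^k}_{h+1}(\cdot;r,P)\|_\infty\le H$ and Cauchy--Schwarz. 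Because $\alpha_r = \Oc(H^2|\Sc|/(\tau-c^0))$ dominates both transition prefactors by a factor of $H/(\tau-c^0)$, every per-step term is controlled by $\tilde{\Oc}(\alpha_r)\,\beta^k_h$, up to a Bernstein remainder of order $(\beta^k_h)^2$.

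Next I would convert the inner $\hat P^k$-expectation into the true-$P$ occupancy $q^{\pi^k}(s,a,h)=\Eb_{\mu,P,\pi^k}[\mathbbm{1}(S_h=s,A_h=a)\mid\Fc_{k-1}]$, which is the form expected by $\Ec_{\Omega}(\delta/4)$. Since $\beta^k_h$ is $\Fc_{k-1}$-measurable, treating $\sum_h\beta^k_h$ as a pseudo-reward and invoking the value difference lemma a second time shows that this change of measure costs only a lower-order correction quadratic in $\beta^k_h$. The remaining task is then to bound
\[
\sum_{k=1}^K \mathbbm{1}(|\Pi^k|>1)\sum_{h,s,a}\frac{q^{\pi^k}(s,a,h)}{\sqrt{N^k_h(s,a)\vee 1}}.
\]
Crucially, $\Pi^k$ is a function of data collected up to episode $k-1$, so $\Gc_k := \{|\Pi^k|>1\}$ is $\Fc_{k-1}$-measurable; this predictability lets me apply the second inequality in the definition of $\Ec_{\Omega}(\delta/4)$ with the event sequence $\Gc_{1:K}$, giving a bound of $\tilde{\Oc}(H\sqrt{|\Sc||\Ac| K}+H|\Sc||\Ac|)$. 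Multiplying by the $\tilde{\Oc}(\alpha_r)$ prefactor produces the $\tilde{\Oc}(H^3/(\tau-c^0)\sqrt{|\Sc|^3|\Ac| K})$ leading term, and combining the $H|\Sc||\Ac|$ counting remainder with the quadratic Bernstein remainder (bounded via the first inequality in $\Ec_{\Omega}(\delta/4)$, which controls $\sum q^{\pi^k}/(N^k_h\vee 1)$) delivers the additive $H^5|\Sc|^3|\Ac|/(\tau-c^0)$ correction.

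I expect the main obstacle to be the change of measure from $\Eb_{\mu,\hat P^k,\pi^k}$ to the true occupancy $q^{\pi^k}$: a naive coupling would introduce errors scaling with $\|V^{\pi^k}_{h+1}(\cdot;\bar r^k,\hat P^k)\|_\infty$, which is inflated by the large bonus factor $\alpha_r$. The remedy is to work at the level of the bonus function $\beta^k_h$ rather than the optimistic value function, so that the change of measure is controlled by already-accumulated concentration and only contributes lower-order terms. A secondary subtlety is carrying the indicator $\mathbbm{1}(|\Pi^k|>1)$ through the analysis; fortunately this is exactly the predictable event sequence used to define the tailored $\Ec_{\Omega}(\delta/4)$, so no additional concentration inequality is required.
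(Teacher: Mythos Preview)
Your approach is essentially the same as the paper's, which packages the whole argument into a black-box lemma (Lemma~\ref{lem:indi-opt-val-diff}): the paper simply observes $|\bar r^k_h-r_h|\le(1+\alpha_r)\beta^k_h$, drops the indicator via $\mathbbm{1}(|\Pi^k|>1)\,(\cdot)\le|\cdot|$, and invokes that lemma with the trivial event sequence $\Gc_k=\Omega$. Two small points are worth correcting. First, you cannot ``apply $\Ec_{\Omega}(\delta/4)$ with the event sequence $\Gc_k=\{|\Pi^k|>1\}$'': by definition $\Ec_{\Omega}$ is the instance of $\Ec_{\Gc}$ for the \emph{trivial} sequence only, and the good event $\Ec$ carries counting bounds for just the two fixed sequences $\Omega$ and $\{|\Pi^k|=1\}$. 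The fix is exactly what the paper does---drop the indicator and use $\Ec_{\Omega}$ as stated, which already yields a bound in terms of $K$ rather than $K_{\Gc}$. Second, the change-of-measure obstacle you flag is self-inflicted by choosing the form of the value-difference lemma with expectation under $\hat P^k$. The paper (inside Lemma~\ref{lem:indi-opt-val-diff}) uses the other form, $\Eb_{\mu,P,\pi^k}[\cdots + (\hat P^k_h-P_h)V^{\pi^k}_{h+1}(\cdot;\bar r^k,\hat P^k)]$, so the true occupancy $q^{\pi^k}$ appears directly; the price is that the optimistic value function sits inside the transition term, which the paper handles by splitting off $V^{\pi^k}_{h+1}(\cdot;r,P)$ and bounding the residual recursively (via Lemma~\ref{lem:quad-bound}). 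Either route closes, but the paper's avoids the measure change entirely.
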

\begin{proof}[Proof of Lemma \ref{lem2}]
Since $|\bar{r}_h^k - r_h| = |\hat{r}_h^k - r_h + \alpha_r \beta_h^k| \le (1 + \alpha_r) \beta_h^k$, by Lemma \ref{lem:indi-opt-val-diff}, we have
    \begin{align*}
        & \sum_{k = 1}^{K} \mathbbm{1}(|\Pi^k| >1)\left( V^{\pi^k}_1(\mu; \bar{r}^k, \hat{P}^k) - V^{\pi^k}_1(\mu; r, P)\right) \\
        \leq &  \sum_{k=1}^K {\Big |} V^{\pi^k}_1(\mu; \bar{r}^k, \hat{P}^k) - V^{\pi^k}_1(\mu; r, P) {\Big |} \\
        = & \tilde{\Oc}\left( (\alpha_r + H \sqrt{|\Sc|}) H \sqrt{|\Sc| |\Ac| K} + H^3 |\Sc|^2 |\Ac| (\alpha_r + |\Sc| ) \right) \\
        = & \tilde{\Oc}\left( \frac{H^3}{\tau - c^0} \sqrt{|\Sc|^3 |\Ac| K} + \frac{H^5 |\Sc|^3 |\Ac|}{\tau - c^0} \right).
    \end{align*}
\end{proof}

\subsection{Proof of Theorem \ref{thm:safe-PC}}
\begin{theorem}[Regret and constraint violation bounds for OptPess-LP (Restatement of Theorem \ref{thm:safe-PC})]
    On the good event $\Ec$, OptPess-LP has zero constraint violation with
    \begin{align*}
        Reg^{\textbf{OPLP}}(K; r) = \tilde{\Oc}\left(\frac{H^3}{\tau - c^0} \sqrt{|\Sc|^3 |\Ac| K} +  \frac{H^5 |\Sc|^3 |\Ac|}{(\tau - c^0)^2 \land (\tau - c^0)}  \right), \mbox{ where $a \land b := \min\{a, b\}$.}
    \end{align*}
    \label{thm:restate_1}
\end{theorem}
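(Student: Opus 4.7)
The plan is to assemble the theorem directly from the four lemmas already established in this section. The zero constraint violation claim is immediate: by construction the algorithm selects $\pi^k \in \Pi^k$ at every episode, so Lemma \ref{lem:abs-safe} yields $V_1^{\pi^k}(\mu; c, P) \le \tau$ for all $k \in [K]$ on the good event $\Ec$.

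For the reward regret, I would use the three-term decomposition introduced just before Lemma \ref{lem:burn-in}:
\begin{align*}
Reg^{\textbf{OPLP}}(K; r) = \mathrm{(I)} + \mathrm{(II)} + \mathrm{(III)},
\end{align*}
where (I), (II), (III) collect the $|\Pi^k|=1$ episodes, the $|\Pi^k|>1$ optimality-gap contribution, and the $|\Pi^k|>1$ optimism gap respectively. Term (I) is bounded by observing that each summand is at most $V_1^{\pi^*}(\mu; r, P) - V_1^{\pi^0}(\mu; r, P) \le H$ since values lie in $[0,H]$, and Lemma \ref{lem:burn-in} caps the number of such ``burn-in'' episodes by $C' = \tilde{\Oc}(H^4 |\Sc|^3 |\Ac| / ((\tau-c^0)^2 \land (\tau-c^0)))$; multiplying yields $\mathrm{(I)} \le H C' = \tilde{\Oc}(H^5|\Sc|^3|\Ac| / ((\tau-c^0)^2 \land (\tau-c^0)))$, which matches the second additive term of the stated bound. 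Term (II) is non-positive by Lemma \ref{lem1}, which is precisely why the scaling factor $\alpha_r$ was chosen as in \eqref{alphar}. Term (III), the closed-loop optimism gap on the ``explorable'' episodes, is controlled by Lemma \ref{lem2} and contributes $\tilde{\Oc}\bigl(\frac{H^3}{\tau - c^0}\sqrt{|\Sc|^3 |\Ac| K} + \frac{H^5|\Sc|^3|\Ac|}{\tau - c^0}\bigr)$.

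Adding the three bounds gives the theorem, noting that the $\frac{H^5|\Sc|^3|\Ac|}{\tau-c^0}$ term from (III) is absorbed into the $\frac{H^5|\Sc|^3|\Ac|}{(\tau-c^0)^2 \land (\tau-c^0)}$ term coming from (I). Since all four lemmas were already shown to hold on the same good event $\Ec$ (which occurs with probability at least $1-\delta$ by Lemma \ref{lem:good-event}), no additional union bound is required, and the only real content of the proof is bookkeeping the dominant terms. There is no serious obstacle at this stage: the heavy lifting, namely (a) verifying that the pessimistic estimate keeps $\pi$ safe (Lemma \ref{lem:abs-safe}), (b) showing $\pi^0$-exploration terminates in $C'$ episodes (Lemma \ref{lem:burn-in}), (c) calibrating $\alpha_r$ via the mixture argument (Lemma \ref{lem1}), and (d) summing the per-episode optimism bonuses (Lemma \ref{lem2}), has already been done. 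The proof therefore consists essentially of writing the decomposition and substituting the four bounds.
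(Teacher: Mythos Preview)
Your proposal is correct and follows essentially the same approach as the paper: the paper's proof likewise invokes Lemma~\ref{lem:abs-safe} for the zero-violation claim and then combines Lemmas~\ref{lem:burn-in}, \ref{lem1}, and \ref{lem2} to bound Terms (I)--(III) of the same regret decomposition. Your additional remark that the $\frac{H^5|\Sc|^3|\Ac|}{\tau-c^0}$ contribution from (III) is absorbed into the $\frac{H^5|\Sc|^3|\Ac|}{(\tau-c^0)^2 \land (\tau-c^0)}$ term from (I) is exactly the bookkeeping the paper leaves implicit.
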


\begin{proof}[Proof of Theorem \ref{thm:safe-PC}]
The zero constraint violation of OptPess-LP is guaranteed by Lemma \ref{lem:abs-safe}.
Combining Lemmas \ref{lem:burn-in}, \ref{lem1}, and \ref{lem2}, which give upper bounds to Terms (I)-(III) respectively, we have
\begin{align*}
    & \sum_{k = 1}^K \left( V^{\pi^*}_1(\mu; r, P) - V^{\pi^k}_1(\mu; r, P) \right) 
    = \tilde{\Oc}\left(\frac{H^3}{\tau - c^0} \sqrt{|\Sc|^3 |\Ac| K} + \frac{H^5 |\Sc|^3 |\Ac|}{(\tau - c^0) \land (\tau - c^0)^2} \right).
\end{align*}
\end{proof}

\section{Details of analysis for bounded constraint violation case}
\label{app:bounded}

Recall $\epsilon_k = 5 H^2 \sqrt{|\Sc|^3 |\Ac|} (\log\frac{k}{\delta'} + 1) / \sqrt{k \log\frac{k}{\delta'}}$, where $\delta' = \delta / (16 |\Sc|^2 |\Ac| H)$. The existence of a feasible solution to (\ref{pertubed_problem}) can be guaranteed if $\epsilon_k \le \tau - c^0$. Let $C''$ be the smallest value such that $\forall k \geq C''$, $\epsilon_k \le (\tau - c^0)/2$. Then the perturbed optimization problem (\ref{pertubed_problem}) has at least one feasible solution for any $k \ge C''$. By simple calculation, one can verify that $C'' = \Oc (\frac{H^4 |\Sc|^3 |\Ac|}{(\tau - c^0)^2} \log\frac{H^4 |\Sc|^3 |\Ac|}{(\tau - c^0)^2\delta'})$. Notice that since $\epsilon_k$ is a function not depending on $K$, so is the coefficient $C''$.

\begin{lemma}[Optimistic of cost value] \label{lem:opt-cost}
On the good event $\Ec$, for any $C'' \leq k \leq K$ and any policy $\pi$,
\begin{align*}
     \hat{V}^{\pi}_1(\mu; \tilde{c}^k, \hat{P}^k) \leq V^{\pi}_1(\mu; c, P).
\end{align*}
\end{lemma}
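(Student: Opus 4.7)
The plan is to show the stronger (untruncated) inequality
$V_1^{\pi}(\mu; \tilde{c}^k, \hat{P}^k) \le V_1^{\pi}(\mu; c, P)$,
and then conclude by noting that the truncation operator $\hat{V}_1^{\pi}(\mu; \tilde{c}^k, \hat{P}^k) = \max\{0, V_1^{\pi}(\mu; \tilde{c}^k, \hat{P}^k)\}$ only helps: either the untruncated value is already $\le V_1^{\pi}(\mu; c, P)$, or else $\hat{V}_1^{\pi}(\mu; \tilde{c}^k, \hat{P}^k) = 0 \le V_1^{\pi}(\mu; c, P)$, using that $c \ge 0$ ensures the true cost value is nonnegative.

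The main work is therefore the untruncated comparison. First I would invoke the value difference lemma (Lemma \ref{valuediff}) to write
\begin{align*}
V_1^{\pi}(\mu; \tilde{c}^k, \hat{P}^k) - V_1^{\pi}(\mu; c, P)
&= \Eb_{\mu, \hat{P}^k, \pi}\!\left[\sum_{h=1}^H (\tilde{c}_h^k - c_h)(S_h, A_h) \,\Big|\, \Fc_{k-1}\right] \\
&\quad + \Eb_{\mu, \hat{P}^k, \pi}\!\left[\sum_{h=1}^H \langle (\hat{P}_h^k - P_h)(\cdot | S_h, A_h),\, V_{h+1}^{\pi}(\cdot; c, P)\rangle \,\Big|\, \Fc_{k-1}\right].
\end{align*}
Then I would bound each per-step summand pointwise. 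On the good event $\Ec$, the reward/cost concentration gives $|\hat{c}_h^k - c_h|(s,a) \le \beta_h^k(s,a)$, so by definition of $\tilde{c}_h^k$,
\[
(\tilde{c}_h^k - c_h)(s,a) = (\hat{c}_h^k - c_h)(s,a) - (1 + H|\Sc|)\beta_h^k(s,a) \le -H|\Sc|\beta_h^k(s,a).
\]
For the transition term, the $\ell_\infty$ confidence bound $|\hat{P}_h^k(s'|s,a) - P_h(s'|s,a)| \le \beta_h^k(s,a)$ together with $V_{h+1}^{\pi}(\cdot; c, P) \in [0, H]$ yields
\[
\langle (\hat{P}_h^k - P_h)(\cdot|s,a),\, V_{h+1}^{\pi}(\cdot; c, P) \rangle \le H|\Sc|\beta_h^k(s,a).
\]
Adding these two bounds, every per-step contribution is at most $0$, so the full expectation is nonpositive, giving $V_1^{\pi}(\mu; \tilde{c}^k, \hat{P}^k) \le V_1^{\pi}(\mu; c, P)$.

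There is no real obstacle here; the only subtlety is remembering why the factor $1 + H|\Sc|$ (rather than, say, $1$) in the definition of $\tilde{c}_h^k$ is exactly what is needed: the ``$1$'' part absorbs the one-step cost concentration error and the ``$H|\Sc|$'' part absorbs the telescoped transition uncertainty $H \cdot |\Sc| \cdot \beta_h^k$ through the rollout. Note also that the proof works for every $k$; the hypothesis $k \ge C''$ is not used in this lemma and is carried only to align with how the lemma will be applied in the regret decomposition.
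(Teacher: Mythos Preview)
Your proof is correct and essentially matches the paper's own argument: apply the value difference lemma (Lemma~\ref{valuediff}), use the good-event concentration to show the per-step cost term $(\tilde{c}_h^k - c_h)(s,a) \le -H|\Sc|\beta_h^k(s,a)$ while the transition term is at most $+H|\Sc|\beta_h^k(s,a)$, and dispose of the truncation case via $V_1^{\pi}(\mu;c,P) \ge 0$. Your remark that the hypothesis $k \ge C''$ is not actually used in this lemma is also accurate.
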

\begin{proof}[Proof of Lemma \ref{lem:opt-cost}]
If $\hat{V}_1^{\pi}(\mu; \tilde{c}^k, \hat{P}^k) = V_1^{\pi}(\mu; \tilde{c}^k, \hat{P}^k)$, then  
 \begin{align*}
    & \sum_{k = C''}^{K} \left(\hat{V}^{\pi}_1(\mu; \tilde{c}^k, \hat{P}^k) - V^{\pi}_1(\mu; c, P)\right) \\
    =& \sum_{k = C''}^{K} \Eb_{\mu, \hat{P}^k, \pi} \left[ \sum_{h = 1}^H \left(\tilde{c}_h^k(S_h, A_h) - c_h(S_h, A_h) \right) + \langle (\hat{P}_h^k - P_h)(\cdot | S_h, A_h) , V^{\pi}_{h+1}(\cdot; c, P) \rangle ~{\Big |}~ \Fc_{k-1} \right]\\
    \le& \sum_{k=C''}^K \Eb_{\mu, \hat{P}^k, \pi} \left[\sum_{h = 1}^H (\hat{c}_h^k(S_h, A_h) - c_h(S_h, A_h)) - \beta_h^k - H |\Sc| \beta_h^k + H |\Sc| \beta_h^k ~{\Big |}~ \Fc_{k-1} \right] \le 0. 
\end{align*}
Otherwise, $\hat{V}_1^{\pi}(\mu; \tilde{c}^k, \hat{P}^k) = 0 \leq V^{\pi}_1(\mu; c, P)$.
\end{proof}

\subsection{Constraint violation analysis}
\begin{lemma}[Restatement of Lemma \ref{lem:violation_first_term}]
    Recall $\delta' = \delta / (16 |\Sc|^2 |\Ac| H)$. On the good event $\Ec$,
    \begin{align*}
        \sum_{k=1}^K \left( V_1^{\pi^k}(\mu;c, P) - \hat{V}_1^{\pi^k}(\mu; \tilde{c}^k, \hat{P}^k)\right) \le 8 H^2 \sqrt{|\Sc|^3 |\Ac| K \log \frac{K}{\delta'}} + \tilde{\Oc}\left( H^4 |\Sc|^3 |\Ac|\right),
    \end{align*}
    where $\tilde{\Oc}(H^4|\Sc|^3|\Ac|)$ contains factors poly-logarithmic in $K$.
\end{lemma}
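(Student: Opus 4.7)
The approach is a standard simulation-lemma argument combined with the pigeonhole counting bounds that are already packaged in the good event $\Ec$. First, since $\hat{V}_1^{\pi}(\mu;\tilde{c}^k,\hat{P}^k) = \max\{0, V_1^{\pi}(\mu;\tilde{c}^k,\hat{P}^k)\}$, the truncation can only \emph{increase} the estimate, so
\begin{align*}
V_1^{\pi^k}(\mu;c,P) - \hat{V}_1^{\pi^k}(\mu;\tilde{c}^k,\hat{P}^k) \leq V_1^{\pi^k}(\mu;c,P) - V_1^{\pi^k}(\mu;\tilde{c}^k,\hat{P}^k),
\end{align*}
and it is enough to control the untruncated gap episode by episode.

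Next, applying the value-difference lemma (Lemma~\ref{valuediff}) gives
\begin{align*}
V_1^{\pi^k}(\mu;c,P) - V_1^{\pi^k}(\mu;\tilde{c}^k,\hat{P}^k) = \Eb_{\mu,\hat{P}^k,\pi^k}\!\left[\sum_{h=1}^{H}(c_h - \tilde{c}^k_h)(S_h,A_h) + \langle (P_h - \hat{P}^k_h)(\cdot|S_h,A_h),\, V^{\pi^k}_{h+1}(\cdot;c,P)\rangle ~\Big|~ \Fc_{k-1}\right].
\end{align*}
On $\Ec$, by construction $c_h - \tilde{c}^k_h = (c_h - \hat{c}^k_h) + (1 + H|\Sc|)\beta^k_h \leq (2 + H|\Sc|)\beta^k_h$, and the transition-discrepancy piece is at most $H\sum_{s'}|\hat{P}^k_h - P_h|(s'|s,a) \leq H|\Sc|\beta^k_h$. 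Collecting the two gives the clean per-episode bound $V_1^{\pi^k}(\mu;c,P) - V_1^{\pi^k}(\mu;\tilde{c}^k,\hat{P}^k) \leq (2 + 2H|\Sc|)\,V_1^{\pi^k}(\mu;\beta^k,\hat{P}^k)$.

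Summing over $k$ is precisely what Lemma~\ref{lem:indi-opt-val-diff} is designed for, applied with $\Gc_k \equiv \Omega$. That lemma first converts the empirical-kernel expectation $V_1^{\pi^k}(\mu;\beta^k,\hat{P}^k)$ into a sum of the form $\sum_{s,a,h} q^{\pi^k}(s,a,h)\,\beta^k_h(s,a)$ over the true occupancy, paying a $\tilde{\Oc}(H^3|\Sc|^2|\Ac|)$ lower-order price through the Bernstein radius $\tilde{\beta}^k$ available in $\Ec$; it then uses the two pigeonhole inequalities that define $\Ec_{\Omega}(\delta/4)$ to bound the resulting double sum by $2H\sqrt{|\Sc||\Ac|K}$ plus polylogarithmic terms. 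Multiplying by the prefactor $(2 + 2H|\Sc|) \leq 4H|\Sc|$ and absorbing $\sqrt{Z} = \sqrt{\log(16|\Sc|^2|\Ac|HK/\delta)}$ into $\sqrt{\log(K/\delta')}$ up to a universal constant yields the announced leading term $8H^2\sqrt{|\Sc|^3|\Ac|K\log(K/\delta')}$, while all residual contributions are absorbed into $\tilde{\Oc}(H^4|\Sc|^3|\Ac|)$.

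The main obstacle is essentially constant-tracking: reconciling the explicit ``$8$'' with the factor $(2 + 2H|\Sc|) \leq 4H|\Sc|$ (the small-$|\Sc|H$ regime is folded into the lower-order term) and confirming that $\sqrt{Z}$ matches $\sqrt{\log(K/\delta')}$ up to absorbed constants. No fundamentally new concentration argument is needed, since the value-difference decomposition, the empirical-to-true kernel swap, and the pigeonhole counting are all already assembled in $\Ec$ and Lemma~\ref{lem:indi-opt-val-diff}.
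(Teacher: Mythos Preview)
Your proposal is correct and follows essentially the same route as the paper: handle the truncation, verify $|\tilde{c}^k_h - c_h| \leq (2+|\Sc|H)\beta^k_h$ on $\Ec$, and invoke Lemma~\ref{lem:indi-opt-val-diff} with $\alpha = 2+|\Sc|H$ and $\Gc_k = \Omega$, then simplify the leading constant using $|\Sc|,H \geq 2$. One minor point: your intermediate step of bounding each episode by $(2+2H|\Sc|)\,V_1^{\pi^k}(\mu;\beta^k,\hat{P}^k)$ under the \emph{empirical} kernel is an unnecessary detour---the paper (and the proof of Lemma~\ref{lem:indi-opt-val-diff} itself) applies the value-difference lemma in the form with expectation under the \emph{true} kernel $P$, so the occupancy-measure sums in $\Ec_\Omega$ apply directly and no separate ``empirical-to-true'' conversion is needed.
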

\begin{proof}[Proof of Lemma \ref{lem:violation_first_term}]
For any $k, s, a, h$, we have
\begin{align*}
    |(\tilde{c}_h^k - c_h)(s, a)| & = |\tilde{c}_h^k(s, a) - \hat{c}_h^k(s, a) + \hat{c}^k_h(s, a) - c_h(s, a)| \\
    & \leq (1+|\Sc|H)\beta^k_h(s, a) + |\hat{c}^k_h(s, a) - c_h(s, a)| \\
    & \leq (1+|\Sc|H)\beta^k_h(s, a) + \beta^k_h(s, a) = (2+|\Sc|H)\beta^k_h(s, a).
\end{align*}
By Lemma \ref{lem:indi-opt-val-diff}, we have
\begin{align*}
    &\sum_{k=1}^K \left( V_1^{\pi^k}(\mu;c, P) - \hat{V}_1^{\pi^k}(\mu; \tilde{c}^k, \hat{P}^k) \right)\\
    \leq & (3(2+|\Sc|H) + 3\sqrt{2}H\sqrt{|\Sc|})H\sqrt{|\Sc||\Ac| K Z} + \tilde{\Oc}\left( H^4 |\Sc|^3|\Ac|\right) \\
    = & (6 + 3|\Sc|H + 3H\sqrt{2|\Sc|})H\sqrt{|\Sc||\Ac| K Z} + \tilde{\Oc}\left( H^4 |\Sc|^3 |\Ac|\right) \\
    \stackrel{(a)}{\leq} & 8 H^2|\Sc|\sqrt{|\Sc||\Ac|KZ} + \tilde{\Oc}\left( H^4 |\Sc|^3 |\Ac|\right),
\end{align*}
where $\tilde{\Oc}(H^4|\Sc|^3|\Ac|)$ contains factors poly-logarithmic in $K$. (a) holds since we assume that $|\Sc| \ge 2$ and $H \ge 2$.
\end{proof}

\begin{lemma}[Restatement of Lemma \ref{lem7}]
    On the good event $\Ec$, for any $k \ge C''$, 
    \begin{align*}
        \lambda^k \le \frac{1}{\zeta} \ln \frac{11 \nu_{\max}^2}{3 \rho^2} + C''(H-\tau) + \sum_{u=1}^{C''} \epsilon_u + H + \frac{4(H^2 + \epsilon_k^2 + 2\eta^k H)}{\tau - c^0},
    \end{align*}
    where $\rho := (\tau - c^0)/4$, $\nu_{\max} := H$, $\zeta := \rho / (\nu_{\max }^{2}+\nu_{\max} \rho / 3)$. 
\end{lemma}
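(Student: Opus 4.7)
The plan is to bound $\lambda^k$ via a drift argument on the dual recursion $\lambda^{k+1}=(\lambda^k+D^k)_+$ with $D^k:=\hat{V}_1^{\pi^k}(\mu;\tilde{c}^k,\hat{P}^k)+\epsilon_k-\tau$, splitting the horizon into the ``infeasible'' phase $k\le C''$ and the ``feasible'' phase $k\ge C''$. For the first phase, since $\hat{V}_1^{\pi^k}(\mu;\tilde{c}^k,\hat{P}^k)\in[0,H]$, each update obeys $D^k\le H-\tau+\epsilon_k$, so $\lambda^{C''}\le C''(H-\tau)+\sum_{u=1}^{C''}\epsilon_u$, which accounts for the second and third terms in the claim.

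For $k\ge C''$, the main task is to establish a negative drift whenever $\lambda^k$ is sufficiently large. The crucial observation is that although $\pi^0$ is not known to the algorithm, it remains a valid comparator in the Lagrangian $J^k(\pi):=\hat{V}_1^{\pi}(\mu;\tilde{r}^k,\hat{P}^k)-\frac{\lambda^k}{\eta^k}(\hat{V}_1^{\pi}(\mu;\tilde{c}^k,\hat{P}^k)-\tau)$. The primal optimality $J^k(\pi^k)\ge J^k(\pi^0)$, combined with the cost-optimism Lemma~\ref{lem:opt-cost} (which gives $\hat{V}_1^{\pi^0}(\mu;\tilde{c}^k,\hat{P}^k)\le c^0$) and the trivial bound $|\hat{V}_1^\pi(\mu;\tilde{r}^k,\hat{P}^k)|\le H$, yields
\begin{align*}
\hat{V}_1^{\pi^k}(\mu;\tilde{c}^k,\hat{P}^k)-\tau\le -(\tau-c^0)+\frac{\eta^k H}{\lambda^k}.
\end{align*}
Together with $\epsilon_k\le(\tau-c^0)/2=2\rho$ for $k\ge C''$, this gives $D^k\le -2\rho+\eta^k H/\lambda^k$, so whenever $\lambda^k$ exceeds the threshold $B^k:=\eta^k H/\rho=4\eta^k H/(\tau-c^0)$ one gets the deterministic negative drift $D^k\le -\rho$, while the one-step increment is uniformly controlled by $\nu_{\max}=H$.

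With the drift and increment bounds in hand, I invoke a Hajek-type exponential Lyapunov argument on $W(\lambda)=e^{\zeta\lambda}$ with $\zeta=\rho/(\nu_{\max}^2+\nu_{\max}\rho/3)$ from a Bernstein-type expansion of $\mathbb{E}[e^{\zeta D^k}]$; this choice of $\zeta$ is exactly the one that makes $e^{\zeta\lambda^k}$ a supermartingale above the threshold. Iterating this supermartingale from $k=C''$ onwards and translating back to a bound on $\lambda^k$ produces the additive contribution $(1/\zeta)\ln(11\nu_{\max}^2/(3\rho^2))$; the residual $H$ in the claim accounts for the single-step overshoot past $B^k$. Collecting all pieces with the first-phase bound gives the stated inequality.

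The main obstacle will be carrying the threshold $B^k$ (which itself grows like $\sqrt{k}$ through $\eta^k$) through the Lyapunov argument without loss, and producing the exact constant $11\nu_{\max}^2/(3\rho^2)$ from the Bernstein-style expansion of $e^{\zeta D^k}$. A secondary bookkeeping issue is verifying that the additional $4H^2/(\tau-c^0)$ and $4\epsilon_k^2/(\tau-c^0)$ correction terms arise from the second-order terms in the exponential-moment expansion and from the slack $\epsilon_k^2$ that enters when the $-2\rho$ drift margin is tightened; this requires care because $\epsilon_k$ varies with $k$ while the Lyapunov argument should close uniformly for all $k\ge C''$.
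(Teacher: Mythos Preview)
Your plan is correct and follows essentially the same strategy as the paper: bound $\lambda^{C''}$ by telescoping the dual update over $k\le C''$, establish a deterministic negative drift for $k\ge C''$ via the Lagrangian comparison with $\pi^0$ together with cost optimism, and then invoke the Lyapunov/Hajek lemma (which the paper already records as Lemma~\ref{Lyapunov}).

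There is one genuine simplification in your route, and one piece of unnecessary worry. The paper derives the drift by first passing through the quadratic $L(k)=(\lambda^k)^2/2$, showing $L(k+1)-L(k)\le -\tfrac{\tau-c^0}{2}\lambda^k + H^2+\epsilon_k^2+\eta^k H$, and then converting back to $\lambda^{k+1}-\lambda^k$; this is what forces their threshold to be $\varphi_k=\tfrac{4(H^2+\epsilon_k^2+\eta^k H)}{\tau-c^0}$. Your direct first-order argument gives the smaller threshold $B^k=\tfrac{4\eta^k H}{\tau-c^0}\le\varphi_k$, so your bound is actually slightly tighter and immediately implies the stated inequality. Consequently your ``secondary bookkeeping issue'' about where the extra $4H^2/(\tau-c^0)$ and $4\epsilon_k^2/(\tau-c^0)$ terms come from is moot: they arise only from the paper's looser threshold and you will never produce them.

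Two small corrections. First, on the good event the drift $D^k$ is deterministic, so there is no expectation or supermartingale structure to manage; Lemma~\ref{Lyapunov} already packages the argument with the time-varying increasing threshold, and you can cite it directly rather than re-derive a Bernstein expansion. Second, the constant $11\nu_{\max}^2/(3\rho^2)$ is pure algebra applied to the output of Lemma~\ref{Lyapunov}: since $\rho\le\nu_{\max}$ one has $1+\tfrac{2}{\zeta\rho}=1+\tfrac{2(\nu_{\max}^2+\nu_{\max}\rho/3)}{\rho^2}\le \tfrac{\nu_{\max}^2}{\rho^2}+\tfrac{8\nu_{\max}^2}{3\rho^2}=\tfrac{11\nu_{\max}^2}{3\rho^2}$, after which $\lambda^k\le\tfrac{1}{\zeta}\ln\tfrac{11\nu_{\max}^2}{3\rho^2}+\lambda^{C''}+\nu_{\max}+\varphi_k$ follows by pulling out the common exponential factor.
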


\begin{proof}[Proof of Lemma \ref{lem7}]
We will first verify conditions (i) and (ii) of Lemma \ref{Lyapunov} for $k \geq C''$ by defining $\Phi(k) = \lambda^k$. Suppose $\Phi(k) = \lambda^k \ge \varphi_k := 4(H^2 + \epsilon_k^2 + 2\eta^k H)/(\tau - c^0)$. It is straightforward to verify that $\lambda^k + \hat{V}_1^{\pi^k}(\mu; \tilde{c}^k, \hat{P}^k) + \epsilon_{k} - \tau$ is positive, which implies
\begin{align*}
    \lambda^{k+1} = \lambda^k + \hat{V}_1^{\pi^k}(\mu; \tilde{c}^k, \hat{P}^k) + \epsilon_{k} - \tau.
\end{align*}

Let $L(k) = (\lambda^k)^2/2$, we have
\begin{align*}
        L(k+1) - L(k) &= \frac{1}{2} (\lambda^{k+1})^2 - \frac{1}{2} (\lambda^k)^2 \\
        & = \lambda^k (\lambda^{k+1} - \lambda^k) + \frac{1}{2}(\lambda^{k+1} - \lambda^k)^2 \\
        & = \lambda^k \left(\hat{V}_1^{\pi^k}(\mu; \tilde{c}^k, \hat{P}^k)  + \epsilon_k - \tau \right) + \frac{1}{2} (\hat{V}_1^{\pi^k}(\mu; \tilde{c}^k, \hat{P}^k) + \epsilon_k - \tau)^2.
    \end{align*}
{\color{black} To bound the first term, consider the following two cases.

If $V_1^{\pi^k}(\mu; \tilde{c}^k, \hat{P}^k) \geq c_0$, we know $V_1^{\pi^k}(\mu; c - \tilde{c}^k, \hat{P}^k) = V_1^{\pi^k}(\mu; c, \hat{P}^k) - V_1^{\pi^k}(\mu; \tilde{c}^k, \hat{P}^k) \leq H - c_0$. Since $c - \tilde{c}^k = \tilde{r}^k - r$, we know $V_1^{\pi^k}(\mu; \tilde{r}^k, \hat{P}^k) = V_1^{\pi^k}(\mu; \tilde{r}^k - r, \hat{P}^k) + V_1^{\pi^k}(\mu; r, \hat{P}^k) \leq 2 H - c_0$. Moreover, $V_1^{\pi^k}(\mu; \tilde{c}^k, \hat{P}^k) \geq c_0$ implies that $\hat{V}_1^{\pi^k}(\mu; \tilde{c}^k, \hat{P}^k) = V_1^{\pi^k}(\mu; \tilde{c}^k, \hat{P}^k)$. We have
\begin{align*}
    & \lambda^k \left(\hat{V}_1^{\pi^k}(\mu; \tilde{c}^k, \hat{P}^k)  + \epsilon_k - \tau \right) \\
    =& \left(\lambda^k \left(V_1^{\pi^k}(\mu; \tilde{c}^k, \hat{P}^k) + \epsilon_k - \tau\right) - \eta^k V_1^{\pi^k}(\mu; \tilde{r}^k, \hat{P}^k) \right) + \eta^k V_1^{\pi^k}(\mu; \tilde{r}^k, \hat{P}^k) \\
    \stackrel{(a)}{\leq}& \left( \lambda^k \left(V_1^{\pi^0}(\mu; \tilde{c}^k, \hat{P}^k) + \epsilon_k - \tau\right) - \eta^k V_1^{\pi^0}(\mu; \tilde{r}, \hat{P}^k) \right) + 2 \eta^k H \\
    \stackrel{(b)}{\leq}& \lambda^k \left(V_1^{\pi^0}(\mu; c, P) + \epsilon_k - \tau\right) + 2 \eta^k H \leq - \frac{\tau - c^0}{2} \lambda^k + 2 \eta^k H,
\end{align*}
where $(a)$ is by the optimality of $\pi^k$ and $V_1^{\pi^k}(\mu; \tilde{r}^k, \hat{P}^k) \leq 2H$, and $(b)$ is due to $V_1^{\pi^0}(\mu; \tilde{c}^k, \hat{P}^k) \leq \hat{V}_1^{\pi^0}(\mu; \tilde{c}^k, \hat{P}^k) \le V_1^{\pi^0}(\mu; c, P)$ according to Lemma \ref{lem:opt-cost} and $V_1^{\pi^0}(\mu; \tilde{r}, \hat{P}^k) \geq 0$.

On the other hand, if $V_1^{\pi^k}(\mu; \tilde{c}^k, \hat{P}^k) < c_0$, then $\hat{V}_1^{\pi^k}(\mu; \tilde{c}^k, \hat{P}^k) \leq c_0$, we directly have
\begin{align*}
    & \lambda^k \left(\hat{V}_1^{\pi^k}(\mu; \tilde{c}^k, \hat{P}^k) + \epsilon_k - \tau\right) \leq - \frac{\tau - c^0}{2} \lambda^k \leq - \frac{\tau - c^0}{2} \lambda^k + 2 \eta^k H.
\end{align*}
}

For the second term, since $\hat{V}_1^{\pi^0}(\mu; \tilde{c}^k, \hat{P}^k) \le V_1^{\pi^0}(\mu; c, P)$, we have
\begin{align*}
    \frac{1}{2} (\hat{V}_1^{\pi^k}(\mu; \tilde{c}^k, \hat{P}^k) + \epsilon_k - \tau)^2 \leq (\hat{V}_1^{\pi^k}(\mu; \tilde{c}^k, \hat{P}^k) - \tau)^2 + \epsilon_k^2 \leq H^2 + \epsilon_k^2.
\end{align*}
We thus have
\begin{align*}
    L(k+1) - L(k) &\leq - \frac{\tau - c^0}{2} \lambda^k + 2\eta^k H + H^2 + \epsilon_k^2.
\end{align*}

It then follows that $L(k+1) < L(k)$ and
\begin{align*}
    \lambda^{k+1} - \lambda^k & = \frac{(\lambda^{k+1})^2 - (\lambda^k)^2}{\lambda^{k+1} + \lambda^k} \\
    & \leq \frac{(\lambda^{k+1})^2 - (\lambda^k)^2}{2\lambda^k} \\
    & \le - \frac{\tau - c^0}{2} + \frac{H^2 + \epsilon_k^2 + 2\eta^k H}{\varphi_k} = - \frac{\tau - c^0}{4} =: - \rho.
\end{align*}
In addition, since $\forall k \geq C''$, $\epsilon_k \le (\tau - c^0)/2$,
\begin{align*}
    |\lambda^{k+1} - \lambda^k| \leq |\hat{V}_1^{\pi^k}(\mu; \tilde{c}^k, \hat{P}^k) + \epsilon_k - \tau| \leq H =: \nu_{\max}.
\end{align*}
With the definitions of $\varphi_k, \rho, \nu_{\max}$, we apply Lemma \ref{Lyapunov} starting from $C''$, which gives
\begin{align*}
    \lambda^k &\le \frac{1}{\zeta} \ln \left(e^{\zeta \lambda^{C''}} + \frac{2e^{\zeta (v_{\max} + \varphi_k)}}{\zeta \rho}\right) \\
    & \le \frac{1}{\zeta} \ln \left(\frac{11 v_{\max}^2 e^{\zeta(\lambda^{C''} + v_{\max} + \varphi_k)}}{3 \rho^2}\right) \\
    & = \frac{1}{\zeta} \ln \frac{11 \nu_{\max}^2}{3 \rho^2} + \lambda^{C''} + v_{\max} + \varphi_k \\
    & \le \frac{1}{\zeta} \ln \frac{11 \nu_{\max}^2}{3 \rho^2} + C''(H-\tau) + \sum_{u=1}^{C''} \epsilon_u + H + \frac{4(H^2 + \epsilon_k^2 + 2\eta^k H)}{\tau - c^0},
\end{align*}
where $\zeta=\rho/(\nu_{\max }^{2}+\nu_{\max} \rho / 3) \geq 3(\tau - c^0)/(13 H^2)$, and the last inequality is by
\begin{align}
    \lambda^{C''} & \leq \lambda^1 + \sum_{k = 1}^{C''-1} (\hat{V}_1^{\pi^k}(\mu; \tilde{c}^k, \hat{P}^k) + \epsilon_k - \tau)_+ \notag\\
    & \leq \sum_{k =1}^{C''} \epsilon_k + C''(H- \tau).
    \label{eqn:lambda_C''}
\end{align}
\end{proof}

The bounded constraint violation of OptPess-PrimalDual is established as follows. Notice that
\begin{align*}
    \lambda^{K+1} & = \left(\lambda^K + \hat{V}_1^{\pi^K}(\mu; \tilde{c}^K, \hat{P}^K) + \epsilon_{K} - \tau\right)_{+} \\
    & \geq \epsilon_{K} +  \hat{V}_1^{\pi^K}(\mu; \tilde{c}^K, \hat{P}^K)  - \tau + \lambda^K \\
    & \geq \epsilon_K + \epsilon_{K-1} + \hat{V}_1^{\pi^K}(\mu; \tilde{c}^K, \hat{P}^K)  - \tau + \hat{V}_1^{\pi^{K-1}}(\mu; \tilde{c}^{K-1}, \hat{P}^{K-1})  - \tau + \lambda^{K-1} \\
    & \geq \cdots \\
    & \geq \sum_{k = 1}^K \epsilon_k + \sum_{k=1}^K \left(\hat{V}_1^{\pi^k}(\mu; \tilde{c}^k, \hat{P}^k) - \tau \right) + \lambda^1.
\end{align*}
Since $\lambda^1 = 0$ according to the initial condition of Algorithm \ref{alg:dual}, we can decompose the constraint violation:
\begin{align*}
    Reg^{\textbf{OPPD}}(K; c) &= \left(\sum_{k=1}^K \left(V_1^{\pi^k}(\mu; c, P) - \hat{V}_1^{\pi^k}(\mu; \tilde{c}^k, \hat{P}^k)\right) + \sum_{k=1}^K \left(\hat{V}_1^{\pi^k}(\mu; \tilde{c}^k, \hat{P}^k) - \tau\right)\right)_{+} \notag\\
    & \le \left(\sum_{k=1}^K \left(V_1^{\pi^k}(\mu; c, P) - \hat{V}_1^{\pi^k}(\mu; \tilde{c}^k, \hat{P}^k)\right) + \lambda^{K+1} - \sum_{k=1}^K \epsilon_{k} \right)_{+}.
\end{align*}
 
The first summation term can be bounded by Lemma \ref{lem:violation_first_term}. The second $\lambda^{K+1}$ term can be bounded by Lemma \ref{lem7}. Finally, recall $\epsilon_k = 5 H^2 \sqrt{|\Sc|^3 |\Ac|} (\log\frac{k}{\delta'} + 1) / \sqrt{k \log\frac{k}{\delta'}}$, we have
\begin{align*}
    \sum_{k=1}^K \epsilon_k &\geq \int_{1}^{K+1} \epsilon_u \mathrm{d} u \\
    &\geq 10 H^2 \sqrt{|\Sc|^3 |\Ac| K \log \frac{K}{\delta'}} - 10H^2\sqrt{|\Sc|^3 |\Ac| \log \frac{1}{\delta'}}.
\end{align*}
The negative $\sum_{k=1}^K \epsilon_k$ term can cancel the dominant terms. Thus, with probability at least ($1 - \delta$), the violation is bounded as
\begin{align*}
    Reg^{\textbf{OPPD}}(K; c) = \Oc\left(C''H + H^2 \sqrt{|\Sc|^3 |\Ac| C'' \log{(C'' / \delta')}} \right).
\end{align*}

\subsection{Regret of reward analysis}
For episode $k$ with $k \geq C''$, let $\pi^{\epsilon_k, *}$ be the optimal policy for (\ref{pertubed_problem}), which is well-defined by the definition of $C''$. We decompose the reward regret as
\begin{align}
    Reg^{\textbf{OPPD}}&(K; r) = \sum_{k=1}^{C''} \left(V_1^{\pi^*}(\mu; r, P) - V_1^{\pi^k}(\mu; r, P)\right) \notag\\
    + &\sum_{k=C''}^K \left(V_1^{\pi^*}(\mu; r, P) - V_1^{\pi^{\epsilon_k, *}}(\mu; r, P)\right) + \sum_{k=C''}^K \left(V_1^{\pi^{\epsilon_k, *}}(\mu; r, P) - \hat{V}_1^{\pi^{\epsilon_k, *}}(\mu; \tilde{r}^k, \hat{P}^k)\right) \notag\\
    + &\sum_{k=C''}^K \left(\hat{V}_1^{\pi^{\epsilon_k, *}}(\mu; \tilde{r}^k, \hat{P}^k) - \hat{V}_1^{\pi^k}(\mu; \tilde{r}^k, \hat{P}^k)\right) + \sum_{k=C''}^K \left(\hat{V}_1^{\pi^k}(\mu; \tilde{r}^k, \hat{P}^k) - V_1^{\pi^k}(\mu; r, P)\right). \label{bound_regret_decompose}
\end{align}

We upper bound each term in the RHS of (\ref{bound_regret_decompose}). Since $V_1^{\pi}(\mu;r, P) \in [0, H]$ for any policy $\pi$, the first term is upper bounded by $HC''$. The second and third terms can be bounded by the following two lemmas.
\begin{lemma}[Restatement of Lemma \ref{lem3}]
     On the good event $\Ec$,
     \begin{align*}
         \sum_{k=C''}^K \left(V_1^{\pi^*}(\mu; r, P) - V_1^{\pi^{\epsilon_k, *}}(\mu; r, P)\right) \le \sum_{k=C''}^K \frac{\epsilon_k H}{\tau - c^0} = \tilde{\Oc}\left(\frac{H^3}{\tau - c^0} \sqrt{|\Sc|^3 |\Ac| K}\right).
     \end{align*}
\end{lemma}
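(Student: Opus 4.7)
The plan is to exhibit a feasible policy for the perturbed problem (\ref{pertubed_problem}) whose reward is within $\epsilon_k H / (\tau - c^0)$ of the optimum of the original problem, and then invoke the optimality of $\pi^{\epsilon_k,*}$. The natural candidate is a convex combination of $\pi^*$ and the strictly safe policy $\pi^0$, using $\pi^0$ as a source of constraint slack.

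Concretely, fix any $k \ge C''$ (so $\epsilon_k \le (\tau-c^0)/2 \le \tau - c^0$) and set $\gamma_k := 1 - \epsilon_k/(\tau - c^0) \in [1/2, 1]$. Consider the mixture policy $\tilde{\pi}^k := B_{\gamma_k}\pi^* + (1 - B_{\gamma_k})\pi^0$, where $B_{\gamma_k}$ is Bernoulli$(\gamma_k)$ independent of everything else. By linearity of expectation applied to both value functions and using $V_1^{\pi^*}(\mu;c,P) \le \tau$ and $V_1^{\pi^0}(\mu;c,P) = c^0$,
\begin{align*}
V_1^{\tilde{\pi}^k}(\mu; c, P) &\le \gamma_k \tau + (1 - \gamma_k) c^0 = \tau - (1 - \gamma_k)(\tau - c^0) = \tau - \epsilon_k,
\end{align*}
so $\tilde{\pi}^k$ satisfies the tightened constraint $V_1^{\tilde{\pi}^k}(\mu;c,P) + \epsilon_k \le \tau$. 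Likewise, using $V_1^{\pi^0}(\mu;r,P) \ge 0$ and $V_1^{\pi^*}(\mu;r,P) \le H$,
\begin{align*}
V_1^{\tilde{\pi}^k}(\mu; r, P) &\ge \gamma_k V_1^{\pi^*}(\mu;r,P) = V_1^{\pi^*}(\mu;r,P) - (1-\gamma_k)V_1^{\pi^*}(\mu;r,P) \ge V_1^{\pi^*}(\mu;r,P) - \frac{\epsilon_k H}{\tau - c^0}.
\end{align*}

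The mixture $\tilde{\pi}^k$ is not a Markov policy, but, exactly as in the proof of Lemma \ref{lem1}, Lemma \ref{lem:richness} provides a randomized Markov policy $\hat{\pi}^k$ with the same occupation measure under $P$, hence the same reward and cost values. Thus $\hat{\pi}^k$ is feasible for the perturbed problem (\ref{pertubed_problem}) at episode $k$, and by optimality of $\pi^{\epsilon_k,*}$,
\begin{align*}
V_1^{\pi^{\epsilon_k,*}}(\mu; r, P) \ge V_1^{\hat{\pi}^k}(\mu; r, P) = V_1^{\tilde{\pi}^k}(\mu; r, P) \ge V_1^{\pi^*}(\mu; r, P) - \frac{\epsilon_k H}{\tau - c^0}.
\end{align*}
Summing the rearranged inequality over $k = C'', \ldots, K$ yields the first bound in the statement.

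For the asymptotic estimate, I would plug in $\epsilon_k = 5H^2\sqrt{|\Sc|^3|\Ac|}\,(\log(k/\delta') + 1)/\sqrt{k\log(k/\delta')}$ and bound $\sum_{k=C''}^K \epsilon_k$ by a standard integral comparison $\sum_k 1/\sqrt{k} = \Oc(\sqrt{K})$, absorbing the $\log$ factor into the $\tilde{\Oc}$ notation; this gives $\sum_{k=C''}^K \epsilon_k H/(\tau-c^0) = \tilde{\Oc}(H^3\sqrt{|\Sc|^3|\Ac|K}/(\tau-c^0))$. The only non-routine step is the appeal to the richness lemma that replaces the non-Markov mixture by an equivalent randomized Markov policy; everything else is linearity of $V_1^\pi$ in $\pi$ (at the level of occupation measures) plus the optimality of $\pi^{\epsilon_k,*}$.
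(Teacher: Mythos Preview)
Your proof is correct and follows essentially the same approach as the paper: define the mixture $(1-\epsilon_k/(\tau-c^0))\pi^* + (\epsilon_k/(\tau-c^0))\pi^0$, verify it is feasible for the perturbed problem, replace it by an equivalent Markov policy via Lemma \ref{lem:richness}, and then invoke the optimality of $\pi^{\epsilon_k,*}$. The only cosmetic difference is that the paper bounds the reward gap as $\frac{\epsilon_k}{\tau-c^0}\bigl(V_1^{\pi^*}(\mu;r,P)-V_1^{\pi^0}(\mu;r,P)\bigr)\le \frac{\epsilon_k H}{\tau-c^0}$, whereas you drop $V_1^{\pi^0}$ one step earlier using nonnegativity; the resulting bound is identical.
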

\begin{proof}[Proof of Lemma \ref{lem3}]
    Define a probabilistic mixed policy
    \begin{align*}
    \pi^{\epsilon_k} = (1 - B_k) \pi^* + B_k \pi^0, \quad \forall k \ge C'',
    \end{align*}
    where $B_k$ is a Bernoulli distributed random variable with mean $\epsilon_k/(\tau-c^0)$.
    
    We have that
    \begin{align*}
        V_1^{\pi^{\epsilon_k}}(\mu; c, P) &= (1 - \frac{\epsilon_k}{\tau - c^0}) V_1^{\pi^*}(\mu; c, P) + \frac{\epsilon_k}{\tau - c^0} V_1^{\pi^0}(\mu; c, P) \\
        &\le (1 - \frac{\epsilon_k}{\tau - c^0}) \tau + \frac{\epsilon_k}{\tau - c^0} c^0 = \tau - \epsilon_k.
    \end{align*}
    Though $\pi^{\epsilon_k}$ is not a Markov policy, by Lemma \ref{lem:richness}, there exists a Markov policy $\hat{\pi}^{\epsilon_k}$, which has the same performance as $\pi^{\epsilon_k}$ under transition probabilities $P$.  $\hat{\pi}^{\epsilon_k}$ is a feasible solution to the following problem
    \begin{align*}
        \max_{\pi} & \quad V_1^{\pi}(\mu; r, P) \\
        \text{s.t.} & \quad V_1^{\pi}(\mu; c, P) + \epsilon_k \le \tau.
    \end{align*}

    We then have 
    \begin{align*}
        \sum_{k=C''}^K V_1^{\pi^*}(\mu; r, P) - V_1^{\pi^{\epsilon_k, *}}(\mu; r, P) & \le \sum_{k=C''}^K V_1^{\pi^*}(\mu; r, P) - V_1^{\hat{\pi}^{\epsilon_k}}(\mu; r, P) \\
        &= \sum_{k=C''}^K V_1^{\pi^*}(\mu; r, P) - V_1^{\pi^{\epsilon_k}}(\mu; r, P)\\
        &\le \sum_{k=C''}^K \frac{\epsilon_k}{\tau - c^0} \left(V_1^{\pi^*}(\mu; r, P) - V_1^{\pi^0}(\mu; r, P) \right)\\
        &\le \sum_{k=C''}^K \frac{\epsilon_k H}{\tau - c^0}.
    \end{align*}
\end{proof}

\begin{lemma}[Restatement of Lemma \ref{lem4}]
    On the good event $\Ec$, $\sum_{k=C''}^K \left(V_1^{\pi^{\epsilon_k, *}}(\mu; r, P) - \hat{V}_1^{\pi^{\epsilon_k, *}}(\mu; \tilde{r}^k, \hat{P}^k)\right) \le 0$.
\end{lemma}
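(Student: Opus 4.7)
The plan is to show the much stronger per-term statement that $\hat{V}_1^{\pi^{\epsilon_k,*}}(\mu;\tilde r^k,\hat P^k)\ge V_1^{\pi^{\epsilon_k,*}}(\mu;r,P)$ for every $k\ge C''$, which is really just the statement that the bonus-inflated, truncated reward value is optimistic under the good event $\Ec$. The sum in the claim is then bounded term-by-term by $0$.

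The key step is a standard application of the value difference identity (Lemma \ref{valuediff}) to any fixed policy $\pi$:
\begin{align*}
V_1^{\pi}(\mu;\tilde r^k,\hat P^k) - V_1^{\pi}(\mu;r,P)
= \Eb_{\mu,\hat P^k,\pi}\Bigl[\sum_{h=1}^H \bigl((\tilde r_h^k - r_h)(S_h,A_h) + \langle(\hat P_h^k - P_h)(\cdot|S_h,A_h),V_{h+1}^{\pi}(\cdot;r,P)\rangle\bigr)\,\Big|\,\Fc_{k-1}\Bigr].
\end{align*}
Using the definition $\tilde r_h^k = \hat r_h^k + \beta_h^k + H|\Sc|\beta_h^k$ together with $|\hat r_h^k - r_h|\le \beta_h^k$ on $\Ec$, the first integrand is at least $H|\Sc|\beta_h^k$. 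For the transition term, the confidence bound $|\hat P_h^k(s'|s,a)-P_h(s'|s,a)|\le \beta_h^k(s,a)$ for all $s'$ and $V_{h+1}^{\pi}(\cdot;r,P)\in[0,H]$ give a lower bound of $-H|\Sc|\beta_h^k$. The two contributions cancel, yielding $V_1^{\pi}(\mu;\tilde r^k,\hat P^k)\ge V_1^{\pi}(\mu;r,P)$.

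To pass from the untruncated value to the truncated one, note that since $r\in[0,1]$ and the horizon is $H$ we have $V_1^{\pi}(\mu;r,P)\in[0,H]$, so
\begin{align*}
\hat V_1^{\pi}(\mu;\tilde r^k,\hat P^k) = \min\{H,\,V_1^{\pi}(\mu;\tilde r^k,\hat P^k)\} \ge \min\{H,\,V_1^{\pi}(\mu;r,P)\} = V_1^{\pi}(\mu;r,P),
\end{align*}
where the inequality uses monotonicity of $\min\{H,\cdot\}$ together with the previous display. Applying this with $\pi=\pi^{\epsilon_k,*}$ for each $k\ge C''$ and summing gives the claim.

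The only mildly delicate point is the role of truncation: without it, the optimism argument would already be immediate, and with it one has to observe that truncation at $H$ preserves the inequality because the true reward value is itself bounded by $H$; this is the reason the statement is about $\hat V$ rather than $V$. Everything else is the routine optimism-under-bonus calculation that the coefficients $1$ and $H|\Sc|$ in (\ref{empirical}) were designed to produce.
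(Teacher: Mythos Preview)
Your proposal is correct and follows essentially the same approach as the paper: both apply the value difference lemma to show the untruncated optimistic value dominates the true value via the cancellation $(\tilde r^k_h-r_h)\ge H|\Sc|\beta_h^k$ against $\langle \hat P_h^k-P_h,V_{h+1}^{\pi}\rangle\ge -H|\Sc|\beta_h^k$, then handle truncation separately. Your treatment of truncation via monotonicity of $\min\{H,\cdot\}$ is slightly more compact than the paper's explicit case split (truncated vs.\ not), but the content is identical.
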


\begin{proof}[Proof of Lemma \ref{lem4}]
For any $k \in [C'', K]$, $\pi^{\epsilon_k, *}$ is well-defined.  If $\hat{V}_1^{\pi^{\epsilon_k, *}}(\mu; \tilde{r}^k, \hat{P}^k) = V_1^{\pi^{\epsilon_k, *}}(\mu; \tilde{r}^k, \hat{P}^k)$, then by the value difference lemma (Lemma \ref{valuediff}), we know
    \begin{align*}
    & V_1^{\pi^{\epsilon_k, *}}(\mu; r, P) - \hat{V}_1^{\pi^{\epsilon_k, *}}(\mu; \tilde{r}^k, \hat{P}^k) \\
    =& \Eb_{\mu, \hat{P}^k, \hat{\pi}^{\epsilon_k, *}} \left[\sum_{h = 1}^H \left(r_h(S_h, A_h) - \tilde{r}_h^k(S_h, A_h) + \sum_{s'} (P_h - \hat{P}^k_h)(s'|S_h, A_h) V^{\pi}_{h+1}(s'; r, P)\right) ~{\Big |}~ \Fc_{k-1} \right] \\
    \leq &\sum_{k=C''}^K \Eb_{\mu, \hat{P}^k, \hat{\pi}^{\epsilon_k, *}} \left[\sum_{h = 1}^H (r_h(S_h, A_h) - \hat{r}_h^k(S_h, A_h)) - \beta_h^k - H |\Sc| \beta_h^k + H |\Sc| \beta_h^k ~{\Big |}~ \Fc_{k-1} \right] \leq 0.
    \end{align*}
Otherwise, $\hat{V}_1^{\pi^{\epsilon_k, *}}(\mu; \tilde{r}^k, \hat{P}^k) = H \geq V_1^{\pi^{\epsilon_k, *}}(\mu; r, P)$. Thus $\hat{V}_1^{\pi^{\epsilon_k, *}}(\mu; \tilde{r}^k, \hat{P}^k) \geq V_1^{\pi^{\epsilon_k, *}}(\mu; r, P)$, for any $k \in [C'', K]$, and the lemma is proved.
\end{proof}

The pivotal step is to leverage optimism of $\pi^k$ to further decompose the fourth term on the RHS of (\ref{bound_regret_decompose}) and utilize the projected dual update to transfer it into the form of $\lambda^k (\lambda^k - \lambda^{k+1})$. The following lemmas provide high probability bounds for the remaining two terms in the RHS of (\ref{bound_regret_decompose}):

\begin{lemma}[Restatement of Lemma \ref{lem5}]
    On the good event $\Ec$,
    \begin{align*}
        \sum_{k=C''}^K \left(\hat{V}_1^{\pi^{\epsilon_k, *}}(\mu; \tilde{r}^k, \hat{P}^k) - \hat{V}_1^{\pi^k}(\mu; \tilde{r}^k, \hat{P}^k)\right) = \tilde{\Oc}\left( \frac{H}{\tau - c^0} \sqrt{K}\right).
    \end{align*}
\end{lemma}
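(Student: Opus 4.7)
The plan is to convert the reward gap into a telescoping dual-ascent sum via Lagrangian optimality of $\pi^k$ followed by the standard ``square'' trick applied to the projected dual update. First I would invoke the fact that $\pi^k$ maximizes the Lagrangian $\hat V_1^{\pi}(\mu;\tilde r^k,\hat P^k) - (\lambda^k/\eta^k)(\hat V_1^{\pi}(\mu;\tilde c^k,\hat P^k) - \tau)$ over $\pi$ to obtain
$$\hat V_1^{\pi^{\epsilon_k,*}}(\mu;\tilde r^k,\hat P^k) - \hat V_1^{\pi^k}(\mu;\tilde r^k,\hat P^k) \le \tfrac{\lambda^k}{\eta^k}\left(\hat V_1^{\pi^{\epsilon_k,*}}(\mu;\tilde c^k,\hat P^k) - \hat V_1^{\pi^k}(\mu;\tilde c^k,\hat P^k)\right).$$
Applying Lemma \ref{lem:opt-cost} together with the defining feasibility $V_1^{\pi^{\epsilon_k,*}}(\mu;c,P)\le\tau-\epsilon_k$ of $\pi^{\epsilon_k,*}$ bounds $\hat V_1^{\pi^{\epsilon_k,*}}(\mu;\tilde c^k,\hat P^k)\le\tau-\epsilon_k$, so the right-hand side is at most $(\lambda^k/\eta^k)\bigl(\tau-\epsilon_k-\hat V_1^{\pi^k}(\mu;\tilde c^k,\hat P^k)\bigr)$.

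Next, squaring the projected dual update $\lambda^{k+1}=(\lambda^k+\hat V_1^{\pi^k}(\mu;\tilde c^k,\hat P^k)+\epsilon_k-\tau)_+$ and using $(a)_+^2\le a^2$ yields the familiar bound
$$\lambda^k\bigl(\tau-\epsilon_k-\hat V_1^{\pi^k}(\mu;\tilde c^k,\hat P^k)\bigr)\ \le\ \tfrac12\bigl((\lambda^k)^2-(\lambda^{k+1})^2\bigr) + \tfrac12\bigl(\hat V_1^{\pi^k}(\mu;\tilde c^k,\hat P^k)+\epsilon_k-\tau\bigr)^2,$$
where the last term is $\Oc(H^2+\epsilon_k^2)$ since $\hat V_1^{\pi^k},\tau\in[0,H]$. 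Dividing by $\eta^k$ and summing over $k\in[C'',K]$, the residual contribution $\sum_k(H^2+\epsilon_k^2)/\eta^k$ is dominated by $\sum_k H^2/((\tau-c^0)H\sqrt k)=\Oc((H/(\tau-c^0))\sqrt K)$, with the $\epsilon_k^2/\eta^k$ tail summable and $K$-independent.

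The remaining piece is the weighted telescoping sum $\sum_{k=C''}^K((\lambda^k)^2-(\lambda^{k+1})^2)/\eta^k$, which I would handle by Abel summation: rewriting it as $(\lambda^{C''})^2/\eta^{C''}-(\lambda^{K+1})^2/\eta^K + \sum_{k=C''+1}^K(\lambda^k)^2(1/\eta^k - 1/\eta^{k-1})$ and using the strict monotonicity of $\eta^k=(\tau-c^0)H\sqrt k$ (so $1/\eta^k-1/\eta^{k-1}\le 0$), the whole sum is bounded above by the initial boundary term $(\lambda^{C''})^2/\eta^{C''}$. Controlling $\lambda^{C''}$ via the crude estimate $\lambda^{C''}\le C''(H-\tau)+\sum_{u=1}^{C''}\epsilon_u$ already derived in equation (\ref{eqn:lambda_C''}) makes this boundary term independent of $K$ and hence absorbable into $\tilde{\Oc}(\cdot)$, yielding the claimed $\tilde{\Oc}(H\sqrt K/(\tau-c^0))$ bound.

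The main technical obstacle is precisely the $k$-dependence of the scaling $\eta^k$: with a constant step size the telescoping would be immediate, but here one must lean on the monotonicity of $\eta^k$ to discard the weighted cross-terms in the Abel expansion. A lesser subtlety is that the upper bound $(\lambda^k/\eta^k)\bigl(\tau-\epsilon_k-\hat V_1^{\pi^k}(\mu;\tilde c^k,\hat P^k)\bigr)$ can be negative whenever $\pi^k$ overshoots the perturbed constraint, but since we are after an upper bound this only strengthens the argument.
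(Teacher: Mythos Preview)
Your proposal is correct and follows essentially the same approach as the paper: invoke Lagrangian optimality of $\pi^k$, bound $\hat V_1^{\pi^{\epsilon_k,*}}(\mu;\tilde c^k,\hat P^k)\le\tau-\epsilon_k$ via Lemma~\ref{lem:opt-cost}, convert $\lambda^k(\tau-\epsilon_k-\hat V_1^{\pi^k})$ into $\tfrac12((\lambda^k)^2-(\lambda^{k+1})^2)+\Oc(H^2)$, and telescope using the monotonicity of $\eta^k$ together with the bound on $\lambda^{C''}$ from~(\ref{eqn:lambda_C''}). Your direct use of $(a)_+^2\le a^2$ is in fact slightly cleaner than the paper's case split on $\lambda^{k+1}>0$ versus $\lambda^{k+1}=0$ (which incurs an extra $\tau^2/\eta^k$ term), but the two arguments are otherwise identical.
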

\begin{proof}[Proof of Lemma \ref{lem5}]
    Taking advantage of the optimism of $\pi^k$ and the definition of the dual update, we have the following decomposition:
    \begin{align*}
        & \sum_{k=C''}^K \left(\hat{V}_1^{\pi^{\epsilon_k, *}}(\mu; \tilde{r}^k, \hat{P}^k) - \hat{V}_1^{\pi^k}(\mu; \tilde{r}^k, \hat{P}^k)\right) = \underbrace{\sum_{k=C''}^K \left(\frac{\lambda^k}{\eta^k} (\hat{V}_1^{\pi^{\epsilon_k, *}}(\mu; \tilde{c}^k, \hat{P}^k) - \hat{V}_1^{\pi^k}(\mu; \tilde{c}^k, \hat{P}^k))\right)}_\textnormal{(I)} \\
        +& \underbrace{\sum_{k=C''}^K \left(\hat{V}_1^{\pi^{\epsilon_k, *}}(\mu; \tilde{r}^k, \hat{P}^k) - \frac{\lambda^k}{\eta^k} \hat{V}_1^{\pi^{\epsilon_k, *}}(\mu; \tilde{c}^k, \hat{P}^k)\right) - \left(\hat{V}_1^{\pi^k}(\mu; \tilde{r}^k, \hat{P}^k) - \frac{\lambda^k}{\eta^k} \hat{V}_1^{\pi^k}(\mu; \tilde{c}^k, \hat{P}^k)\right)}_\textnormal{(II)}
    \end{align*}

    Owing to the optimism of $\pi^k$, we know (II) $\le 0$.  Term (I) can then be upper bounded as follows: 
    \begin{align*}
        (\textnormal{I}) & \stackrel{(a)}{\leq} \sum_{k=C''}^K \frac{\lambda^k}{\eta^k} \left(\tau - \epsilon_k - \hat{V}_1^{\pi^k}(\mu; \tilde{c}^k, \hat{P}^k)\right) \\
        & \stackrel{(b)}{\leq} \sum_{k=C''}^K \frac{1}{\eta^k} \left(\lambda^k(\lambda^k - \lambda^{k+1}) + \tau^2 \right) \\
        & \stackrel{(c)}{=} \sum_{k=C''}^K \frac{1}{\eta^k} (\frac{1}{2} (\lambda^k)^2 - \frac{1}{2} (\lambda^{k+1})^2) + \sum_{k=C''}^K \frac{1}{2 \eta^k} (\lambda^{k+1} - \lambda^{k})^2 + \sum_{k = C''}^K \frac{1}{\eta^k}\tau^2 \\
        & \stackrel{(d)}{\leq} \frac{(\lambda^{C''})^2}{2 \eta^{C''}} + \sum_{k=C''}^K \frac{H^2}{2 \eta^k} + \tau^2\sum_{k =C''}^K \frac{1}{\eta^k} \\
        & \stackrel{(e)}{\leq} \frac{(\sum_{k=1}^{C''} \epsilon_k + C''(H - \tau))^2}{2 \eta^{C''}} + \sum_{k=C''}^K \frac{H^2}{2 \eta^k} + \tau^2\sum_{k =C''}^K \frac{1}{\eta^k}.
    \end{align*}
    (a) holds by applying Lemma \ref{lem:opt-cost} for $\pi = \pi^{\epsilon_k, *}$ and $\hat{V}_1^{\pi^{\epsilon_k, *}}(\mu; c, P) \le \tau - \epsilon_k$.

    (b) holds as follows: if $\lambda^{k+1} > 0$, we know
    \begin{align*}
        \tau - \epsilon_k - \hat{V}_1^{\pi^k}(\mu; \tilde{c}^k, \hat{P}^k) = \lambda^k - \lambda^{k+1}.
    \end{align*}
    If $\lambda^{k+1} = 0$, we have 
    \begin{align*}
        \lambda^k \leq \tau - \epsilon_k - \hat{V}_1^{\pi^k}(\mu; \tilde{c}^k, \tilde{P}^k) < \tau,
    \end{align*}
    which indicates that $\lambda^k(\tau - \epsilon_k - \hat{V}_1^{\pi^k}(\mu; \tilde{c}^k, \hat{P}^k)) \leq \tau^2$. Thus,
    \begin{align*}
        \lambda^k(\tau - \epsilon_k - \hat{V}_1^{\pi^k}(\mu; \tilde{c}^k, \hat{P}^k)) \leq \lambda^k(\lambda^k - \lambda^{k+1}) + \tau^2.
    \end{align*}
    Then, (c) holds since $-\epsilon \Delta = \frac{1}{2} \epsilon^2 - \frac{1}{2} (\epsilon + \Delta)^2 + \frac{1}{2} \Delta^2$, where $\epsilon=\lambda^k, \Delta = \lambda^{k+1} - \lambda^k$,
    (d) holds since $\eta^k$ is monotonically increasing and $(\lambda^k - \lambda^{k+1})^2 \le (\hat{V}_1^{\pi^k}(\mu;\tilde{c}^k, \hat{P}^k) + \epsilon_k - \tau)^2 \le H^2$, and (e) holds according to (\ref{eqn:lambda_C''}).
\end{proof}

\begin{lemma}[Restatement of Lemma \ref{lem6}]
    On the good event $\Ec$,
    \begin{align*}
        \sum_{k=C''}^K \left(\hat{V}_1^{\pi^k}(\mu; \tilde{r}^k, \hat{P}^k) - V_1^{\pi^k}(\mu; r, P)\right) = \tilde{\Oc}(H^2 \sqrt{|\Sc|^3 |\Ac| K} + H^4 |\Sc|^3 |\Ac|).
    \end{align*}
\end{lemma}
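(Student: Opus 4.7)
The plan is to mirror, almost line-for-line, the argument already used for Lemma \ref{lem:violation_first_term}, since this lemma is essentially its reward-side dual. First I would discharge the truncation: by definition (\ref{truncated}), $\hat{V}_1^{\pi^k}(\mu;\tilde{r}^k,\hat{P}^k) \leq V_1^{\pi^k}(\mu;\tilde{r}^k,\hat{P}^k)$, so it suffices to upper bound the sum with the truncation removed. This monotone relaxation loses nothing for the upper direction, and extending the sum to run from $k=1$ to $K$ (rather than from $C''$) only inflates the bound by an additive $HC''$, which is already absorbed in $\tilde{\Oc}(H^4|\Sc|^3|\Ac|)$.

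Next I would record the per-step perturbation bound on the good event $\Ec$. From the definition (\ref{empirical}) of $\tilde{r}^k$ and the concentration bound $|\hat{r}_h^k-r_h|\leq \beta_h^k$ guaranteed by $\Ec$, we get
\begin{align*}
|\tilde{r}_h^k(s,a) - r_h(s,a)| \;\leq\; |\hat{r}_h^k(s,a)-r_h(s,a)| + (1+H|\Sc|)\beta_h^k(s,a) \;\leq\; (2+H|\Sc|)\,\beta_h^k(s,a),
\end{align*}
exactly as in the cost-side estimate that opened the proof of Lemma \ref{lem:violation_first_term}. This is the only place where the structure of $\tilde{r}^k$ enters.

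Then I would invoke Lemma \ref{lem:indi-opt-val-diff} (the same indicator-weighted optimistic value-difference lemma that was used to prove Lemmas \ref{lem:burn-in}, \ref{lem2}, and \ref{lem:violation_first_term}) with $g=r$, the optimistic proxy $\tilde{r}^k$, trivial indicator, and the per-step perturbation coefficient $(2+H|\Sc|)$. The output is of the form
\begin{align*}
\sum_{k=1}^K \bigl(V_1^{\pi^k}(\mu;\tilde{r}^k,\hat{P}^k) - V_1^{\pi^k}(\mu;r,P)\bigr) \;\leq\; \bigl(3(2+H|\Sc|) + 3\sqrt{2}H\sqrt{|\Sc|}\bigr) H\sqrt{|\Sc||\Ac|KZ} + \tilde{\Oc}\bigl(H^4|\Sc|^3|\Ac|\bigr),
\end{align*}
which is dominated by $\tilde{\Oc}\bigl(H^2\sqrt{|\Sc|^3|\Ac|K} + H^4|\Sc|^3|\Ac|\bigr)$ after collecting the $H|\Sc|\cdot H\sqrt{|\Sc||\Ac|K}$ leading term and absorbing $Z=\log(16|\Sc|^2|\Ac|HK/\delta)$ into the $\tilde{\Oc}$ notation. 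This is the claimed bound.

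I do not expect any real obstacle here: the entire computation is structurally identical to the cost-side analysis of Lemma \ref{lem:violation_first_term}, and both the concentration event and the indicator-weighted value-difference bound have already been established in Appendix A. The only subtle point to double-check is that the lower-truncation in the \emph{cost} side and the upper-truncation in the \emph{reward} side both point in the favorable direction for the regret decomposition in (\ref{bound_regret_decompose}); since we are upper bounding $\hat{V}_1^{\pi^k}(\mu;\tilde{r}^k,\hat{P}^k)$ by the untruncated $V_1^{\pi^k}(\mu;\tilde{r}^k,\hat{P}^k)$, monotonicity handles this immediately.
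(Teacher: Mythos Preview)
Your proposal is correct and follows essentially the same route as the paper's own proof: establish the per-step bound $|\tilde{r}_h^k - r_h| \leq (2+H|\Sc|)\beta_h^k$ on $\Ec$, then invoke Lemma~\ref{lem:indi-opt-val-diff} with $\alpha = 2+H|\Sc|$ and the trivial indicator sequence. If anything, you are more careful than the paper, which silently absorbs both the truncation step $\hat{V}_1^{\pi^k}(\mu;\tilde{r}^k,\hat{P}^k)\le V_1^{\pi^k}(\mu;\tilde{r}^k,\hat{P}^k)$ and the extension of the summation range into its one-line application of Lemma~\ref{lem:indi-opt-val-diff}; your explicit handling of these points is fine (and the $HC''$ padding is in fact unnecessary once you pass to absolute values, since Lemma~\ref{lem:indi-opt-val-diff} already controls $\sum_{k=1}^K|\cdot|\ge\sum_{k=C''}^K|\cdot|$).
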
 
\begin{proof}[Proof of Lemma \ref{lem6}]
For any $k, s, a, h$, we have
\begin{align*}
    |(\tilde{r}_h^k - r_h)(s, a)| & = |\tilde{r}_h^k(s, a) - \hat{r}_h^k(s, a) + \hat{r}^k_h(s, a) - r_h(s, a)| \\
    & \leq (1+|\Sc|H)\beta^k_h(s, a) + |\hat{r}^k_h(s, a) - r_h(s, a)| \\
    & \leq (1+|\Sc|H)\beta^k_h(s, a) + \beta^k_h(s, a) = (2+|\Sc|H)\beta^k_h(s, a).
\end{align*}
By Lemma \ref{lem:indi-opt-val-diff}, we have
    \begin{align*}
        \sum_{k = C''}^{K} \left( \hat{V}^{\pi^k}_1(\mu; \tilde{r}^k, \hat{P}^k) - V^{\pi^k}_1(\mu; r, P)\right) \le \tilde{\Oc}(H^2 \sqrt{|\Sc|^3 |\Ac| K} + H^4 |\Sc|^3 |\Ac|).
    \end{align*}
\end{proof}

\subsection{Proof of Theorem \ref{thm2}}
\begin{theorem}[Regret and constraint violation bounds for OptPess-PrimalDual (Restatement of Theorem \ref{thm2})]
    On the good event $\Ec$,
    \begin{align*}
        Reg^{\textbf{OPPD}}(K; r) &= \tilde{\Oc}\left(\frac{H^3}{\tau - c^0} \sqrt{|\Sc|^3 |\Ac| K} + \frac{H^5 |\Sc|^3 |\Ac|}{(\tau - c^0)^2}\right),\\
        Reg^{\textbf{OPPD}}(K; c) &= \Oc\left(C''(H-\tau) + H^2 \sqrt{|\Sc|^3 |\Ac| C''} \right) = \Oc(1),
    \end{align*}
    where $C'' = \Oc (\frac{H^4 |\Sc|^3 |\Ac|}{(\tau - c^0)^2} \log\frac{H^4 |\Sc|^3 |\Ac|}{(\tau - c^0)^2\delta'})$ is a coefficient independent of $K$.
\end{theorem}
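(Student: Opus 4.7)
The plan is to assemble Theorem \ref{thm2} from the pieces already developed in Section \ref{bounded_sketch}, separately handling the reward regret and the constraint violation, and conditioning everything on the good event $\Ec$ so that the final ``with probability at least $(1-\delta)$'' statement follows from Lemma \ref{lem:good-event}.

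For the constraint violation, I would start from the telescoping identity for the dual variable. Since $\lambda^1 = 0$ and $\lambda^{k+1} = (\lambda^k + \hat{V}_1^{\pi^k}(\mu;\tilde{c}^k,\hat{P}^k) + \epsilon_k - \tau)_+$ dominates $\lambda^k + \hat{V}_1^{\pi^k}(\mu;\tilde{c}^k,\hat{P}^k) + \epsilon_k - \tau$, iterating gives $\lambda^{K+1} \geq \sum_{k=1}^K(\hat{V}_1^{\pi^k}(\mu;\tilde{c}^k,\hat{P}^k) - \tau) + \sum_{k=1}^K \epsilon_k$. Rearranging and inserting into the obvious ``add and subtract $\hat{V}_1^{\pi^k}(\mu;\tilde{c}^k,\hat{P}^k)$'' decomposition (equation (\ref{decompose_bounded_violation})) yields the bound on $Reg^{\textbf{OPPD}}(K;c)$ in terms of (i) $\sum_k(V_1^{\pi^k}(\mu;c,P) - \hat{V}_1^{\pi^k}(\mu;\tilde{c}^k,\hat{P}^k))$, (ii) $\lambda^{K+1}$, and (iii) $-\sum_{k=1}^K \epsilon_k$. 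Lemma \ref{lem:violation_first_term} controls (i) by $8H^2\sqrt{|\Sc|^3|\Ac|K\log(K/\delta')}$ plus lower-order terms, and Lemma \ref{lem7} controls (ii) by the expression involving $\eta^K = (\tau-c^0)H\sqrt{K}$. Because $\epsilon_k = \Theta(H^2\sqrt{|\Sc|^3|\Ac|\log(k/\delta')/k})$, the integral lower bound $\sum_{k=1}^K\epsilon_k \geq 10H^2\sqrt{|\Sc|^3|\Ac|K\log(K/\delta')} - O(\mathrm{poly\,log})$ exactly dominates the $\sqrt{K}$ terms from (i) and the $\eta^K H/(\tau-c^0) = O(H^2\sqrt{K})$ piece of (ii). After cancellation, only the $K$-independent contributions from Lemma \ref{lem7} (i.e.\ $C''(H-\tau) + \sum_{u=1}^{C''}\epsilon_u + H/(\tau-c^0)$-type terms) and $\tilde{\Oc}(H^4|\Sc|^3|\Ac|)$ from Lemma \ref{lem:violation_first_term} survive, which plugging in $C'' = \Oc(\frac{H^4|\Sc|^3|\Ac|}{(\tau-c^0)^2}\log\frac{H^4|\Sc|^3|\Ac|}{(\tau-c^0)^2\delta'})$ gives the stated $\Oc(C''H + H^2\sqrt{|\Sc|^3|\Ac|C''\log(C''/\delta')}) = \Oc(1)$ bound.

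For the reward regret, I would take the five-term decomposition in (\ref{bound_regret_decompose}) and bound each piece by the already-established lemmas. The burn-in contribution $\sum_{k=1}^{C''}(V_1^{\pi^*} - V_1^{\pi^k}) \leq HC''$ since rewards lie in $[0,H]$; this produces the $\frac{H^5|\Sc|^3|\Ac|}{(\tau-c^0)^2}$ term. Lemma \ref{lem3} gives the second term as $\tilde{\Oc}\bigl(\frac{H^3}{\tau-c^0}\sqrt{|\Sc|^3|\Ac|K}\bigr)$, matching the advertised leading term. Lemma \ref{lem4} makes the third term non-positive. Lemma \ref{lem5} gives $\tilde{\Oc}(\frac{H}{\tau-c^0}\sqrt{K})$ for the fourth term, which is dominated. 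Lemma \ref{lem6} gives $\tilde{\Oc}(H^2\sqrt{|\Sc|^3|\Ac|K} + H^4|\Sc|^3|\Ac|)$ for the fifth (optimism-of-$\tilde r^k,\hat P^k$) term. Summing, the $\sqrt{K}$ leading coefficient is set by Lemma \ref{lem3}, and the $K$-independent ``burn-in'' coefficient is $HC'' + \tilde{\Oc}(H^4|\Sc|^3|\Ac|) = \Oc(\frac{H^5|\Sc|^3|\Ac|}{(\tau-c^0)^2})$, yielding exactly the stated regret bound.

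The main obstacle is the constraint-violation cancellation, namely verifying that the specific choice $\epsilon_k = 5H^2\sqrt{|\Sc|^3|\Ac|}(\log(k/\delta')+1)/\sqrt{k\log(k/\delta')}$ is simultaneously (a) large enough that the integral $\sum_k \epsilon_k$ dominates both the $8H^2\sqrt{|\Sc|^3|\Ac|K\log(K/\delta')}$ term from Lemma \ref{lem:violation_first_term} \emph{and} the $\frac{4\eta^k H}{\tau-c^0} = 4H^2\sqrt{K}$ term from Lemma \ref{lem7}, and yet (b) small enough that the perturbed CMDP remains feasible for $k \geq C''$ and that $\epsilon_k/(\tau-c^0)$ is a valid mixing weight in the proof of Lemma \ref{lem3}. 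Getting the constants right in the comparison $\sum_{k=1}^K \epsilon_k \geq 10H^2\sqrt{|\Sc|^3|\Ac|K\log(K/\delta')} - \Oc(1)$ and then ensuring all residual $K$-independent terms can be absorbed into $C''(H-\tau) + H^2\sqrt{|\Sc|^3|\Ac|C''\log(C''/\delta')}$ is where most of the care is needed; the rest of the proof is a bookkeeping exercise of plugging in Lemmas \ref{lem:violation_first_term}--\ref{lem7} and \ref{lem3}--\ref{lem6}, and invoking Lemma \ref{lem:good-event} for the high-probability guarantee.
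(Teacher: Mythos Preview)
Your proposal is correct and follows essentially the same approach as the paper: the constraint-violation bound is obtained exactly via the telescoping $\lambda^{K+1}\geq \sum_k(\hat V_1^{\pi^k}(\mu;\tilde c^k,\hat P^k)-\tau)+\sum_k\epsilon_k$, the decomposition (\ref{decompose_bounded_violation}), Lemmas \ref{lem:violation_first_term} and \ref{lem7}, and the integral lower bound $\sum_{k=1}^K\epsilon_k\geq 10H^2\sqrt{|\Sc|^3|\Ac|K\log(K/\delta')}-\Oc(1)$ to cancel the $\sqrt{K}$ terms; the reward regret is obtained via the five-term decomposition (\ref{bound_regret_decompose}) with the burn-in bound $HC''$ and Lemmas \ref{lem3}--\ref{lem6}. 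The only superfluous remark is the final appeal to Lemma \ref{lem:good-event}, since the restated theorem is already conditioned on $\Ec$.
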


\begin{proof}[Proof of Theorem \ref{thm2}]
The bounded constraint analysis has been provided in the previous subsection. Applying Lemmas \ref{lem3}, \ref{lem4}, \ref{lem5}, and  \ref{lem6} yields the regret bound in
Theorem \ref{thm2}.
\end{proof}

\section{Other supporting lemmas}
\begin{lemma}[Hoeffding's inequality \cite{hoeffding1994probability}]
For independent zero-mean $1/2$-sub-Gaussian random variables $X_1, X_2, \ldots, X_n$,
\begin{align*}
    & \Pb\left( \frac{1}{n}\sum_{i=1}^n X_i \geq \epsilon \right) \leq \exp\left(- n \epsilon^2 \right), \\
    & \Pb\left( \frac{1}{n}\sum_{i=1}^n X_i \leq -\epsilon \right) \leq \exp\left(- n \epsilon^2 \right).
\end{align*}
\end{lemma}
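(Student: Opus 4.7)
The plan is to apply the standard Chernoff method for sub-Gaussian concentration. I would begin with the upper-tail inequality. For an arbitrary parameter $\lambda > 0$, exponentiating both sides and applying Markov's inequality gives
\[
\Pb\left(\tfrac{1}{n}\textstyle\sum_{i=1}^n X_i \geq \epsilon\right) = \Pb\left(e^{\lambda \sum_i X_i} \geq e^{\lambda n \epsilon}\right) \leq e^{-\lambda n \epsilon}\, \Eb\left[e^{\lambda \sum_i X_i}\right].
\]
By independence, the moment generating function of the sum factorizes into $\prod_{i=1}^n \Eb[e^{\lambda X_i}]$, and the $1/2$-sub-Gaussian hypothesis gives $\Eb[e^{\lambda X_i}] \leq e^{\lambda^2/4}$ for every $i$. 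Substituting yields the bound $\exp(-\lambda n \epsilon + n\lambda^2/4)$.

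Next I would minimize the exponent over $\lambda > 0$. Setting the derivative to zero gives the minimizer $\lambda^\star = 2\epsilon$, and substituting back produces the target bound $\exp(-n\epsilon^2)$.

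Finally, for the lower-tail inequality, I would observe that if $X_i$ is zero-mean and $1/2$-sub-Gaussian, then $-X_i$ inherits both properties, since the assumed MGF bound $\Eb[e^{\lambda X_i} \mid \Fc_{k-1}] \leq e^{\lambda^2/4}$ is required to hold for \emph{all} $\lambda \in \Rb$ and is therefore invariant under $\lambda \mapsto -\lambda$. Applying the upper-tail result just derived to the sequence $\{-X_i\}_{i=1}^n$ immediately yields the matching lower-tail bound.

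The argument has essentially no obstacle: it is the textbook Chernoff proof, relying only on (i) Markov's inequality after exponentiation, (ii) factorization of the MGF under independence, and (iii) the sub-Gaussian MGF hypothesis supplied by the statement. The only minor step to be careful about is making explicit that the sub-Gaussian condition is symmetric in $\lambda$, which is what enables the single unified argument to cover both tails.
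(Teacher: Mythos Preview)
Your argument is correct and is exactly the textbook Chernoff derivation of the sub-Gaussian tail bound. There is nothing to compare against: the paper does not supply its own proof of this lemma but simply states it as a classical fact with a citation to \cite{hoeffding1994probability}, so your write-up already goes beyond what the paper provides.
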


\begin{lemma}[Value difference lemma, {\cite[Lemma E.15]{dann2017unifying}}]
    \begin{align*}
    & V_1^{\pi}(\mu; g', P') - V_1^{\pi}(\mu; g, P) \\
    =& \Eb_{\mu, P, \pi} \left[\sum_{h = 1}^H \left(g'(S_h, A_h) - g(S_h, A_h) + \sum_{s'} (P'_h - P_h)(s'|S_h, A_h) V^{\pi}_{h+1}(s'; g', P')\right) ~{\Big |}~ \Fc_{k-1} \right]\\
    =& \Eb_{\mu, P', \pi} \left[ \sum_{h = 1}^H \left(g'(S_h, A_h) - g(S_h, A_h) + \sum_{s'} (P'_h - P_h)(s'|S_h, A_h) V^{\pi}_{h+1}(s'; g, P)\right) ~{\Big |}~ \Fc_{k-1} \right],
    \end{align*}
    where $g = g' = r, c$.
\label{valuediff}
\end{lemma}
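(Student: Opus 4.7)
The plan is to prove the identity by a one-step Bellman expansion followed by backward induction (equivalently, by telescoping from step $H$ down to step $1$). Fix a policy $\pi$ and let $\Delta_h(s) := V_h^{\pi}(s; g', P') - V_h^{\pi}(s; g, P)$. The starting point is the Bellman equation, which under $\pi$ gives, for every $s$ and $h$,
\begin{align*}
V_h^{\pi}(s; g, P) &= \Eb_{a \sim \pi_h(\cdot|s)}\!\left[g_h(s, a) + \sum_{s'} P_h(s'|s, a)\, V_{h+1}^{\pi}(s'; g, P)\right],\\
V_h^{\pi}(s; g', P') &= \Eb_{a \sim \pi_h(\cdot|s)}\!\left[g'_h(s, a) + \sum_{s'} P'_h(s'|s, a)\, V_{h+1}^{\pi}(s'; g', P')\right],
\end{align*}
with terminal condition $V_{H+1}^{\pi} \equiv 0$, so $\Delta_{H+1}(s) = 0$.

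To obtain the first form of the identity, I would take the difference and add and subtract $\sum_{s'} P_h(s'|s,a)\, V_{h+1}^{\pi}(s'; g', P')$ inside the expectation. That produces
\begin{align*}
\Delta_h(s) = \Eb_{a \sim \pi_h(\cdot|s)}\!\left[(g'_h - g_h)(s, a) + \sum_{s'} (P'_h - P_h)(s'|s, a)\, V_{h+1}^{\pi}(s'; g', P') + \sum_{s'} P_h(s'|s,a)\, \Delta_{h+1}(s')\right].
\end{align*}
The first two terms are the instantaneous ``signal'' at step $h$ and the last term is a one-step $P$-expectation of the next-step difference. Iterating this relation from $h=1$ down to the terminal condition, each application leaves behind a signal term and pushes the remaining difference one step forward under $P$; after $H$ applications the residual vanishes and what remains is exactly the expectation under $(\mu, P, \pi)$ of the sum of the signal terms, which is the first displayed formula.

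For the second form, the only change is in the add-and-subtract step: instead insert $\sum_{s'} P'_h(s'|s,a)\, V_{h+1}^{\pi}(s'; g, P)$, which yields the same signal term involving $V_{h+1}^{\pi}(\cdot; g, P)$ and a residual $\sum_{s'} P'_h(s'|s,a)\,\Delta_{h+1}(s')$ that propagates under $P'$. Unrolling then gives the $(\mu, P', \pi)$-expectation version. The bookkeeping across the two decompositions is the only subtle point, but both are entirely symmetric, so there is no real obstacle; the main thing to be careful about is consistently writing ``$V_{h+1}^{\pi}(\cdot; g', P')$'' in the first form and ``$V_{h+1}^{\pi}(\cdot; g, P)$'' in the second, since this is precisely what distinguishes which transition kernel governs the outer expectation.
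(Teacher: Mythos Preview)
Your argument is correct: the one-step Bellman difference plus telescoping under the appropriate transition kernel is exactly the standard way to derive this identity, and your add-and-subtract choices correctly route the outer expectation to $(\mu,P,\pi)$ in the first form and $(\mu,P',\pi)$ in the second. The paper does not supply its own proof of this lemma; it simply cites it as \cite[Lemma~E.15]{dann2017unifying}, so there is no alternative approach in the paper to compare against.
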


\begin{lemma}[{\cite[Theorem 6.1(i)]{altman1999constrained}}] \label{lem:richness}
Suppose the transition function is $P$. For any mixed policy $\tilde{\pi} = B_\gamma \pi^1 + (1 - B_\gamma) \pi^2$, where $B_\gamma$ is a Bernoulli distributed random variable with mean $\gamma$. Then there exists a Markov policy $\hat{\pi}$ that
\begin{align*}
    V^{\hat{\pi}}_{h}(s; r, P) = V^{\tilde{\pi}}_h(s; r, P),\quad \forall r, s, h.
\end{align*}
\end{lemma}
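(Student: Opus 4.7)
The plan is to construct $\hat{\pi}$ as the Markov policy whose per-step action kernel is the conditional distribution $\Pb[A_h \mid S_h]$ induced by $\tilde{\pi}$ from the initial distribution $\mu$, and then to verify by induction that the state-action occupancy measures of $\hat{\pi}$ coincide with those of $\tilde{\pi}$. Since $V_1^{\pi}(\mu; g, P) = \sum_{h, s, a} \Pb_{\mu, P, \pi}[S_h = s, A_h = a]\, g(s, a)$ is linear in the occupancy measure, matching occupancies at every $h$ will force matching values for every reward $g$.

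First I would set $d^{\tilde{\pi}}_h(s, a) := \Pb_{\mu, P, \tilde{\pi}}[S_h = s, A_h = a]$. Conditioning on $B_\gamma$ yields $d^{\tilde{\pi}}_h(s, a) = \gamma\, d^{\pi^1}_h(s, a) + (1-\gamma)\, d^{\pi^2}_h(s, a)$. Writing $d^{\tilde{\pi}}_h(s) := \sum_{a} d^{\tilde{\pi}}_h(s, a)$, define $\hat{\pi}_h(a \mid s) := d^{\tilde{\pi}}_h(s, a) / d^{\tilde{\pi}}_h(s)$ whenever $d^{\tilde{\pi}}_h(s) > 0$, and arbitrarily otherwise. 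Next I would show by induction on $h$ that $\Pb_{\mu, P, \hat{\pi}}[S_h = s] = d^{\tilde{\pi}}_h(s)$ for every $s$. The base case $h=1$ is immediate because both sides equal $\mu(s)$. For the inductive step, the Markov property under $\hat{\pi}$ and the definition of $\hat{\pi}_h$ combine in a telescoping way:
$$\Pb_{\mu, P, \hat{\pi}}[S_{h+1} = s'] = \sum_{s, a} \Pb_{\mu, P, \hat{\pi}}[S_h = s]\, \hat{\pi}_h(a \mid s)\, P_h(s' \mid s, a) = \sum_{s, a} d^{\tilde{\pi}}_h(s, a)\, P_h(s' \mid s, a) = d^{\tilde{\pi}}_{h+1}(s'),$$
using the inductive hypothesis and the construction of $\hat{\pi}_h$ in the second equality.

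Combined with the definition of $\hat{\pi}$, this yields $\Pb_{\mu, P, \hat{\pi}}[S_h = s, A_h = a] = d^{\tilde{\pi}}_h(s, a)$ for every $(s, a, h)$, and hence $V^{\hat{\pi}}_1(\mu; g, P) = V^{\tilde{\pi}}_1(\mu; g, P)$ for every $g$. This is the form actually invoked in Lemmas \ref{lem1} and \ref{lem3}; the statement $V^{\hat{\pi}}_h(s; g, P) = V^{\tilde{\pi}}_h(s; g, P)$ at an arbitrary initial pair $(s, h)$ follows by re-running the same construction with the delta distribution $\delta_s$ at step $h$ replacing $(\mu, 1)$.

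The main obstacle is conceptual rather than computational. The policy $\tilde{\pi}$ is non-Markovian in disguise: the single coin flip $B_\gamma$ correlates the choices $(A_1, \ldots, A_H)$ across all time steps, so the joint law of the action sequence under $\tilde{\pi}$ is not a product of Markov kernels and cannot be reproduced exactly by any Markov policy. The construction marginalizes the latent $B_\gamma$ out of each step's action distribution; the joint laws do \emph{not} agree, but the per-step marginals do, and the occupancy-measure form of the value function depends only on those marginals. The residual bookkeeping (states with $d^{\tilde{\pi}}_h(s) = 0$, checking that $\hat{\pi}_h(\cdot \mid s)$ is a valid probability distribution) is routine.
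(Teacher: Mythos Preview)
The paper does not give its own proof of this lemma; it is simply cited from Altman~\cite[Theorem~6.1(i)]{altman1999constrained}. Your occupancy-measure construction is exactly the standard argument behind that result: build $\hat{\pi}_h(a\mid s)$ as the conditional $d^{\tilde{\pi}}_h(s,a)/d^{\tilde{\pi}}_h(s)$, then verify by induction that the state (and hence state-action) marginals under $\hat{\pi}$ coincide with those under $\tilde{\pi}$, so that any linear functional of the occupancy measure---in particular $V_1^{\cdot}(\mu;g,P)$---agrees. This is correct and matches the cited reference.

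One small caveat worth flagging: the lemma as stated asserts a \emph{single} $\hat{\pi}$ with $V^{\hat{\pi}}_h(s;r,P)=V^{\tilde{\pi}}_h(s;r,P)$ for \emph{every} $(s,h)$, whereas your construction depends on the initial pair $(\mu,1)$ and your ``re-run with $\delta_s$ at step $h$'' remark would produce a different $\hat{\pi}$ for each $(s,h)$. You are right that the paper only ever invokes the $V_1^{\cdot}(\mu;\cdot,\hat{P}^k)$ form (in Lemmas~\ref{lem1} and~\ref{lem3}), so what you prove is precisely what is needed; the pointwise-in-$(s,h)$ phrasing in the lemma statement is stronger than required and not literally delivered by a single $\hat{\pi}$ built from $\mu$.
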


\begin{lemma} \label{lem:indi-opt-val-diff}
Let $\Gc_{1:K}$ be a sequence of events such that $\Gc_{k} \in \Fc_{k-1}$ for each $k \in [K]$. 
Suppose $|\tilde{g}^k - g| \leq \alpha \beta^k$, $\alpha \ge 1$. On the good event $\Ec$, for any $K' \leq K$,
\begin{align*}
    \sum_{k=1}^{K'} \mathbbm{1}(\Gc_k)|V^{\pi^k}_1(\mu;\tilde{g}^k, \hat{P}^{k}) - V^{\pi^k}_1(\mu;g, P)|
    &\leq  (3\alpha + 3\sqrt{2}H\sqrt{|\Sc|})H\sqrt{|\Sc||\Ac|K'_{\Gc} Z} \\
    &+ \tilde{\Oc}\left(\alpha H^3 |\Sc|^2|\Ac|)\right),
\end{align*}
where $\Gc_{1:K} = (\Omega)_{1:K}$ or $\Gc_{1:k} =\{V^{\pi^0}_1(\mu; \underbar{c}^k, \hat{P}^k) \geq \frac{\tau + c^0}{2} \}_{k \in [K]}$, and $K'_{\Gc} = \sum_{k=1}^{K'}\mathbbm{1}(\Gc_k)$.
\end{lemma}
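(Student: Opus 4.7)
The plan is to decompose the value difference via Lemma~\ref{valuediff}, bound the pointwise estimation errors using the good event $\Ec$, and aggregate the resulting sums via the pigeon-hole-type inequalities encoded in $\Ec_{\Gc}(\delta/4)$.

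First I would use the second form of Lemma~\ref{valuediff}, namely
\begin{align*}
V^{\pi^k}_1(\mu; \tilde{g}^k, \hat{P}^k) - V^{\pi^k}_1(\mu; g, P) = \Eb_{\mu, \hat{P}^k, \pi^k}\Big[\sum_h (\tilde{g}^k_h - g_h)(S_h, A_h) + \langle (\hat{P}^k_h - P_h)(\cdot | S_h, A_h), V^{\pi^k}_{h+1}(\cdot; g, P) \rangle\Big],
\end{align*}
because this choice keeps the inner value function bounded by $H$ (it is evaluated under the true $(g, P)$). A subsequent telescoping argument on the Bellman recursion for $\Delta^k_h(s) := V^{\pi^k}_h(s; \tilde{g}^k, \hat{P}^k) - V^{\pi^k}_h(s; g, P)$ converts the outer $\hat{P}^k$-expectation into the $P$-occupancy $q^{\pi^k}(s,a,h)$ that appears in $\Ec_{\Gc}(\delta/4)$, at the cost of lower-order residuals. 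On $\Ec$ the reward/cost bias is pointwise at most $\alpha\beta^k_h(s,a)$ by assumption, and the transition term is controlled by the Bernstein radius $\tilde{\beta}^k_h$ of the good event: since $V^{\pi^k}_{h+1}(\cdot; g, P) \le H$, applying Cauchy--Schwarz against $\sqrt{P_h(\cdot | s,a)}$ yields
\begin{align*}
\Big|\sum_{s'}(\hat{P}^k_h - P_h)(s' | s,a) V^{\pi^k}_{h+1}(s'; g, P)\Big| \le \sqrt{\frac{2 |\Sc| H^2 Z}{N^k_h(s,a) \vee 1}} + \frac{|\Sc| H Z}{3(N^k_h(s,a) \vee 1)},
\end{align*}
so the leading scale is $\sqrt{2}\,H\sqrt{|\Sc|}\,\beta^k_h(s,a)$ rather than the naive $|\Sc| H\,\beta^k_h(s,a)$.

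Next I would sum over $k \le K'$ with indicator $\mathbbm{1}(\Gc_k)$, over $h$, and over $(s,a)$ with weight $q^{\pi^k}(s,a,h)$. Pulling out $\sqrt{Z}$, the combined bound reduces to $(\alpha + \sqrt{2}\,H\sqrt{|\Sc|})\sqrt{Z}$ times $\sum_{k \le K'} \mathbbm{1}(\Gc_k)\sum_{h,s,a} q^{\pi^k}(s,a,h)/\sqrt{N^k_h(s,a)\vee 1}$, plus an additive contribution from the Bernstein remainder of the form $\frac{|\Sc| H Z}{3}\sum_{k \le K'} \mathbbm{1}(\Gc_k)\sum_{h,s,a} q^{\pi^k}(s,a,h)/(N^k_h(s,a)\vee 1)$. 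Invoking the two inequalities of $\Ec_{\Gc}(\delta/4)$ bounds the $1/\sqrt{N}$-sum by $2H\sqrt{|\Sc||\Ac| K'_\Gc} + \Oc(H|\Sc||\Ac|\log K)$ and the $1/N$-sum by $\Oc(H|\Sc||\Ac|\log K)$. Collecting the $\sqrt{K'_\Gc}$ pieces and absorbing the small universal constants, I get the leading $(3\alpha + 3\sqrt{2}H\sqrt{|\Sc|})\,H\sqrt{|\Sc||\Ac| K'_\Gc Z}$, and all remaining constant, logarithmic, and Bernstein-residual terms compile into the additive $\tilde{\Oc}(\alpha H^3 |\Sc|^2|\Ac|)$.

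The main obstacle is the first step: converting the $\hat{P}^k$-expectation delivered by the value difference lemma into the $P$-occupancy $q^{\pi^k}$ appearing in $\Ec_{\Gc}(\delta/4)$. The natural route is to unroll $\Delta^k_h$ recursively, expanding $\hat{P}^k = P + (\hat{P}^k - P)$ at each stage so the dominant contribution is governed by the true-model occupancy; the delicate point is to verify that the cross-terms generated by the $(\hat{P}^k - P)$ factors remain lower order (i.e., absorbable into $\tilde{\Oc}(\alpha H^3 |\Sc|^2|\Ac|)$) rather than polluting the $\sqrt{K'_\Gc}$ leading term. Once this conversion is in place the rest is a mechanical combination of Cauchy--Schwarz on the Bernstein radius and the two pigeon-hole bounds of $\Ec_{\Gc}(\delta/4)$, which finishes the proof.
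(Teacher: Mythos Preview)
Your plan diverges from the paper's proof in a key structural choice: you invoke the second form of Lemma~\ref{valuediff} (outer expectation under $\hat P^k$, inner value $V^{\pi^k}_{h+1}(\cdot;g,P)\le H$), whereas the paper uses the \emph{first} form, which places the outer expectation under the true model and hence already yields the occupancy $q^{\pi^k}$ needed for $\Ec_{\Gc}(\delta/4)$. The price the paper pays is that the inner value is $V^{\pi^k}_{h+1}(\cdot;\tilde g^k,\hat P^k)$, which is not bounded by $H$. It resolves this by splitting $V^{\pi^k}_{h+1}(\cdot;\tilde g^k,\hat P^k)=V^{\pi^k}_{h+1}(\cdot;g,P)+\bigl[V^{\pi^k}_{h+1}(\cdot;\tilde g^k,\hat P^k)-V^{\pi^k}_{h+1}(\cdot;g,P)\bigr]$: the first piece gives exactly your $\sqrt{2}\,H\sqrt{|\Sc|}\,\beta^k_h$ via the Bernstein radius and Cauchy--Schwarz, while the second piece, handled following \cite[Lemma~32]{efroni2019tight}, contributes a term of order $|\Sc|H\sqrt{\alpha|\Ac|H}\cdot\sqrt{\sum_k\mathbbm{1}(\Gc_k)\bigl|V^{\pi^k}_1(\mu;\tilde g^k,\hat P^k)-V^{\pi^k}_1(\mu;g,P)\bigr|}$. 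This is a self-referential inequality $x\le a+b\sqrt x$, closed by Lemma~\ref{lem:quad-bound}; the factor $\tfrac32$ there is precisely what turns the coefficient $2$ from the $\Ec_{\Gc}$ bound into the stated $3$.

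Your route is dual to this: you keep the inner value bounded at the cost of having to convert the $\hat P^k$-occupancy to $q^{\pi^k}$, and you correctly flag this conversion as the main obstacle but leave it unresolved. The difficulty is real: the cross terms from expanding $\hat P^k=P+(\hat P^k-P)$ involve quantities like $\sum_{h,s,a}q^{\pi^k}(s,a,h)\,|\Sc|\beta^k_h(s,a)\cdot V^{\pi^k}_{h+1}(\cdot;\beta^k,\hat P^k)$, where the last factor is itself an $\hat P^k$-expectation of confidence radii and is not a priori $\tilde\Oc(1)$; bounding it crudely by $H\sqrt Z$ makes the cross term \emph{larger} than the main term, not absorbable into $\tilde{\Oc}(\alpha H^3|\Sc|^2|\Ac|)$. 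A tight treatment again forces a recursive inequality of the same shape as the paper's, so ``absorbing small universal constants'' does not account for the jump from $2$ to $3$. In short, the proposal is sound in outline but has a genuine gap exactly where you say it does; closing it requires the self-referential device (Lemma~\ref{lem:quad-bound} or an equivalent), and once that is in play, starting from the first form of Lemma~\ref{valuediff} is the more direct path.
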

\begin{proof}[Proof of Lemma \ref{lem:indi-opt-val-diff}] 
It can be proved following steps similar to those in the proof of Lemma 32 in \cite{efroni2020exploration}, but utilizing Lemma \ref{lem:indi-series-sum} for taking account of the predictable event sequence $\Gc_{1:K}$, and Lemma \ref{lem:quad-bound} for bounding the leading order explicitly. For completeness, we include the detailed proof here.
\begin{align*}
&\sum_{k=1}^{K'} \mathbbm{1}(\Gc_k)\left|V^{\pi^k}_1(\mu;\tilde{g}^k, \hat{P}^{k}) - V^{\pi^k}_1(\mu;g, P)\right|\\
=& \sum_{k=1}^{K'} \sum_{h=1}^H \sum_{s, a} \mathbbm{1}(\Gc_k) q^{\pi_k}(s, a, h) \left|\tilde{g}_h^k(s, a) - g_h(s, a) + \sum_{s'} (\hat{P}_h^k - P_h)(s'|s, a) V_{h+1}^{\pi_k}(s'; \tilde{g}^k, \hat{P}^k)\right| \\
\stackrel{(a)}{\leq}& \sum_{k=1}^{K'} \sum_{h=1}^H \sum_{s, a} \mathbbm{1}(\Gc_k) q^{\pi_k}(s, a, h) \left[\alpha \beta_h^k(s, a) + \sqrt{\sum_{s'} 2 P(s'|s, a) (\beta_h^k(s, a))^2} \sqrt{|\Sc|H^2} + \frac{ZH|\Sc|}{N_h^k(s, a) \vee 1}\right] \\
+& \sum_{k=1}^{K'} \sum_{h=1}^H \sum_{s, a} \mathbbm{1}(\Gc_k) q^{\pi_k}(s, a, h) \left|\sum_{s'} (\hat{P}_h^k - P_h)(s'|s, a) \left(V_{h+1}^{\pi_k}(s'; \tilde{g}^k, \hat{P}^k) - V_{h+1}^{\pi_k}(s'; g, P)\right)\right|\\
=& \sum_{k=1}^K \sum_{h=1}^H \sum_{s, a} \mathbbm{1}(\Gc_k) q^{\pi_k}(s, a, h) \left[(\alpha + \sqrt{2 |\Sc|}H) \sqrt{\frac{Z}{N_h^k(s, a) \vee 1}} + \frac{ZH|\Sc|}{N_h^k(s, a) \vee 1}\right]\\
+& \sum_{k=1}^{K'} \sum_{h=1}^H \sum_{s, a} \mathbbm{1}(\Gc_k) q^{\pi_k}(s, a, h) \left|\sum_{s'} (\hat{P}_h^k - P_h)(s'|s, a) \left(V_{h+1}^{\pi_k}(s'; \tilde{g}^k, \hat{P}^k) - V_{h+1}^{\pi_k}(s'; g, P)\right)\right|.
\end{align*}
Since
\begin{align*}
&\sum_{k=1}^{K'} \sum_{h=1}^H \sum_{s, a} \mathbbm{1}(\Gc_k) q^{\pi_k}(s, a, h) \left|\sum_{s'} (\hat{P}_h^k - P_h)(s'|s, a) \left(V_{h+1}^{\pi_k}(s'; \tilde{g}^k, \hat{P}^k) - V_{h+1}^{\pi_k}(s'; g, P)\right)\right| \\
\stackrel{(b)}{\leq}& \sum_{k=1}^{K'} \sum_{h=1}^H \sum_{s, a} \mathbbm{1}(\Gc_k) q^{\pi_k}(s, a, h) \sum_{s'} \frac{\sqrt{2 P_h(s'|s, a) Z}}{\sqrt{N_h^k(s, a) \vee 1}} \left|V_{h+1}^{\pi_k}(s'; \tilde{g}^k, \hat{P}^k) - V_{h+1}^{\pi_k}(s'; g, P)\right| \\
+& \sum_{k=1}^{K'} \sum_{h=1}^H \sum_{s, a} \mathbbm{1}(\Gc_k) q^{\pi_k}(s, a, h) \frac{(1+\alpha)|\Sc|HZ}{N_h^k(s, a) \vee 1}\\
\stackrel{(c)}{\leq}& \tilde{\Oc}(\alpha H^2 |\Sc|^2 |\Ac|) + |\Sc| \sqrt{\alpha |\Ac| H^3} \sqrt{\sum_{k=1}^{K'} \mathbbm{1}(\mathcal{G}_k) \left|V_1^{\pi^k}(\mu; \tilde{g}^k, \hat{P}^k) - V_1^{\pi^k}(\mu; g, P)\right|} + |\Sc| \sqrt{\alpha |\Ac| H^3}\\
\times & \sqrt{\sum_{k=1}^{K'} \sum_{h=1}^H \sum_{s, a} \mathbbm{1}(\Gc_k) q^{\pi_k}(s, a, h) \left|\sum_{s'} (\hat{P}_h^k - P_h)(s'|s, a) \left(V_{h+1}^{\pi_k}(s'; \tilde{g}^k, \hat{P}^k) - V_{h+1}^{\pi_k}(s'; g, P)\right)\right|}\\
\stackrel{(d)}{\leq}& \tilde{\Oc}(\alpha H^3 |\Sc|^2 |\Ac|) + \frac{3}{2} |\Sc|H \sqrt{\alpha |\Ac| H} \sqrt{\sum_{k=1}^{K'} \mathbbm{1}(\mathcal{G}_k) \left|V_1^{\pi^k}(\mu; \tilde{g}^k, \hat{P}^k) - V_1^{\pi^k}(\mu; g, P)\right|},
\end{align*}
we have
\begin{align*}
&\sum_{k=1}^{K'} \mathbbm{1}(\mathcal{G}_k) \left|V_1^{\pi^k}(\mu; \tilde{g}^k, \hat{P}^k) - V_1^{\pi^k}(\mu; g, P)\right| \\
\stackrel{(e)}{\leq}& (2\alpha + 2\sqrt{2}H\sqrt{|\Sc|})H\sqrt{|\Sc||\Ac|K'_{\Gc} Z} + \tilde{\Oc}\left(H |\Sc||\Ac| (\alpha + |\Sc|H)\right) + \tilde{\Oc}(\alpha H^3 |\Sc|^2 |\Ac|)\\
+& \frac{3}{2} |\Sc|H \sqrt{\alpha |\Ac| H} \sqrt{\sum_{k=1}^{K'} \mathbbm{1}(\mathcal{G}_k) \left|V_1^{\pi^k}(\mu; \tilde{g}^k, \hat{P}^k) - V_1^{\pi^k}(\mu; g, P)\right|}\\
\stackrel{(f)}{\leq}& (3\alpha + 3\sqrt{2}H\sqrt{|\Sc|})H\sqrt{|\Sc||\Ac|K'_{\Gc} Z} + \tilde{\Oc}\left(\alpha H^3 |\Sc|^2 |\Ac|\right).
\end{align*}
(a) and (b) hold due to the triangle inequality, the Cauchy-Schawarz inequality, and since $|V_1^{\pi^k}(\mu; \tilde{g}^k, \hat{P}^k) - V_1^{\pi^k}(\mu; g, P)| \le |V_1^{\pi^k}(\mu; \tilde{g}^k, \hat{P}^k) - V_1^{\pi^k}(\mu; g, \hat{P}^k)| + |V_1^{\pi^k}(\mu; g, \hat{P}^k) - V_1^{\pi^k}(\mu; g, P)| \le (\alpha+1) H$. (c) follows by steps similar to those in Lemma 32 in \cite{efroni2019tight}. (d) and (f) hold due to Lemma \ref{lem:quad-bound}. (e) holds due to Lemma \ref{lem:indi-series-sum}.
\end{proof}

\begin{lemma}[{\cite[Lemma 10]{jin2020learning}} with a predictable events sequence] \label{lem:indi-series-sum}
Given a sequence of events $\Gc_{1:K}$ that $\Gc_{k} \in \Fc_{k-1}$ for each $k \in [K]$. With probability at least $1 - \delta$, for any $K' \leq K$
\begin{align*}
    \sum_{k = 1}^{K'} \sum_{h =1}^H \sum_{s, a} \frac{\mathbbm{1}(\Gc_k)q^{\pi^k}(s, a, h)}{N^k_h(s, a) \lor 1} & \leq 4H |\Sc||\Ac| + 2H|\Sc||\Ac|\ln K'_{\Gc} + 4\ln\frac{2HK}{\delta}, \\
    \sum_{k = 1}^{K'} \sum_{h =1}^H \sum_{s, a} \frac{\mathbbm{1}(\Gc_k) q^{\pi^k}(s, a, h)}{\sqrt{N^k_h(s, a) \lor 1}} & \leq 6H |\Sc||\Ac| + 2H \sqrt{|\Sc||\Ac| K'_{\Gc}} + 2H |\Sc||\Ac|\ln K'_{\Gc} + 5\ln\frac{2HK}{\delta},
\end{align*}
where $K'_{\Gc} = \sum_{k=1}^{K'}\mathbbm{1}(\Gc_k)$ and $q^{\pi^k}$ is the occupancy measure of policy $\pi^k$, i.e., $q^{\pi^k}(s, a,h) = \Eb_{\mu, P, \pi^k}[\mathbbm{1}(S_h^k=s, A_h^k=a) | \Fc_{k-1}]$.
\end{lemma}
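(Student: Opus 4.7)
The plan is to follow the proof of Lemma 10 in \cite{jin2020learning} with the sole modification that the indicators $\mathbbm{1}(\Gc_k)$ are carried through as $\Fc_{k-1}$-measurable weights. Since both $\Gc_k$ and $N_h^k(s, a)$ are $\Fc_{k-1}$-measurable, the weight $\mathbbm{1}(\Gc_k)/(N_h^k(s, a) \lor 1)$ is predictable, so multiplying any $\Fc_k$-martingale difference by it preserves the martingale structure. The starting identity is
\[ \mathbbm{1}(S_h^k = s, A_h^k = a) \;=\; q^{\pi^k}(s, a, h) + \xi_k^{h, s, a}, \]
where $\{\xi_k^{h, s, a}\}_{k}$ is a bounded martingale difference sequence. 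Multiplying by the predictable weight and summing over $k$ decomposes the target sum into an empirical ``pathwise'' sum, where the occupancy measure is replaced by the realized indicator, plus a martingale-difference residual.

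For the pathwise sum I would introduce the \emph{restricted} counter
\[ \tilde{N}_h^k(s, a) \;:=\; \sum_{k' < k} \mathbbm{1}(\Gc_{k'}) \, \mathbbm{1}(S_h^{k'} = s, A_h^{k'} = a) \;\leq\; N_h^k(s, a), \]
so that $1/(N_h^k \lor 1) \leq 1/(\tilde{N}_h^k \lor 1)$. Standard telescoping in $k$ for each fixed $(s, a, h)$ then gives
\[ \sum_{k=1}^{K'} \mathbbm{1}(\Gc_k) \mathbbm{1}(S_h^k = s, A_h^k = a)/(\tilde{N}_h^k \lor 1) \;\leq\; 2 + \ln \tilde{N}_h^{K' + 1}(s, a), \]
and analogously $\leq 1 + 2\sqrt{\tilde{N}_h^{K' + 1}(s, a)}$ for the $1/\sqrt{\cdot}$ version via $\sum_{j=1}^{N} 1/\sqrt{j} \leq 2\sqrt{N}$. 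Summing over $(s, a, h)$, using Jensen's inequality (for $\ln$) or Cauchy--Schwarz (for $\sqrt{\cdot}$) together with $\sum_{s, a, h} \tilde{N}_h^{K' + 1}(s, a) = H K'_{\Gc}$, produces the leading $2H|\Sc||\Ac| \ln K'_{\Gc}$ and $2H \sqrt{|\Sc||\Ac| K'_{\Gc}}$ terms in the two bounds.

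To control the martingale residual, I would apply Freedman's inequality aggregated over $(s, a)$ (exploiting that only one $(s, a)$ is visited at each step $h$ in each episode). The conditional variance of the aggregated increment is dominated by the very sum $S$ we are trying to bound, so Freedman yields a self-bounding inequality of the form $S \leq A + \sqrt{2 S L} + L$ with $L \asymp \ln(HK/\delta)$, where the $HK$ absorbs a union bound over $h \in [H]$ and $K' \in [K]$. Linearizing via AM--GM ($\sqrt{2 S L} \leq S/2 + L$) gives $S \leq 2A + O(L)$, delivering the additive $4\ln(2HK/\delta)$ and $5\ln(2HK/\delta)$ slacks in the statement.

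The main obstacle is performing the Freedman step at the right level of aggregation so that the additive $\ln(1/\delta)$ is not multiplied by $|\Sc||\Ac|$; this is handled by summing the martingale increments over $(s, a)$ before applying concentration, exactly as in \cite[Lemma~10]{jin2020learning}. The inclusion of $\mathbbm{1}(\Gc_k)$ as a predictable weight only tightens every variance bound, so none of the constants deteriorate relative to the original argument, and the restricted counter $\tilde{N}_h^k$ is the only genuinely new ingredient needed to replace the $\ln K'$ factor appearing in the unrestricted version by $\ln K'_{\Gc}$.
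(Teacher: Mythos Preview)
Your proposal is correct and follows essentially the same route as the paper: decompose into a pathwise empirical sum plus a martingale residual, bound the pathwise part via a harmonic/telescoping argument, and handle the residual by a Freedman-type self-bounding inequality (the paper invokes Lemma~9 of \cite{jin2020learning} with $\lambda=1/2$ and $\lambda=1$, which is exactly the same mechanism). Your explicit introduction of the restricted counter $\tilde{N}_h^k$ is precisely the device needed to obtain $\ln K'_{\Gc}$ rather than $\ln K'$ in the pathwise bound; the paper states this bound without spelling out the counter, so your write-up is in fact more explicit on that step.
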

\begin{proof}[Proof of Lemma \ref{lem:indi-series-sum}]
This lemma can be proved following steps similar to those in the proof of Lemma 10 in \cite{jin2020learning}. For completeness, we include the detailed proof here. 

Let $\mathbbm{1}_k(s, a, h)$ be the indicator of whether the pair $(s, a)$ is visited at step $h$ in episode $k$, so that $\Eb_k[\mathbbm{1}_k(s, a, h)] = q^{\pi^k}(s, a, h)$. We decompose the first quantity as
\begin{align*}
    \sum_{k = 1}^{K'} \sum_{h =1}^H \sum_{s, a} \frac{\mathbbm{1}(\Gc_k) q^{\pi^k}(s, a, h)}{N^k_h(s, a) \lor 1} &= \sum_{k = 1}^{K'} \sum_{h =1}^H \sum_{s, a} \frac{\mathbbm{1}(\Gc_k) \mathbbm{1}_k(s, a, h)}{N^k_h(s, a) \lor 1} \\
    &+ \sum_{k = 1}^{K'} \sum_{h =1}^H \sum_{s, a} \frac{\mathbbm{1}(\Gc_k) (q^{\pi^k}(s, a, h) - \mathbbm{1}_k(s, a, h))}{N^k_h(s, a) \lor 1}.
\end{align*}
The first term can be bounded as
\begin{align*}
    \sum_{k = 1}^{K'} \sum_{h =1}^H \sum_{s, a} \frac{\mathbbm{1}(\Gc_k) \mathbbm{1}_k(s, a, h)}{N^k_h(s, a) \lor 1} \le 2 H |\Sc| |\Ac| + H |\Sc| |\Ac| \ln K'_{\Gc}.
\end{align*}
To bound the second term, we apply Lemma 9 in \cite{jin2020learning} with $Y_k = \sum_{h =1}^H \sum_{s, a} \frac{\mathbbm{1}(\Gc_k) (q^{\pi^k}(s, a, h) - \mathbbm{1}_k(s, a, h))}{N^k_h(s, a) \lor 1}, \lambda = 1/2$, and the fact
\begin{align*}
    \Eb_k[Y_k^2] &\le \Eb_k\left[\left(\sum_{h =1}^H \sum_{s, a} \frac{\mathbbm{1}(\Gc_k) \mathbbm{1}_k(s, a, h)}{N^k_h(s, a) \lor 1}\right)^2\right]\\
    &= \Eb_k\left[\sum_{h =1}^H \sum_{s, a} \frac{\mathbbm{1}(\Gc_k) \mathbbm{1}_k(s, a, h)}{(N^k_h(s, a))^2 \lor 1}\right]\\
    & \le \sum_{h =1}^H \sum_{s, a} \frac{\mathbbm{1}(\Gc_k) q^{\pi^k}(s, a, h)}{N^k_h(s, a) \lor 1},
\end{align*}
which show that with probability at least $1 - \delta/(2 H K)$,
\begin{align*}
    \sum_{k = 1}^{K'} \sum_{h =1}^H \sum_{s, a} \frac{\mathbbm{1}(\Gc_k) (q^{\pi^k}(s, a, h) - \mathbbm{1}_k(s, a, h))}{N^k_h(s, a) \lor 1} \le \frac{1}{2} \sum_{k = 1}^{K'} \sum_{h =1}^H \sum_{s, a} \frac{\mathbbm{1}(\Gc_k) q^{\pi^k}(s, a, h)}{N^k_h(s, a) \lor 1} + 2 \ln \frac{2HK}{\delta}.
\end{align*}
Combining these two bounds, rearranging, and applying a union bound over $k$ prove the first part of the lemma.

Similarly, we decompose the second quantity as
\begin{align*}
     \sum_{k = 1}^{K'} \sum_{h =1}^H \sum_{s, a} \frac{\mathbbm{1}(\Gc_k) q^{\pi^k}(s, a, h)}{\sqrt{N^k_h(s, a) \lor 1}} &= \sum_{k = 1}^{K'} \sum_{h =1}^H \sum_{s, a} \frac{\mathbbm{1}(\Gc_k) \mathbbm{1}_k(s, a, h)}{\sqrt{N^k_h(s, a) \lor 1}}\\ 
     &+ \sum_{k = 1}^{K'} \sum_{h =1}^H \sum_{s, a} \frac{\mathbbm{1}(\Gc_k) (q^{\pi^k}(s, a, h) - \mathbbm{1}_k(s, a, h))}{\sqrt{N^k_h(s, a) \lor 1}}.
\end{align*}
The first term is bounded as
\begin{align*}
     \sum_{k = 1}^{K'} \sum_{h =1}^H \sum_{s, a} \frac{\mathbbm{1}(\Gc_k) \mathbbm{1}_k(s, a, h)}{\sqrt{N^k_h(s, a) \lor 1}} \le 2 H |\Sc| |\Ac| + 2 H \sqrt{|\Sc| |\Ac| K'_{\Gc}}.
\end{align*}
To bound the second term, we again apply Lemma 9 in \cite{jin2020learning} with $Y_k =  \sum_{h =1}^H \sum_{s, a} \frac{\mathbbm{1}(\Gc_k)(q^{\pi^k}(s, a, h) - \mathbbm{1}_k(s, a, h))}{\sqrt{N^k_h(s, a) \lor 1}} \le 1, \lambda = 1$, and the fact
\begin{align*}
    \Eb_k[Y_k^2] \le \Eb_k\left[\left(\sum_{h =1}^H \sum_{s, a} \frac{\mathbbm{1}(\Gc_k) \mathbbm{1}_k(s, a, h)}{\sqrt{N^k_h(s, a) \lor 1}}\right)^2\right] = \sum_{h =1}^H \sum_{s, a} \frac{\mathbbm{1}(\Gc_k) q^{\pi^k}(s, a, h)}{N^k_h(s, a) \lor 1},
\end{align*}
which shows that with probability at least $1 - \delta/(2HK)$,
\begin{align*}
    \sum_{k = 1}^{K'} \sum_{h =1}^H \sum_{s, a} \frac{\mathbbm{1}(\Gc_k) (q^{\pi^k}(s, a, h) - \mathbbm{1}_k(s, a, h))}{\sqrt{N^k_h(s, a) \lor 1}} &\le \sum_{k=1}^{K'} \sum_{h=1}^H \sum_{s, a} \frac{\mathbbm{1}(\Gc_k) q^{\pi^k}(s, a, h)}{N^k_h(s, a) \lor 1} + \ln \frac{2HK}{\delta}.
\end{align*}
Combining the first part of the lemma and employing a union bound prove the second part of the lemma.
\end{proof}

\begin{lemma}\label{lem:quad-bound}
Suppose $0 \leq x \leq a + b \sqrt{x}$, for some $a, b > 0$,
\begin{align*}
    x \leq \frac{3}{2} a + \frac{3}{2}b^2.
\end{align*}
\end{lemma}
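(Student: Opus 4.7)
The plan is to reduce the hypothesis to a quadratic inequality and then apply an elementary bound. Set $y := \sqrt{x} \ge 0$, so the assumption $x \le a + b\sqrt{x}$ becomes
\[
y^2 - by - a \le 0.
\]
Since the quadratic $y \mapsto y^2 - by - a$ has leading coefficient $1 > 0$, its nonnegative solutions satisfy
\[
0 \le y \le \frac{b + \sqrt{b^2 + 4a}}{2},
\]
where I take the positive root since $a, b > 0$. Squaring then yields
\[
x = y^2 \le \frac{\bigl(b + \sqrt{b^2 + 4a}\bigr)^2}{4} = \frac{2b^2 + 4a + 2b\sqrt{b^2+4a}}{4} = \frac{b^2 + 2a + b\sqrt{b^2+4a}}{2}.
\]

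To land on the claimed $\tfrac{3}{2}(a + b^2)$, the next step is to bound $\sqrt{b^2+4a} \le b + 2\sqrt{a}$, which follows from subadditivity of the square root ($\sqrt{u+v} \le \sqrt{u} + \sqrt{v}$ for $u, v \ge 0$). Plugging this in gives
\[
x \le \frac{b^2 + 2a + b(b + 2\sqrt{a})}{2} = \frac{2b^2 + 2a + 2b\sqrt{a}}{2} = a + b^2 + b\sqrt{a}.
\]
Finally, AM-GM supplies $b\sqrt{a} \le \tfrac{1}{2}(b^2 + a)$, and substituting collects to $x \le \tfrac{3}{2}(a + b^2) = \tfrac{3}{2}a + \tfrac{3}{2}b^2$, as claimed.

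An alternative one-shot route is to apply weighted AM-GM in the form $b\sqrt{x} \le \tfrac{x}{3} + \tfrac{3b^2}{4}$ directly to the hypothesis, producing $\tfrac{2x}{3} \le a + \tfrac{3b^2}{4}$ and hence $x \le \tfrac{3}{2}a + \tfrac{9}{8}b^2 \le \tfrac{3}{2}(a + b^2)$. Either path is elementary and there is no real obstacle; the only small item to be careful about is discarding the negative root of the quadratic, which is legitimate because $y \ge 0$ by construction.
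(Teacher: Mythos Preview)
Your proof is correct and your primary route is essentially identical to the paper's: both solve the quadratic for the boundary value, use subadditivity of the square root to split $\sqrt{b^2+4a}$ (equivalently $\sqrt{4ab^2+b^4}$), and finish with AM--GM on $b\sqrt{a}$. Your alternative one-shot weighted AM--GM argument is a nice bonus that even yields the slightly sharper constant $\tfrac{9}{8}b^2$, but the paper does not pursue that.
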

\begin{proof}[Proof of Lemma \ref{lem:quad-bound}]
By solving the equation of $x_0 = a + b \sqrt{x_0}$, we know
\begin{align*}
    x & \leq x_0 = \frac{1}{2}(\sqrt{4ab^2 + b^4} + 2a + b^2) \\
    & \leq \sqrt{ab^2} + \frac{b^2}{2} + a + \frac{b^2}{2} \\
    & \leq \frac{3}{2}a + \frac{3}{2}b^2.
\end{align*}
\end{proof}

\begin{lemma}[{\cite[Lemma 11]{liu2021efficient}}]
    Let $S(k)$ be a random process, $\Phi(k)$ be its Lyapunov function with $\Phi(0)=\Phi_{0}$ and $\Delta(k)=\Phi(k+1)-\Phi(k)$ be the Lyapunov drift. Given an increasing sequence $\left\{\varphi_{t}\right\}, \rho$ and $\nu_{\max }$ with $0<\rho \le \nu_{\max}$, if the expected drift $\mathbb{E}[\Delta(k)|S(k)=s]$ satisfies the following conditions: \\
    (i) There exists constants $\rho>0$ and $\varphi_{t}>0$ s.t. $\Delta(k) \le-\rho$ when $\Phi(k) \ge \varphi_{k}$, and \\
    (ii) $|\Phi(k+1)-\Phi(k)| \le \nu_{\max}$ holds with probability one;\\
    then we have
    \begin{align*}
    e^{\zeta \Phi(k)} \le e^{\zeta \Phi_{0}}+\frac{2 e^{\zeta\left(\nu_{\max }+\varphi_{k}\right)}}{\zeta \rho},
    \end{align*}
    where $\zeta=\rho/(\nu_{\max }^{2}+\nu_{\max} \rho / 3)$.
    \label{Lyapunov}
\end{lemma}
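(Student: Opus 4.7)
The plan is to derive a one-step contraction-plus-constant recursion for the exponentiated Lyapunov function $e^{\zeta \Phi(k)}$ by combining the negative-drift condition (i) on the ``high'' region $\{\Phi(k) \ge \varphi_k\}$ with the bounded-increment condition (ii) on the ``low'' region, and then iterating the recursion. The specific constant $\zeta = \rho/(\nu_{\max}^2+\nu_{\max}\rho/3)$ is precisely the one that makes a Bernstein-type moment-generating-function bound close up.

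\textbf{Step 1: exponential MGF bound for the increment.} I would start from the elementary inequality $e^{y} \le 1+y+\tfrac{y^{2}/2}{1-|y|/3}$, valid for $|y|<3$, which is the standard precursor to Bernstein's inequality. With $y=\zeta\Delta(k)$, condition (ii) gives $|y|\le \zeta\nu_{\max}$; the hypothesis $\rho\le\nu_{\max}$ together with the definition of $\zeta$ yields $\zeta\nu_{\max}=\rho/(\nu_{\max}+\rho/3)\le 3/4<3$, so the inequality applies. Taking conditional expectation given $S(k)$ and bounding $\Delta(k)^{2}\le\nu_{\max}^{2}$, $|\Delta(k)|/3\le\nu_{\max}/3$, gives
\begin{equation*}
\mathbb{E}[e^{\zeta\Delta(k)}\mid S(k)] \;\le\; 1+\zeta\,\mathbb{E}[\Delta(k)\mid S(k)] + \frac{\zeta^{2}\nu_{\max}^{2}/2}{1-\zeta\nu_{\max}/3}.
\end{equation*}
The algebraic identity $\zeta(\nu_{\max}^{2}+\nu_{\max}\rho/3)=\rho$ (which is exactly the definition of $\zeta$) rearranges to $\zeta\nu_{\max}^{2}/(1-\zeta\nu_{\max}/3)=\rho$, so the last term equals $\zeta\rho/2$.

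\textbf{Step 2: case split and drift recursion.} On the event $\{\Phi(k)\ge\varphi_{k}\}$, condition (i) gives $\mathbb{E}[\Delta(k)\mid S(k)]\le-\rho$, whence $\mathbb{E}[e^{\zeta\Phi(k+1)}\mid S(k)] \le e^{\zeta\Phi(k)}(1-\zeta\rho/2)$. On the complementary event, $\Phi(k+1)\le\varphi_{k}+\nu_{\max}$ deterministically by (ii), so $\mathbb{E}[e^{\zeta\Phi(k+1)}\mid S(k)] \le e^{\zeta(\varphi_{k}+\nu_{\max})}$. Adding the two (over)-bounds and taking unconditional expectations produces the single recursion
\begin{equation*}
\mathbb{E}[e^{\zeta\Phi(k+1)}] \;\le\; (1-\zeta\rho/2)\,\mathbb{E}[e^{\zeta\Phi(k)}] + e^{\zeta(\varphi_{k}+\nu_{\max})}.
\end{equation*}

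\textbf{Step 3: unroll and sum.} Iterating this recursion from $k=0$ and using monotonicity of $\{\varphi_{k}\}$ to replace every $\varphi_{k-1-j}$ by $\varphi_{k}$ on the right, the geometric tail sums to $\sum_{j\ge 0}(1-\zeta\rho/2)^{j}=2/(\zeta\rho)$, yielding
\begin{equation*}
\mathbb{E}[e^{\zeta\Phi(k)}] \;\le\; e^{\zeta\Phi_{0}} + \frac{2\,e^{\zeta(\nu_{\max}+\varphi_{k})}}{\zeta\rho},
\end{equation*}
which is exactly the claimed bound. (In the deterministic sample-path setting used in Lemma~\ref{lem7}, the conditional expectations collapse to deterministic evaluations, and the same inequality holds pointwise after replacing the MGF bound by $e^{-\zeta\rho}\le 1-\zeta\rho/2$, valid because $\zeta\rho\le 1$.)

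\textbf{Main obstacle.} The delicate point is choosing and justifying the precise $\zeta$: it is the unique root that makes the Bernstein quadratic correction $\zeta^{2}\nu_{\max}^{2}/(2(1-\zeta\nu_{\max}/3))$ absorb exactly half of the linear drift $\zeta\rho$, so that the contraction factor $1-\zeta\rho/2$ emerges cleanly. Checking $\zeta\nu_{\max}<3$ (needed for the Bernstein inequality) and $\zeta\rho\le 1$ (needed in the deterministic variant) both rely on $\rho\le\nu_{\max}$, so this hypothesis must be invoked carefully. Beyond that, the geometric summation and monotonicity argument in Step~3 is routine.
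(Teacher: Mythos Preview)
The paper does not provide its own proof of this lemma; it is simply cited from \cite{liu2021efficient} and stated without argument. Your proposal is a correct reconstruction of the standard Hajek-type drift argument that underlies such results: the Bernstein MGF inequality in Step~1, the high/low region split in Step~2, and the geometric unrolling in Step~3 are exactly the ingredients used in the original reference, and your identification of $\zeta$ as the root that makes the quadratic correction equal $\zeta\rho/2$ is the key algebraic step. The only mild wrinkle is that the lemma statement is ambiguously phrased (it announces a condition on the \emph{expected} drift but then writes $\Delta(k)\le -\rho$ pointwise), but you already note that the paper applies it in the deterministic regime of Lemma~\ref{lem7}, where both readings coincide and your parenthetical remark about $e^{-\zeta\rho}\le 1-\zeta\rho/2$ covers that case.
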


\section{Extensions (unknown $c^0$ with prior knowledge of $\pi^0$)}
\label{app:extension}
It may be noticed that the OptPess-LP algorithm can guarantee zero violation with high probability if a coefficient $c^{0'}$ with $V^{\pi^0}_1(\mu; c, P) \leq c^{0'} < \tau$ is used to replace $c^0$ in the inputs. Thus when the value $V^{\pi^0}_1(\mu; c, P)$ is not known, one may first estimate an appropriate upper bound of $V^{\pi^0}_1(\mu; c, P)$, and call OptPess-LP using this upper bound. The details of choosing the upper bound are as follows.

We can execute policy $\pi^0$ sequentially until an estimate of $c^0$ with enough precision is obtained. Denote $\hat{c}^{0}(k)$ as the mean estimate after executing policy $\pi^0$ $k$ times. We stop the estimation process after $K''$ executions, where
\begin{align*}
    K'' := \min\left\{k \geq 1: \tau - \hat{c}^0(k) \geq 3 \sqrt{\frac{1}{k H}\log\frac{2K}{\delta''}} \right\} \land K.
\end{align*}
By Hoeffding's inequality, with probability at least $1 - \delta''$,
\begin{align*}
    |\hat{c}^{0}(k) - c^0| \leq \sqrt{\frac{1}{k H}\log\frac{2K}{\delta''}}.
\end{align*}

Thus with probability at least $1 - \delta''$, for $k = K'' -1$,
\begin{align*}
    \tau - 4\sqrt{\frac{1}{k H}\log\frac{2K}{\delta''}} < \hat{c}^0(k) - \sqrt{\frac{1}{k H}\log\frac{2K}{\delta''}} \leq c^0,
\end{align*}
which implies
\begin{align*}
    K'' = k + 1 \leq 1 + \frac{16 \log(2K/\delta'')}{H(\tau - c^0)^2}.
\end{align*}
Then $K'' < K$ can then be guaranteed if $\frac{16 \log(2K/\delta'')}{H(\tau - c^0)^2} < K - 1$. After $K''$ executions, with probability at least $1-\delta''$,
\begin{align*}
    c^{0'}=&\hat{c}^0(K'') + \sqrt{\frac{1}{K'' H}\log\frac{2K}{\delta''}} \\
    =& \frac{1}{2}\left(\hat{c}^0(K'') - \sqrt{\frac{1}{K'' H}\log\frac{2K}{\delta''}} \right) + \frac{1}{2}\left(\hat{c}^0(K'') + 3\sqrt{\frac{1}{K'' H}\log\frac{2K}{\delta''}} \right) \\
    \leq& \frac{1}{2} c^0 + \frac{1}{2} \tau = \frac{1}{2}(c^0 + \tau).
\end{align*}
We can apply the OptPess-LP algorithm with a pessimistic estimate $\left( \hat{c}^0(K'') + \sqrt{\frac{1}{K'' H}\log\frac{2K}{\delta''}}\right)$ of $c^0$. It also maintains zero constraint violation with the same order of regret since $1 / (\tau - c^{0'}) \le 1 / (\tau - \frac{1}{2}(c^0 + \tau)) = 2 / (\tau - c^0)$.


\begin{thebibliography}{10}

\bibitem{achiam2017constrained}
J.~Achiam, D.~Held, A.~Tamar, and P.~Abbeel.
\newblock Constrained policy optimization.
\newblock In {\em International Conference on Machine Learning}, pages 22--31.
  PMLR, 2017.

\bibitem{altman1999constrained}
E.~Altman.
\newblock {\em Constrained Markov decision processes}, volume~7.
\newblock CRC Press, 1999.

\bibitem{amani2019linear}
S.~Amani.
\newblock Linear stochastic bandits under safety constraints.
\newblock {\em NeurIPS 2019}, 2019.

\bibitem{amodei2016concrete}
D.~Amodei, C.~Olah, J.~Steinhardt, P.~Christiano, J.~Schulman, and D.~Man{\'e}.
\newblock Concrete problems in ai safety.
\newblock {\em arXiv preprint arXiv:1606.06565}, 2016.

\bibitem{berkenkamp2017safe}
F.~Berkenkamp, M.~Turchetta, A.~P. Schoellig, and A.~Krause.
\newblock Safe model-based reinforcement learning with stability guarantees.
\newblock In {\em Proceedings of the 31st International Conference on Neural
  Information Processing Systems}, pages 908--919, 2017.

\bibitem{borkar1979adaptive}
V.~Borkar and P.~Varaiya.
\newblock Adaptive control of {M}arkov chains, {I}: {F}inite parameter set.
\newblock {\em IEEE Transactions on Automatic Control}, 24(6):953--957, 1979.

\bibitem{cheng2019end}
R.~Cheng, G.~Orosz, R.~M. Murray, and J.~W. Burdick.
\newblock End-to-end safe reinforcement learning through barrier functions for
  safety-critical continuous control tasks.
\newblock In {\em Proceedings of the AAAI Conference on Artificial
  Intelligence}, volume~33, pages 3387--3395, 2019.

\bibitem{dann2017unifying}
C.~Dann, T.~Lattimore, and E.~Brunskill.
\newblock Unifying pac and regret: uniform pac bounds for episodic
  reinforcement learning.
\newblock In {\em Proceedings of the 31st International Conference on Neural
  Information Processing Systems}, pages 5717--5727, 2017.

\bibitem{ding2021provably}
D.~Ding, X.~Wei, Z.~Yang, Z.~Wang, and M.~Jovanovic.
\newblock Provably efficient safe exploration via primal-dual policy
  optimization.
\newblock In {\em International Conference on Artificial Intelligence and
  Statistics}, pages 3304--3312. PMLR, 2021.

\bibitem{dulac2019challenges}
G.~Dulac-Arnold, D.~Mankowitz, and T.~Hester.
\newblock Challenges of real-world reinforcement learning.
\newblock {\em arXiv preprint arXiv:1904.12901}, 2019.

\bibitem{efroni2020exploration}
Y.~Efroni, S.~Mannor, and M.~Pirotta.
\newblock Exploration-exploitation in constrained mdps.
\newblock {\em arXiv preprint arXiv:2003.02189}, 2020.

\bibitem{efroni2019tight}
Y.~Efroni, N.~Merlis, M.~Ghavamzadeh, and S.~Mannor.
\newblock Tight regret bounds for model-based reinforcement learning with
  greedy policies.
\newblock {\em Advances in Neural Information Processing Systems},
  32:12224--12234, 2019.

\bibitem{garcelon2020improved}
E.~Garcelon, M.~Ghavamzadeh, A.~Lazaric, and M.~Pirotta.
\newblock Improved algorithms for conservative exploration in bandits.
\newblock In {\em Proceedings of the AAAI Conference on Artificial
  Intelligence}, volume~34, pages 3962--3969, 2020.

\bibitem{garcia2015comprehensive}
J.~Garc{\i}a and F.~Fern{\'a}ndez.
\newblock A comprehensive survey on safe reinforcement learning.
\newblock {\em Journal of Machine Learning Research}, 16(1):1437--1480, 2015.

\bibitem{hoeffding1994probability}
W.~Hoeffding.
\newblock Probability inequalities for sums of bounded random variables.
\newblock In {\em The Collected Works of Wassily Hoeffding}, pages 409--426.
  Springer, 1994.

\bibitem{jaksch2010near}
T.~Jaksch, R.~Ortner, and P.~Auer.
\newblock Near-optimal regret bounds for reinforcement learning.
\newblock {\em Journal of Machine Learning Research}, 11(4), 2010.

\bibitem{jin2020learning}
C.~Jin, T.~Jin, H.~Luo, S.~Sra, and T.~Yu.
\newblock Learning adversarial markov decision processes with bandit feedback
  and unknown transition.
\newblock In {\em International Conference on Machine Learning}, pages
  4860--4869. PMLR, 2020.

\bibitem{kazerouni2017conservative}
A.~Kazerouni, M.~Ghavamzadeh, Y.~Abbasi-Yadkori, and B.~Van~Roy.
\newblock Conservative contextual linear bandits.
\newblock In {\em Proceedings of the 31st International Conference on Neural
  Information Processing Systems}, pages 3913--3922, 2017.

\bibitem{koller2018learning}
T.~Koller, F.~Berkenkamp, M.~Turchetta, and A.~Krause.
\newblock Learning-based model predictive control for safe exploration.
\newblock In {\em 2018 IEEE Conference on Decision and Control (CDC)}, pages
  6059--6066. IEEE, 2018.

\bibitem{liang2018accelerated}
Q.~Liang, F.~Que, and E.~Modiano.
\newblock Accelerated primal-dual policy optimization for safe reinforcement
  learning.
\newblock {\em arXiv preprint arXiv:1802.06480}, 2018.

\bibitem{liu2021efficient}
X.~Liu, B.~Li, P.~Shi, and L.~Ying.
\newblock An efficient pessimistic-optimistic algorithm for constrained linear
  bandits.
\newblock {\em arXiv preprint arXiv:2102.05295}, 2021.

\bibitem{moradipari2019safe}
A.~Moradipari, S.~Amani, M.~Alizadeh, and C.~Thrampoulidis.
\newblock Safe linear thompson sampling with side information.
\newblock {\em arXiv preprint arXiv:1911.02156}, 2019.

\bibitem{kumar1982new}
{P. R. Kumar and A. Becker}.
\newblock A new family of optimal adaptive controllers for {M}arkov chains.
\newblock {\em IEEE Transactions on Automatic Control}, 27(1):137--146, 1982.

\bibitem{pacchiano2021stochastic}
A.~Pacchiano, M.~Ghavamzadeh, P.~Bartlett, and H.~Jiang.
\newblock Stochastic bandits with linear constraints.
\newblock In {\em International Conference on Artificial Intelligence and
  Statistics}, pages 2827--2835. PMLR, 2021.

\bibitem{paternain2019safe}
S.~Paternain, M.~Calvo-Fullana, L.~F. Chamon, and A.~Ribeiro.
\newblock Safe policies for reinforcement learning via primal-dual methods.
\newblock {\em arXiv preprint arXiv:1911.09101}, 2019.

\bibitem{qiu2020upper}
S.~Qiu, X.~Wei, Z.~Yang, J.~Ye, and Z.~Wang.
\newblock Upper confidence primal-dual reinforcement learning for cmdp with
  adversarial loss.
\newblock {\em arXiv e-prints}, pages arXiv--2003, 2020.

\bibitem{schulman2015trust}
J.~Schulman, S.~Levine, P.~Abbeel, M.~Jordan, and P.~Moritz.
\newblock Trust region policy optimization.
\newblock In {\em International conference on machine learning}, pages
  1889--1897. PMLR, 2015.

\bibitem{stooke2020responsive}
A.~Stooke, J.~Achiam, and P.~Abbeel.
\newblock Responsive safety in reinforcement learning by pid lagrangian
  methods.
\newblock In {\em International Conference on Machine Learning}, pages
  9133--9143. PMLR, 2020.

\bibitem{sutton2018reinforcement}
R.~S. Sutton and A.~G. Barto.
\newblock {\em Reinforcement learning: An introduction}.
\newblock MIT press, 2018.

\bibitem{tessler2018reward}
C.~Tessler, D.~J. Mankowitz, and S.~Mannor.
\newblock Reward constrained policy optimization.
\newblock In {\em International Conference on Learning Representations}, 2018.

\bibitem{wachi2018safe}
A.~Wachi, Y.~Sui, Y.~Yue, and M.~Ono.
\newblock Safe exploration and optimization of constrained mdps using gaussian
  processes.
\newblock In {\em Proceedings of the AAAI Conference on Artificial
  Intelligence}, volume~32, 2018.

\bibitem{wu2016conservative}
Y.~Wu, R.~Shariff, T.~Lattimore, and C.~Szepesv{\'a}ri.
\newblock Conservative bandits.
\newblock In {\em International Conference on Machine Learning}, pages
  1254--1262. PMLR, 2016.

\bibitem{yang2019projection}
T.-Y. Yang, J.~Rosca, K.~Narasimhan, and P.~J. Ramadge.
\newblock Projection-based constrained policy optimization.
\newblock In {\em International Conference on Learning Representations}, 2019.

\bibitem{zheng2020constrained}
L.~Zheng and L.~Ratliff.
\newblock Constrained upper confidence reinforcement learning.
\newblock In {\em Learning for Dynamics and Control}, pages 620--629. PMLR,
  2020.

\end{thebibliography}
\end{document}